\documentclass{article}

 \usepackage[preprint]{neurips_2026}

\usepackage[utf8]{inputenc} 
\usepackage[T1]{fontenc}    
\usepackage{hyperref}       
\usepackage{url}            
\usepackage{booktabs}       
\usepackage{amsfonts}       
\usepackage{nicefrac}       
\usepackage{microtype}      
\usepackage{xcolor}         
\usepackage{amsmath}
\usepackage{graphicx} 
\usepackage{amssymb}
\usepackage{amsthm}
\usepackage{mathtools}
\usepackage{algorithm}
\usepackage{algpseudocode}
\definecolor{commentgray}{gray}{0.35}
\algrenewcommand\algorithmiccomment[1]{\hfill{\small\color{commentgray}$\triangleright$ #1}}
\usepackage{natbib}
\usepackage{cleveref}
\usepackage{bbm}
\usepackage{enumitem}

\usepackage{subcaption}

\newtheorem{theorem}{Theorem}

\newtheorem{lemma}{Lemma}
\newtheorem{remark}{Remark}
\newtheorem{corollary}{Corollary}

\newtheorem{definition}{Definition}
\definecolor{highprio}{RGB}{180,30,30}
\definecolor{mediumprio}{RGB}{180,100,0}

\DeclareMathOperator{\kl}{kl}

\DeclareMathOperator{\KL}{KL}

\title{The Sample Complexity of Multiple Change Point Identification under Bandit Feedback}

\author{
  Maximilian Graf \textsuperscript{*}\\
  Institut für Mathematik, Universität Potsdam, Potsdam, Germany\\
  \texttt{graf9@uni-potsdam.de} \\
  \And
  Victor Thuot\textsuperscript{*} \\
  INRAE, Mistea, Institut Agro, Univ Montpellier, Montpellier, France \\
  \texttt{victor.thuot@inrae.fr} \\
  \\[0.5em]
  \normalsize $^*$Equal contribution.
}

\begin{document}

\maketitle
\begin{abstract}
We study multiple change point localization under bandit feedback. An unknown piecewise-constant function on a compact interval can be queried sequentially at adaptively chosen inputs, and each query returns a noisy evaluation of the function. The goal is to identify a prescribed number of discontinuities, known as change points, within a target precision $\eta$ and confidence level $1-\delta$, while using as few samples as possible. We propose an adaptive algorithm that first detects intervals likely to contain change points and then refines their locations to precision $\eta$. We establish non-asymptotic upper bounds on its sample budget, together with corresponding lower bounds.
Prior work shows that jump magnitudes alone determine the asymptotic sample complexity as $\delta\to 0$. We reveal that this picture is incomplete beyond this regime. We demonstrate, both empirically and theoretically, that for general $\delta$ and $\eta$, the complexity is jointly governed by the jumps and the relative positions of the change points.
\end{abstract}

\section{Introduction}

Many scientific and engineering systems can be modeled as piecewise-constant functions of a continuous input, and identifying the thresholds at which their behavior changes abruptly is a fundamental task known as change point detection. In materials science, one seeks the control parameters (e.g., temperature and pressure) at which a material undergoes a phase transition~\citep{nikolaev2016autonomy,sebastian2011dielectric}; in queuing and processing systems, one aims to detect abrupt changes in waiting times or throughput~\citep{hung2007modeling}; in machine learning, one studies the sensitivity of black-box models to hyperparameter changes~\citep{lan2009change,hayashi2019active}. In all these settings, evaluations are costly or time-consuming, and the input domain is continuous, so adaptive and sequential sampling strategies are particularly relevant. Moreover, reliable guarantees are required in practice: both the estimation precision and the probability of error must be controlled. This motivates a bandit formulation in which a learner queries the function at adaptively chosen inputs and aims to identify a prescribed number of change points, at a given precision $\eta$ and confidence level $1-\delta$, while minimizing the total number of evaluations. 

In this manuscript, we consider the multiple change point detection problem, as introduced in~\cite{hayashi2019active} and recently studied in~\cite{pmlr-v267-lazzaro25a}. To fix notation before discussing related work (a more detailed presentation is deferred to Section~\ref{sec:setting}), we consider a piecewise-constant function $f:[0,1]\to\mathbb{R}$ with $m$ ($m\geqslant 1$) change points at positions $0<x_1^*<\cdots<x_m^*<1$, so that $f$ is constant on $[x_{l-1}^*,x_l^*)$ for $l\in\{1,\dots,m+1\}$, with $x_0^*=0$ and $x_{m+1}^*=1$. The learner sequentially and adaptively queries points $x_t\in[0,1]$ and observes $y_t=f(x_t)+\varepsilon_t$, where $\varepsilon_t$ is sub-Gaussian noise; in particular, each new query may depend on past observations. Given a target number of change points $N\leqslant m$, a precision $\eta\in(0,1)$, and an error rate $\delta\in(0,1)$, the goal is to output $N$ estimates within distance $\eta$ of true change points with probability at least $1-\delta$. The sample complexity, denoted by $\mathcal{T}$, is the total number of evaluations required to achieve this guarantee.

In~\cite{pmlr-v267-lazzaro25a}, the authors propose \texttt{MCPI}, a method based on the stopping conditions from~\cite{garivier2016optimal}, and prove its asymptotic optimality in expectation. {Their analysis is conducted in a discrete action space with $K$ arms, 
which corresponds to a precision of $\eta \simeq 1/K$ in our continuous 
setting}. {They provide non-asymptotic upper bounds, which contain at least 
a linear\footnote{Additionally, a quadratic dependence on $K$ (namely, $K^2$) appears to be hidden in the non-explicit terms of their Prop.~5.6.} dependence on $K$, while the focus of their analysis lies on the asymptotic regime $\delta\to 0$. Indeed, \texttt{MCPI} identifies $N$ change points 
with expected sample complexity satisfying $\lim_{\delta \to 0} 
\frac{\mathbb{E}[\mathcal{T}]}{\log(1/\delta)} = 8 H_{\mathrm{localize}}^{(N)}$, where
\begin{equation}\label{eq:H_loc}
	H_{\mathrm{localize}}^{(N)}=\displaystyle \sum_{l=1}^N \frac{1}{\Delta_{(l)}^2}\enspace,
\end{equation}
with $\Delta_i=f(x_i^*)-f(x_{i-1}^*)$ for $i\in\{1,\dots,m\}$ denoting the jump at the $i$-th change point, and $|\Delta_{(1)}|\geqslant \cdots \geqslant |\Delta_{(m)}|$ denoting the decreasing rearrangement of jump magnitudes. Thus, the complexity is driven by the $N$ largest jumps in 
the asymptotic regime $\delta \to 0$, while dependence on $K$ disappears in the limit. { The analysis of large-scale problems ($K$ large) remains open.}

For comparison, consider the batch change point detection problem, in which the learner evaluates $f$ once at each point of a fixed grid~\citep{lai2001sequential,yu2020review}. In that setting, the relevant difficulty measure is the energy of each change point, defined as $\mathcal{E}_i^2:=s_i \Delta_i^2$, where $s_i$ is the minimal spacing to neighboring change points (see Equation~\ref{eq:def_energy} in Section~\ref{sec:setting}). One of our contributions is to show that energy also matters in the active setting.  For instance, the guarantees in~\cite{verzelen2023optimal} imply a uniform-sampling detection threshold, and hence a sample-complexity term of order
\begin{equation}\label{def:H_detect}
	H_{\mathrm{detect}}^{(m)}=\max_{i=1,\dots,m} \frac{1}{\mathcal{E}_i^2}, 
\end{equation}
which is always at least as large as $\tfrac{1}{2}H_{\mathrm{localize}}^{(N)}$; it can in fact be much larger. For instance, consider change points with comparable jump size $\Delta$ and local spacing $s$ (with $s\leqslant 1/m$). Then $H_{\mathrm{detect}}^{(m)}\asymp \frac{1}{s\Delta^2}$, whereas $H_{\mathrm{localize}}^{(N)}\asymp \frac{N}{\Delta^2}$. When $s$ is small, detection may dominate the budget, showing that the rate~\eqref{eq:H_loc} can miss an important non-asymptotic effect. This suggests a richer interplay between jump magnitudes and local spacings than what is captured by the asymptotic jump-only rate in~\eqref{eq:H_loc}. 

We consider a continuous action space that avoids the restrictive assumption $\eta<\min_{i=1}^m s_i$ required by discrete methods such as~\cite{pmlr-v267-lazzaro25a}. In the discrete setting, this induces a linear dependence on $1/\eta$, which can become prohibitive when high localization precision is required. Our algorithm adapts to local spacings $s_i$, which is practically significant when change points may be arbitrarily close and the spacing is unknown. By decomposing detection and localization, we show that the dependence on precision $\eta$ is logarithmic, namely $\log(1/\eta)$, which is exponentially better than the linear dependence on the discretization grid $K\simeq 1/\eta$ inherent to discrete approaches. Deriving guarantees that explicitly capture the optimal dependence on both $\eta$ and $\delta$ in all regimes is an open question and a key motivation for our work.

 We illustrate the energy-based complexity in \textbf{Experiment 1} (Figure~\ref{fig:two_cp}) with two change points ($\Delta_1=-\Delta_2=1$), standard noise, and varying spacing $s$. We compare our Algorithm~\ref{alg:lcp} (\texttt{LCP}) to \texttt{MCPI} from~\cite{pmlr-v267-lazzaro25a}.\footnote{In Appendix~\ref{sec:continuousdiscrete} we show how to transform  our setting into a discrete problem, for comparison with the \texttt{MCPI} Algorithm. } The results validate the theory: as $s$ increases, sample complexity of \texttt{LCP} decreases roughly as $1/s$, matching the prediction $H_{\mathrm{detect}}\asymp 1/(s\Delta^2)$. In contrast, \texttt{MCPI} is insensitive to $s$ because its rate is dominated by a constant term proportional to $1/\eta$. Then, adaptation to local spacings is essential, as it allows us to outperform \texttt{MCPI} for moderate values of $s$.
 
\begin{figure}[ht]
\centering
\begin{minipage}{0.41\textwidth}
\includegraphics[width = \textwidth]{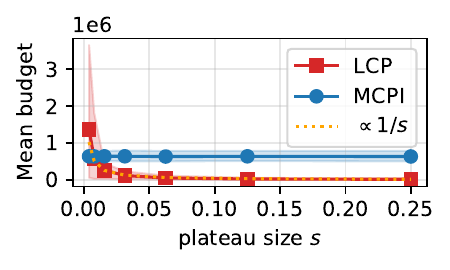}
\end{minipage}
\begin{minipage}{0.48\textwidth}\caption{\textbf{Experiment 1}. We 
consider two change points. We set $x_1^*\sim\mathcal{U}(0,1/2)$ for each Monte-Carlo run and $x_2^*=x_1^*+s$ with $s$ varying and $1000$ Monte Carlo runs. We chose $\Delta_1=-\Delta_2=1$ and compare \texttt{LCP} (Algorithm~\ref{alg:lcp}) with parameters $\eta=2^{-11}$, $\delta = 0.05$ and $\delta_{\mathrm{explore}}=1$ to algorithm \texttt{MCPI}.}\label{fig:two_cp}
\end{minipage}
\end{figure}
\paragraph{Open questions.}  These observations motivate the present work and raise the following fundamental questions, relative to the sample complexity of multiple change point identification:
\begin{itemize}
	\item How does the sample complexity depend on jump magnitudes and the local spacings between change points? When is it governed by the energies $\mathcal{E}_i$ rather than only by the jumps $\Delta_i$? 
	\item What is the optimal achievable dependence on the precision $\eta$ and confidence level $\delta$? How does the problem scale for high precision ($\eta$ small)? 
\end{itemize}
 \paragraph{Contributions.} In this manuscript, we answer these questions.

\begin{enumerate}
	\item We propose \texttt{LocalizeChangePoints} (Algorithm~\ref{alg:lcp}), a new algorithm for multiple change point identification under bandit feedback. It decomposes the task into (i) detecting intervals likely to contain change points and (ii) sequential refinement by binary search. We prove non-asymptotic sample-complexity guarantees that depend on both jump magnitudes and local spacings through $\mathcal{E}_i$ and $\Delta_i$. In particular, Theorem~\ref{thm:cpl} states that the budget $\mathcal{T}$ for localizing $N$ change points among $m$ is bounded, up to logarithmic terms independent of $\delta$ and $\eta$,\footnote{Typically $\underset{\log}{\leqslant}$ hides here at most a $\log(H_{\mathrm{detect}}^{(m)})$ multiplicative factor.} as 
	\begin{align*}
		& \mathbb{E}[\mathcal{T}] \, \underset{\log}{\leqslant}\, H_{\mathrm{detect}}^{(m)} + H_{\mathrm{localize}}^{(N)}\log\left(\frac{1}{\delta\eta}\right), \qquad\text{see~\eqref{eq:H_loc},~\eqref{def:H_detect}} \\
		& \mathcal{T} \, \underset{\log}{\leqslant}\, H_{\mathrm{detect}}^{(m)}\log\left(\frac{1}{\delta}\right) + H_{\mathrm{localize}}^{(N)}\log\left(\frac{1}{\delta\eta}\right)\mathrlap{\qquad\text{w.h.p. }1-\delta.}
	\end{align*}
	\item Additionally, we derive information-theoretic lower bounds (Theorem~\ref{thm:main_LB}). On a local-minimax class of instances,  we prove that the bounds above are rate-optimal in all parameters, including $\delta$ and $\eta$. This shows that the asymptotic rate in~\eqref{eq:H_loc} is overly optimistic.
	\item Finally, we validate our results numerically on synthetic data.
\end{enumerate}
\paragraph{Batch change point detection.}
Change point detection is one of the classical problems in statistics, with a rich literature spanning cumulative-sum (CUSUM) procedures~\citep{hinkley1970inference}, binary segmentation~\citep{scott1974cluster}, wild binary segmentation~\citep{fryzlewicz2014wild}, and penalized least-squares methods~\citep{wang2020univariate,verzelen2023optimal}; see~\cite{lai2001sequential,yu2020review,truong2020selective} for comprehensive reviews. Our algorithm is directly inspired by two classical ideas from this literature: multiscale testing, to identify candidate intervals likely to contain a change point~\citep{kovacs2023seeded,pilliat2023optimal}, and binary search, to localize it to precision $\eta$. It is also well established that the relevant difficulty measure is the energy $\mathcal{E}_i^2 = s_i\Delta_i^2$ of each change point~\citep{yu2020review,verzelen2023optimal}, an observation that we extend to the bandit setting. The problem has also been studied in the multivariate setting~\citep{wang2018high} and in kernel-based frameworks~\citep{arlot2019kernel}.
\paragraph{Change point detection in piecewise-stationary bandits.}
A related but distinct stream of literature studies piecewise-stationary bandits~\cite{garivier2011upper}, where the reward distributions of a multi-armed bandit change at unknown time steps. Two main tasks are considered: quickest detection, which aims to raise an alarm as soon as possible after a change occurs~\citep{veeravalli2024quickest,zhang2023bandit,xu2023asymptotic}, and regret minimization, where the goal is to adapt to the new environment in order to minimize cumulative regret~\citep{pmlr-v89-cao19a,besson2022efficient,mukherjee2022safety}. Our setting differs from this line of work in two fundamental respects: our model is stationary, and the change points are \emph{spatial} features of a fixed function $f$ rather than temporal.
\paragraph{Active change point detection.}
Active and sequential change point detection studies settings in which the sampling locations are chosen adaptively rather than fixed in advance. Early foundations for this perspective appear in adaptive change-point estimation~\citep{hall2003sequential,lan2009change}, where multistage sampling is used to improve localization rates. In the bandit setting, \cite{hayashi2019active} formulates active change point detection explicitly, and recent fixed-budget and fixed-confidence analyses for piecewise-constant bandits~\cite{lazzaro2025fixedbudget,pmlr-v267-lazzaro25a} provide instance-dependent guarantees based on jump magnitudes. In particular, \cite{pmlr-v267-lazzaro25a} combines this formulation with Track-and-Stop ideas from~\cite{garivier2016optimal} and proves asymptotic rates proportional to $\sum_{l=1}^{N}1/\Delta_{(l)}^2$. Our work complements this line by focusing on non-asymptotic guarantees and by highlighting the additional role of spacing between CP.
We highlight the strong connection to other structured bandit problems, such as monotone bandits \citep{cheshire2020influence} or clustering with bandit \citep{yang2024optimal,graf2025clustering}. 
\paragraph{Outline.} The rest of the paper is organized as follows. In Section~\ref{sec:setting}, we introduce the formal setting and notation. In Section~\ref{sec:algorithm}, we present our algorithm and its guarantees. In Section~\ref{sec:lower_bounds}, we derive information-theoretic lower bounds. In Section~\ref{sec:experiments}, we illustrate our results numerically on synthetic data. Finally, in Section~\ref{sec:discussion}, we discuss the implications of our results, possible extensions, and limitations. All proofs are deferred to the appendix.

\section{Problem formulation and notation}\label{sec:setting}

We consider an unknown piecewise-constant function $f:[0,1]\to\mathbb{R}$ with $m\geqslant 1$ change points at positions $0<x_1^*<\cdots<x_m^*<1$. The change points are the discontinuities of $f$, so $f$ is constant on each interval $[x_{l-1}^*,x_l^*)$ for $l\in\{1,\dots,m+1\}$, with the conventions $x_0^*=0$ and $x_{m+1}^*=1$. Both $f$ and the true number of change points $m$ are unknown to the learner.

We parametrize $f$ as
\begin{equation}\label{eq:model}
	f(x)=\mu_0+\sum_{i=1}^m \Delta_i \mathbbm{1}_{x\geq x_i^*}~,
\end{equation}
where $\mu_0\in\mathbb{R}$ is the baseline level and $\Delta_i=f(x_i^*)-f(x_{i-1}^*)$ for $i\in\{1,\dots,m\}$ is the jump at the $i$-th change point. For any compact interval $I=[l(I),r(I)]$, we introduce the notation $\Delta(I) = f(r(I)) - f(l(I))$ for the jump magnitude across the interval $I$. We assume $\Delta_i\neq 0$ for all $i$, so every change point is identifiable. Let $|\Delta_{(1)}|\geqslant\cdots\geqslant|\Delta_{(m)}|$ denote the decreasing ordering of jump magnitudes. These jumps govern asymptotic sample-complexity rates in~\cite{pmlr-v267-lazzaro25a}, through the complexity $H_{\mathrm{localize}}^{(N)}=\sum_{i=1}^N 1/\Delta_{(i)}^2$ (Equation~\ref{eq:H_loc}).
Additionally, we assume that the jumps are bounded by $1$, i.e., $|\Delta_i|\leq 1$ for all $i\in\{1,\dots,m\}$. This assumption is without loss of generality, as one can always rescale the problem by a known bound on the jumps. It is made for clarity of presentation and to avoid unnecessary constants in the bounds, but it can easily be removed.

Spacing between change points also plays a central role.  For $i\in\{1,\dots,m-1\}$, let $\vartheta_i=x_{i+1}^*-x_{i}^*$ denote the spacing after the $i$-th change point. By convention, define $\vartheta_0=\vartheta_m=1$. We then define the local spacing by $s_i=\vartheta_{i-1}\wedge\vartheta_i$ for $i=1,\dots,m$,
and the associated energies by
\begin{equation}\label{eq:def_energy}
	\mathcal{E}_i^2=s_i\Delta_i^2\qquad\text{for } i=1,\dots,m.
\end{equation}
The quantity $s_i$ plays the role of sparsity level, and the quantity $\mathcal{E}_i$ captures the effect of local spacing on the difficulty of detecting change point $i$, in line with the batch literature~\cite{yu2020review}. The interplay between these quantities is captured in the complexity term $H_{\mathrm{detect}}^{(m)}=\max_{i=1,\dots,m} 1/\mathcal{E}_i^2$ (Equation~\ref{def:H_detect}). Observe that for $m=1$, one has $H_{\mathrm{detect}}=H_{\mathrm{localize}}$.

We work in a bandit setting: at each round $t\geq 1$, the learner chooses $x_t\in[0,1]$ and observes
\begin{equation*}
    y_t=f(x_t)+\varepsilon_t \ ,
\end{equation*}
where $(\varepsilon_t)_{t\geq 1}$ are i.i.d. 1-subGaussian random variables, i.e., $\mathbb{E}[\varepsilon_t]=0$ and $\mathbb{E}[\exp(\lambda\varepsilon_t)]\leq\exp(\lambda^2/2)$ for all $\lambda\in\mathbb{R}$. This includes, for instance, Gaussian and bounded noise.\footnote{The analysis extends to $\sigma$-subGaussian noise by a standard rescaling.} The pair $(f,\varepsilon)$ defines an environment, denoted by $\nu$.

Sampling is sequential and adaptive: each query $x_t$ may depend on past observations. An algorithm $\pi$ consists of three components: (i) a sampling rule for choosing next query, (ii) a stopping rule, and (iii) a decision rule that outputs estimated change points. These components may be randomized. The (random) sample complexity of $\pi$, denoted by $\mathcal{T}_{\pi}$, is the total number of evaluations of $f$. We write $\mathbb{P}_{\pi,\nu}$ and $\mathbb{E}_{\pi,\nu}$ for probability and expectation under environment $\nu$ and algorithm $\pi$.

The learner is given a precision $\eta\in(0,1)$, a confidence parameter $\delta\in(0,1)$, and a target number of change points $N\in\{1,\dots,m\}$. The objective is to identify $N$ distinct change points within distance $\eta$ of true change points with probability at least $1-\delta$, while minimizing the sample complexity. 
Note that the learner can select any of the $m$ change points, as long as it localizes $N$ distinct change points. 

\begin{definition}[$({\delta},{\eta},{N})$-correctness]\label{def:correctness}
An algorithm $\pi$ is $(\delta,\eta,N)$-correct if, for every environment $\nu$ with at least $N$ change points,
\begin{equation}\label{eq:correctness}
	\mathbb{P}_{\pi,\nu}\left(\exists 1\leqslant i_1<\dots<i_N\leqslant m:\ \forall l \in [N], |\hat{x}_l-x_{i_l}^*|\leqslant \eta\right)\geq 1-\delta\enspace ,
\end{equation}
where $\hat{x}_1,\dots,\hat{x}_N$ are the estimates returned by $\pi$.
\end{definition}

\section{Algorithmic method and guarantees}\label{sec:algorithm}

\subsection{Detection of Change Points}\label{sec:detection}

We describe the detection subroutine at the core of our method. Given a confidence level $\delta$ and a budget $T$, \texttt{DetectIntervals} (Algorithm~\ref{alg:di}) returns a set $\mathcal{I}$ of intervals with disjoint interiors. Using multiscale endpoint tests, the procedure guarantees that every returned interval contains at least one change point, with failure probability at most $\delta$ (Lemma~\ref{lem:upperbounddetection}). 

\textbf{Procedure.} For each depth $d\in\{1,\dots,d_{\max}\}$ (with $d_{\max}$ defined in Line~\ref{line:di:init}), the domain is partitioned into $n_d=2^d$ dyadic intervals of equal length. Each endpoint is sampled $T_d$
times (Line~\ref{line:di:budget-threshold}). Interval $[i/2^d,(i+1)/2^d]$ is flagged when the empirical jump between its endpoints exceeds the Hoeffding threshold $\beta_d$  (Line~\ref{line:di:test}), which accounts for the multiple testing across depths, intervals, and endpoints.\footnote{If $T_d=0$, $\beta_d$ is set to $+\infty$ by convention.}
Flagged intervals are aggregated across depths; if a newly flagged interval is strictly contained in an existing one, the larger one is removed (Line~\ref{line:di:new-interval}). This pruning keeps $\mathcal{I}$ disjoint while favoring the finest detected intervals.

\begin{algorithm}[ht]
		\caption{\texttt{DetectIntervals}}\label{alg:di}
		\begin{algorithmic}[1]
			\Require $\delta$ confidence parameter, $T$ budget
			\Ensure  $\mathcal{I}$ set of disjoint intervals
			\State $\mathcal{I}\gets\emptyset$, $d_{\max}\gets \lfloor\log_2(T/\log(1/\delta))\rfloor $ 
			\label{line:di:init}
			\For{$d=1,\dots,d_{\max}$} \Comment{Multiple-scale loop}
			\label{line:di:depth-loop}
            \State $n_d\gets 2^d$, $T_d\gets\left\lfloor\frac{ T}{d_{\max}(n_d+1)}\right\rfloor$, $\beta_d\gets\sqrt{\frac{8}{T_d}\log\left(\frac{2d_{\max}(n_d+1)}{\delta}\right)}$ 
			\label{line:di:budget-threshold}
			\For{$i=0,1,\dots,n_d$} 
			\label{line:di:endpoint-loop}
			\State sample $T_d$ times from $x_i^{(d)}=i/n_d$, store means as $\hat y_i^{(d)}$\Comment{Sample uniformly at endpoints}\label{line:di:sample-endpoints}
			\EndFor
			\For{$i=1,\dots,n_d$}
			\label{line:di:adjacent-loop}
			\If{$|\hat y_i^{(d)}-\hat y_{i-1}^{(d)}|>\beta_d$} \Comment{Test for presence of jump between adjacent endpoints}
			\label{line:di:test}
            \State remove from $\mathcal{I}$ any set containing $[x_{i-1}^{(d)},x_i^{(d)}]$ \Comment{Pruning step}
			\State add $[x_{i-1}^{(d)},x_i^{(d)}]$ to $\mathcal{I}$, \Comment{Update $\mathcal{I}$}\label{line:di:new-interval}
			\EndIf
			\EndFor	
			\EndFor		
		\end{algorithmic}
	\end{algorithm}

\textbf{Guarantees and energy scaling.} The threshold $\beta_d$ in Algorithm~\ref{alg:di} is set so that, with probability at least $1-\delta$, every flagged interval $I$ satisfies $\Delta(I)\neq 0$ and therefore contains at least one change point. Moreover, Lemma~\ref{lem:upperbounddetection} shows that if $T\gtrsim\mathcal{E}_i^{-2}\log(1/\delta)$ (up to logarithmic factors in $\mathcal{E}_i^{-2}$), then change point $i$ is detected with probability at least $1-\delta$. For intuition, consider for change point $i$ its \emph{natural depth} $d_i^\star\asymp\log_2(1/s_i)$: at this depth, one dyadic interval isolates $x_i^*$ and has left-right jump $|\Delta_i|$. Detection requires $|\Delta_i|\gtrsim\beta_{d_i^\star}$, i.e., $T_{d_i^\star}\gtrsim\Delta_i^{-2}\log(1/\delta)$. Since $T_{d_i^\star}\asymp s_iT$ up to logarithmic factors, this yields $T\gtrsim\mathcal{E}_i^{-2}\log(1/\delta)$, which explains the $H_{\mathrm{detect}}^{(m)}$ term in the upper bound. Precise statements are given in Lemma~\ref{lem:upperbounddetection} (Appendix).
\subsection{Main algorithm and guarantees}

Algorithm~\ref{alg:lcp} takes as input the target number of change points $N$, the confidence level $\delta$, and the precision $\eta$. In addition, it uses a tuning parameter $\delta_{\mathrm{explore}}$, which controls the probability of failure of the exploration phase and, therefore, the budget. No other problem-dependent parameter is required.

The algorithm follows a doubling schedule. At stage $k$ (initialized in Line~\ref{lin:initk}), it is given budget $T_k=2^{k+2}$ and runs four steps that mimic the oracle strategy described above. If, at the end of the stage, all $N$ candidates are certified with error probability at most $\delta^{(k)}$, the algorithm returns the estimates $\mathcal{C}=\{c_1^{(k)},\dots,c_N^{(k)}\}$ (Line~\ref{lin:returncp}). Otherwise, it moves to stage $k+1$, doubles the budget, and repeats (Line~\ref{lin:nextk}). 
Before stating the main guarantees of Algorithm~\ref{alg:lcp}, we describe each step.  

At a generic stage with budget $T_k$, the four steps are the following. 

\noindent \textbf{(i) Detection.}
In Line~\ref{lin:stepdetection}, \texttt{DetectIntervals} (Algorithm~\ref{alg:di}) constructs a set $\mathcal{I}$ of disjoint intervals, each certified to contain at least one change point. This step is run with confidence level $\delta_{\mathrm{explore}}/4$ and is described in Subsection~\ref{sec:detection}.

\noindent \textbf{(ii) Jump estimation.} 
In Line~\ref{lin:stepmultiplejumps}, \texttt{EstimateJumps} (Algorithm~\ref{alg:emj}) estimates the jump magnitudes on the candidate intervals and keeps the $N$ most informative ones. The procedure is designed so that, with high probability, these estimates are accurate up to a constant multiplicative factor. It returns $N$ intervals $\mathcal{J}^{(k)}=\{J_1^{(k)},\dots,J_N^{(k)}\}$ together with the corresponding jump estimates $\mathcal{G}^{(k)}=\{\hat \Delta_1^{(k)},\dots,\hat \Delta_N^{(k)}\}$, where $\hat\Delta_v^{(k)}$ estimates $|\Delta(J_v^{(k)})|$. The intervals are ordered by decreasing estimated jump, so that $\hat\Delta_1^{(k)}\geqslant \hat\Delta_2^{(k)}\geqslant \dots \geqslant \hat\Delta_N^{(k)}$.
\texttt{EstimateJumps} is a standard adaptive estimation routine. We defer its description to Appendix~\ref{alg:emj}; its guarantees are stated in Lemma~\ref{lem:multiplejumps}.

\noindent \textbf{(iii) Refinement of localization.} 
We use \texttt{SHB} (\emph{Sequential Halving with Backtracking}) from~\cite{lazzaro2025fixedbudget}, which localizes a single change point up to precision $\eta$ under a fixed budget constraint. The method consists of a binary-search procedure, with a backtracking mechanism as in~\cite{cheshire2020influence}. In particular, when applied to an interval containing a single change point of jump magnitude $\Delta$, a budget of order $\Delta^{-2}\log(1/(\delta\eta))$ suffices to return an estimate within distance $\eta$ of the true change point with probability at least $1-\delta$---\cite{lazzaro2025fixedbudget}. For completeness, we restate these guarantees in Lemma~\ref{alg:shb}, and the routine itself in Algorithm~\ref{alg:shb}.

In Line~\ref{lin:stepbinsearch}, we apply \texttt{SHB} independently to each selected interval. The budget is allocated proportionally to the inverse squared estimated jump, so interval $J_v^{(k)}$ receives budget $T_v^{(k)}$ with
    \begin{equation}\label{def:alpha_v}
    	T_v^{(k)}=\left\lfloor\alpha_{v}^{(k)}\cdot 2^k\right\rfloor\vee 1\ , \text{with} \qquad \alpha_{v}^{(k)}=\frac{\big(\hat \Delta_v^{(k)}\big)^{-2}}{\sum_{v'=1}^N \big(\hat \Delta_{v'}^{(k)}\big)^{-2}} \enspace .
    \end{equation}
\textbf{(iv) Verification.} 
In Line~\ref{lin:stepverify}, \texttt{VerifyCP} (Algorithm~\ref{alg:vcp}) tests whether the candidates $(c_v^{(k)})_{v\in[N]}$ lie within distance $\eta$ of $N$ distinct change points. For each $v$, it performs a two-sample test at the points
\begin{equation}\label{def:l_v_r_v}
l_v^{(k)}:=\max\big(\min J_v^{(k)},c_v^{(k)}-\eta\big),
\qquad
r_v^{(k)}:=\min\big(\max J_v^{(k)},c_v^{(k)}+\eta\big),
\end{equation}
using confidence level $\delta^{(k)}= \frac{3\delta}{2\pi^2Nk^2}$. It draws $T_v^{(k)}/2$ samples at each endpoint and compares the empirical jump to the threshold $\sqrt{\frac{32}{T_v^{(k)}}\log\left(\frac{2}{\delta^{(k)}}\right)}$. If the test is positive, then $c_v^{(k)}$ is certified to be within distance $\eta$ of a change point with probability at least $1-\delta^{(k)}$. The routine is given in Algorithm~\ref{alg:vcp}, and its guarantees are stated in Lemma~\ref{lem:vcp}.

\begin{theorem}\label{thm:cpl}

Let $N\leqslant m$, $0<\delta<1/4$, $0<\eta <1/4$, and $\delta_{\mathrm{explore}}\leqslant 1/4$. Consider any environment $\nu$ with $m\geqslant N$ change points, and run Algorithm~\ref{alg:lcp} with input parameters $(N,\delta,\eta,\delta_{\mathrm{explore}})$.

\begin{enumerate}
        \item (Correctness) Algorithm~\ref{alg:lcp} is $(\delta,\eta,N)$-correct (see~\eqref{eq:correctness}). With probability at least $1-\delta$, Algorithm~\ref{alg:lcp} returns $N$ points $0\leq c_1<c_2<\dots<c_N\leq 1$ such that $|c_i-x_{l_i}^*|\leq \eta$ for $i=1,\dots,N$, and $1\leq l_1<l_2<\dots<l_N\leq m$.
        \item (High-probability bound) If we set $\delta_{\mathrm{explore}}=\delta$, then there exists an event $\xi$ of probability at least $1-\delta$ on which the output is correct (Definition~\ref{eq:correctness}), and the budget $\mathcal T$ satisfies
		\begin{align*}
			\mathcal{T}&\leq c \cdot\left( \omega_1 H_{\mathrm{detect}}^{(m)}\left(\log\left(\frac{1}{\delta}\right)+\omega_2\right)+H_{\mathrm{localize}}^{(N)}\left(\log\left(\frac{1}{\delta\eta}\right)+\omega_3\right)\right).
		\end{align*}
		\item (Expectation bound) If we set $\delta_{\mathrm{explore}}=1/4$, then the expected budget $\mathbb{E}[\mathcal T]$ is bounded by
		\begin{align*}
			\mathbb{E}[\mathcal{T}]&\leq c \cdot\left( \omega_1 H_{\mathrm{detect}}^{(m)}\omega_2+H_{\mathrm{localize}}^{(N)}\left(\log\left(\frac{1}{\delta\eta}\right)+\omega_3\right)\right).
		\end{align*}
	\end{enumerate}	
In these bounds, $c>0$ is a numerical constant, and $\omega_1,\omega_2,\omega_3$ are logarithmic terms defined as $\omega_1=\log(H_{\mathrm{detect}}^{(m)})$, $\omega_2=\max_{i=1}^m \left\{\log\left(\log\left(\mathcal{E}_i^{-2}\right)\vee e\right)+\log\left(1/s_i\right)\right\}$, and $\omega_3=\log\left(N\cdot\left(\log\left(H_{\mathrm{localize}}^{(N)}\right)\vee 1\right)\right)$. See~\eqref{def:H_detect},~\eqref{eq:H_loc} for definitions of $H_{\mathrm{detect}}^{(m)}$ and $H_{\mathrm{localize}}^{(N)}$.
\end{theorem}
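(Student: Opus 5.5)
The proof splits into three parts, matching the three claims. Each stage combines the guarantees of the four subroutines: Lemma~\ref{lem:upperbounddetection} for \texttt{DetectIntervals}, Lemma~\ref{lem:multiplejumps} for \texttt{EstimateJumps}, the \texttt{SHB} guarantee of~\cite{lazzaro2025fixedbudget}, and Lemma~\ref{lem:vcp} for \texttt{VerifyCP}.

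\emph{Correctness.} The algorithm returns only when \texttt{VerifyCP} certifies all $N$ candidates at some stage $k$. A positive test at $[l_v^{(k)},r_v^{(k)}]$ means $\Delta([l_v^{(k)},r_v^{(k)}])\neq 0$, so this interval, of length at most $2\eta$ around $c_v^{(k)}$, contains a change point, up to failure probability $\delta^{(k)}$. A union bound over $v\in[N]$ and $k\geq 1$ gives total failure probability at most
\[
\sum_k N\delta^{(k)}=\frac{3\delta}{2\pi^2}\sum_k k^{-2}=\frac{\delta}{4}\le\delta .
\]
Distinctness of the located change points comes from the intervals $J_v^{(k)}$ having disjoint interiors (output of \texttt{DetectIntervals}) and from $[l_v^{(k)},r_v^{(k)}]\subset J_v^{(k)}$. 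The one edge case is a change point sitting on a shared endpoint; I would handle it by a careful convention, for instance a half-open assignment, or by the fact that a certified jump inside $J_v$ pins a change point in the interior or at the boundary of $J_v$. The ordering $c_1<\dots<c_N$ then follows by sorting. Correctness holds regardless of $\delta_{\mathrm{explore}}$, because the detection guarantees only affect the budget.

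\emph{High-probability bound.} I would define a good event $\xi$ on which the following hold.
\begin{itemize}
\item[(a)] At every stage, every flagged interval contains a change point.
\item[(b)] Once $T_k\ge k_1$, with $k_1\asymp \omega_1 H_{\mathrm{detect}}^{(m)}(\log(1/\delta)+\omega_2)$, every change point $i$ is isolated in its own detected interval at its natural depth. The threshold comes from Lemma~\ref{lem:upperbounddetection} applied with the maximum over $i$ of $\mathcal E_i^{-2}$; the factor $\omega_1$ comes from $d_{\max}\asymp\log T$ and the $\log(1/s_i)$ inside $\omega_2$ comes from the multiple-testing term $\log(2^d)$.
\item[(c)] \texttt{EstimateJumps} returns constant-factor estimates, so that the $N$ chosen intervals have jumps at least of order $|\Delta_{(v)}|$.
\item[(d)] \texttt{SHB} succeeds on each $J_v$ as soon as $T_v^{(k)}\gtrsim |\Delta(J_v)|^{-2}\log(1/(\delta^{(k)}\eta))$. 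By the allocation~\eqref{def:alpha_v}, this holds once $2^k\gtrsim H_{\mathrm{localize}}^{(N)}\log(Nk^2/(\delta\eta))$.
\item[(e)] \texttt{VerifyCP} accepts correct candidates when $T_v^{(k)}\gtrsim\Delta^{-2}\log(1/\delta^{(k)})$. For this I would use that on $\xi$ the interval $[l_v,r_v]$ carries the full jump $\Delta(J_v)$ if $J_v$ isolates one change point.
\end{itemize}
So the algorithm stops at the first $k$ with $2^k$ exceeding both thresholds. Since the budget doubles, the total cost is bounded by a constant times the final $T_k$. Solving $2^k\gtrsim H\log(k^2N/(\delta\eta))$ with $k\approx\log H$ produces $\omega_3$. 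Taking $\delta_{\mathrm{explore}}=\delta$ makes $\mathbb P(\xi^c)\le\delta$.

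\emph{Expectation bound.} With $\delta_{\mathrm{explore}}=1/4$, detection at stage $k$ beyond $k_1$ (now without the $\log(1/\delta)$ term) fails with probability at most a constant below $1$, roughly $1/2$ after including the other steps. The stages beyond $k^\star$ fail independently because fresh samples are drawn each stage. The expected cost is therefore
\[
\sum_{j\ge0}2^{k^\star+j}\,p^{j},
\]
which converges as long as $p<1/2$. This is the main obstacle: the per-stage failure probability must be strictly below $1/2$ uniformly in $k\ge k^\star$, including the failure of \texttt{SHB} and \texttt{VerifyCP}, whose confidence $\delta^{(k)}$ shrinks with $k$. To get it, I would use one extra constant-factor slack in $k^\star$, so that the per-stage error probability is at most $1/4+o(1)$. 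Wrongly accepting a bad candidate is already covered by correctness. A mis-stop only shortens the run, so it cannot increase the budget.
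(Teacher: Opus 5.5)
Your overall route is the paper's: correctness from the union bound $\sum_k N\delta^{(k)}=\delta/4$ over \texttt{VerifyCP}, per-stage sufficient conditions from the four subroutine lemmas, one good stage plus the doubling sum, and a geometric tail for the expectation. Two steps fail or are missing as written.

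\emph{The good event in part~2 is too strong.} You ask for the exploration guarantees at \emph{every} stage: (a), and (b) ``once $T_k\ge k_1$''. \texttt{DetectIntervals} and \texttt{EstimateJumps} are rerun at each stage with the fixed level $\delta_{\mathrm{explore}}/4=\delta/4$. A union over the stages up to $k^\star$ therefore only gives a bound of order $k^\star\delta/4$, and $\mathbb{P}(\xi^c)\le\delta$ does not follow. You never need those events outside stage $k^\star$. An earlier stage either does not stop, or stops with an output certified by the verification events, and stopping early only lowers the budget. The paper accordingly takes $\xi=\xi_{\mathrm{detect}}^{(k_*)}\cap\xi_{\mathrm{estimate}}^{(k_*)}\cap\xi_{\mathrm{localize}}^{(k_*)}\cap\bigcap_{k'\le k_*}\xi_{\mathrm{verify}}^{(k')}$. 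Only the verification levels $\delta^{(k)}\propto k^{-2}$ are summed over stages, so the failure probability is at most $3\delta/4+\delta/4$.

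\emph{You never budget \texttt{EstimateJumps}.} Lemma~\ref{lem:multiplejumps} requires $2^k\gtrsim\sum_{v\le N}\Delta_{(v)}^{-2}\log(\cdot)+(m-N)\Delta_{(N)}^{-2}\log(\cdot)$, because all $m$ detected intervals keep being sampled until $N$ are accepted. This is not controlled by $H_{\mathrm{localize}}^{(N)}$. With $N=1$ and $m$ unit jumps at spacing about $1/m$, it is of order $m$ while $H_{\mathrm{localize}}^{(1)}=1$. So your ``both thresholds'' stopping condition misses a third requirement. The paper absorbs it into the detection term using $(m-N)\Delta_{(N)}^{-2}\le\sum_{i>N}\Delta_{(i)}^{-2}$ and $\sum_{i=1}^m\Delta_i^{-2}=\sum_i s_i\mathcal{E}_i^{-2}\le H_{\mathrm{detect}}^{(m)}\sum_i s_i\le 2H_{\mathrm{detect}}^{(m)}$.

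Finally, the ``main obstacle'' you flag in part~3 is not one. The per-stage failure probability is fixed by the confidence levels; in the paper's accounting it is at most $\tfrac34\delta_{\mathrm{explore}}+\tfrac{\delta}{4}\le\tfrac14$, with no extra slack needed. A shrinking $\delta^{(k)}$ only raises the required budget by $O(\log k)$, which the doubling $2^k$ absorbs. So once the stage conditions hold at $k^\star$, they hold for all $k\ge k^\star$.
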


\begin{algorithm}[ht]
	\caption{\texttt{LocalizeChangePoints}} \label{alg:lcp}
	\begin{algorithmic}[1]
		\Require $N$ number of change points, $\delta$ confidence parameter for correctness, $\eta$ precision parameter, $\delta_{\mathrm{explore}}$ confidence parameter for budget control
		\Ensure  $\mathcal{C}$ estimated change points 
		\State $k\gets \lceil \log_2(2N)\rceil$ \Comment{Initialize doubling schedule}
		\label{lin:initk}
		\While{\texttt{true}} 
		\State $\mathcal{I}^{(k)}\gets\texttt{DetectIntervals}\left(\frac{\delta_{\mathrm{explore}}}{4},2^k\right)$ \Comment{\emph{(i) Detect candidate intervals}}
		\label{lin:stepdetection}
		\If{$|\mathcal{I}^{(k)}|\geq N$} 
		\State $\left(\mathcal{J}^{(k)},\mathcal{G}^{(k)}\right)\gets \texttt{EstimateJumps}\left(\mathcal{I}^{(k)},\frac{\delta_{\mathrm{explore}}}{4},2^k,N\right)$ \Comment{\emph{(ii) Estimate jump magnitudes}}
		\label{lin:stepmultiplejumps}
		\If{$|\mathcal{J}^{(k)}|\geq N$} \Comment{If $N$ intervals detected and estimated}
        \State Order $\mathcal{J}^{(k)}=(J_1^{(k)},\dots,J_N^{(k)} )$ according to estimate jump from $\mathcal{G}^{(k)}$
		\For{$v=1,\dots,N$}
		\State $c_{v}^{(k)}\gets \texttt{SHB}\left(J^{(k)}_{v}, T_v^{(k)},\eta\right)$ \Comment{\emph{(iii) Refine location to precision $\eta$; for $T_v^{(k)}$ see~\eqref{def:alpha_v}}}
		\label{lin:stepbinsearch}
        \State  $\mathrm{ok}_v^{(k)}\gets \texttt{VerifyCP}\left(l_{v}^{(k)},r_v^{(k)},\delta^{(k)},T_v^{(k)}\right)$ \Comment{\emph{(iv) Verify estimate; for $l_v,r_v$ see~\eqref{def:l_v_r_v}}}\label{lin:stepverify}
		\EndFor
		\If{$\forall v\in[N]$, $\mathrm{ok}_v^{(k)}$} \texttt{is true}  \Comment{If all change points verified}
		\State \Return $\mathcal{C}\gets\left\{c_{1}^{(k)},\dots, c_{N}^{(k)}\right\}$\Comment{Stop and output estimates}\label{lin:returncp}
		\EndIf
		\EndIf
		\EndIf
		\State $k\gets k+1$, and $\delta^{(k)}\gets \frac{3\delta}{2\pi^2Nk^2}$\Comment{Double budget and retry}
		\label{lin:nextk}
		\EndWhile
	\end{algorithmic}
\end{algorithm}

 \paragraph{Intuition.}
 To interpret $H_{\mathrm{localize}}^{(N)} = \sum_{i=1}^N \Delta_{(i)}^{-2}$, 
consider an oracle that knows the true jumps $(\Delta_i)_{i=1}^m$ and 
the midpoints $c^*_i = (x_{i-1}^* + x_i^*)/2$. This yields $N$ disjoint 
intervals, each containing one target change point. Running $N$ 
parallel binary searches with budget proportional to $\Delta_{(i)}^{-2}$ 
then incurs a total cost of order $H_{\mathrm{localize}}^{(N)} 
\log(1/(\delta\eta))$ -- the cost of pure localization. The main 
challenge is that these intervals are unknown. Our key message is that 
they can be acquired adaptively, at an additional cost: detecting that 
a region contains a change point requires $\mathcal{E}_i^{-2} 
\log(1/\delta)$ samples rather than $\Delta_i^{-2}$. The total 
complexity therefore splits into a detection cost $H_{\mathrm{detect}}^{(m)}$ 
and a localization cost $H_{\mathrm{localize}}^{(N)} \log(1/(\delta\eta))$. 
We provide here proof sketch, complete proofs are given in Section~\ref{appendix:UB}.
  
 \noindent \textbf{Sketch of proof.}
    By step (iv), each stage-$k$ acceptance via \texttt{VerifyCP} fails with probability at most $\delta^{(k)}$ (Lemma~\ref{lem:vcp}), so the union bound $\sum_k N\delta^{(k)}\le\delta$ implies $(\delta,\eta,N)$-correctness. \\
We now sketch the budget bound by following the four-step procedure at stage $k$. \\
\emph{(i) Detection.} By Lemma~\ref{lem:upperbounddetection}, if
$
T_k\gtrsim \omega_1 H_{\mathrm{detect}}^{(m)}\big(\log(1/\delta_{\mathrm{explore}})+\omega_2\big),
$
then with probability at least $1-\delta_{\mathrm{explore}}/4$, \texttt{DetectIntervals} returns $m$ disjoint intervals that isolate each of the $m$ CP. \\
\emph{(ii) Jump estimation.} On the same event, and with the same budget, Lemma~\ref{lem:multiplejumps} implies that \texttt{EstimateJumps} returns $N$ intervals and constant-factor accurate jump estimates. Therefore the allocation weights $\alpha_v^{(k)}$ are within constants of the oracle proportions $\Delta_{(v)}^{-2}/H_{\mathrm{localize}}^{(N)}$. \\
\emph{(iii) Refinement.} Given these allocations, Lemma~\ref{lem:shb} (from~\cite{lazzaro2025fixedbudget}) yields that the $v$-th change point is localized to accuracy $\eta$ once it receives budget of order $\Delta_{(v)}^{-2}\log(1/(\delta^{(k)}\eta))$. Summing over $v\in[N]$ gives a sufficient condition for the refinement step to succeed with probability at least $1-\delta^{(k)}$:
$T_k\gtrsim H_{\mathrm{localize}}^{(N)}\big(\log(1/(\delta^{(k)}\eta))+\omega_3\big)$. \\
\emph{(iv) Verification.} By Lemma~\ref{lem:vcp}, when $[l_v^{(k)}, r_v^{(k)}]$ contains the $v$-th change point, with jump magnitude $\Delta_{(v)}$, as long as $T_v^{(k)}\gtrsim \Delta_{(v)}^{-2}\log(1/(\delta^{(k)}\eta))$, \texttt{VerifyCP} returns \texttt{True} with high probability. Thus, once (i)--(iii) hold, the stage is successful with high probability, and the algorithm stops. \\
Combining (i)--(iv), a sufficient stage condition, implying both bounds is  \begin{equation*}
T_k\gtrsim \omega_1 H_{\mathrm{detect}}^{(m)}\big(\log(1/\delta_{\mathrm{explore}})+\omega_2\big)
+H_{\mathrm{localize}}^{(N)}\big(\log(1/(\delta^{(k)}\eta))+\omega_3\big)\enspace. \end{equation*} 

\section{Lower Bound}\label{sec:lower_bounds}

In this section, we derive information-theoretic lower bounds for the budget $\mathcal{T}$ of any $(\delta,\eta,m)$-correct algorithm $\pi$ for the active change point detection problem in the case where $N=m$. 

Consider an environment $\nu$ with $m\geqslant 1$ change points, for which we have to estimate every change point (i.e., $N=m$). As we consider different environments obtained by modification of $\nu$, we use $H_{\mathrm{detect}}^{(m,\nu)}$ (see~\ref{def:H_detect}) and $H_{\mathrm{localize}}^{(m,\nu)}$ (see~\ref{eq:H_loc}) to indicate their dependence on the environment $\nu$.

\begin{theorem}\label{thm:main_LB}
Let $\delta\in(0,1/4)$ and $\eta\in(0,1/8)$, and consider $m=N$. There exists an environment $\nu'$ such that $H_{\mathrm{detect}}^{(m,\nu)}\leqslant H_{\mathrm{detect}}^{(m,\nu')}\leqslant 4H_{\mathrm{detect}}^{(m,\nu)}$, $\frac{1}{2}H_{\mathrm{localize}}^{(m,\nu)} \leqslant H_{\mathrm{localize}}^{(m,\nu')}\leqslant H_{\mathrm{localize}}^{(m,\nu)}$, and such that 
\begin{align*}
\mathbb{P}_{\pi,\nu'}\left(\mathcal{T}_{\pi}\geqslant\tfrac{1}{4}H_{\mathrm{detect}}^{(m,\nu)}\log\left(\frac{1}{8\delta}\right)+\tfrac{1}{2}H_{\mathrm{localize}}^{(m,\nu)}\log\left(\frac{1}{8\delta}\right)+\tfrac{1}{2}\sum_{i=1}^m \frac{1}{\Delta_{i}^{2}}\log_+\!\left(\frac{s_i}{16\eta}\right)\right)\geqslant \delta  \enspace.
\end{align*}
\end{theorem}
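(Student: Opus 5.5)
The plan is to use the standard change-of-measure argument (the transportation lemma of Kaufmann et al.\ / Garivier--Kaufmann). For two environments $\nu',\nu''$ with Gaussian noise, and any event $E$ measurable at the stopping time,
\[
\sum_{t} \mathbb{E}_{\pi,\nu'}\big[\tfrac12 (f'(x_t)-f''(x_t))^2\big]\;\geq\; \kl\big(\mathbb{P}_{\pi,\nu'}(E),\mathbb{P}_{\pi,\nu''}(E)\big).
\]
Since the statement is a high-probability bound rather than an expectation bound, I would use its truncated form. Suppose $\mathbb{P}_{\pi,\nu'}(\mathcal{T}_\pi\geq B)<\delta$. Then apply the data-processing inequality to the law of the first $B$ observations together with the output. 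This gives $\frac12\sum_{t\le B}\mathbb{E}_{\nu'}[(f'-f'')^2(x_t)]\geq \log(1/(8\delta))$, provided $E$ has probability $\geq 1-2\delta$ under $\nu'$ and $\leq 2\delta$ under $\nu''$. These probability bounds follow from correctness under both environments combined with $\mathcal{T}_\pi<B$ with probability $\ge 1-\delta$. The constant $1/8$ will come out of $\kl(1-2\delta,2\delta)\ge\log(1/(8\delta))$ after a careful choice of events.

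\textbf{Construction of $\nu'$.} Start from $\nu$. Let $i^\star$ attain $H_{\mathrm{detect}}$. I would move $x^*_{i^\star}$ (or a neighbour) so that the spacing around it changes by at most a factor $2$, while keeping the jumps of $\nu$ up to a factor $\sqrt2$. Also make the geometry symmetric so that alternatives are easy to write down. The stated sandwich bounds $H_{\mathrm{detect}}^{(m,\nu)}\le H_{\mathrm{detect}}^{(m,\nu')}\le 4H_{\mathrm{detect}}^{(m,\nu)}$ and $\tfrac12 H_{\mathrm{localize}}\le H_{\mathrm{localize}}'\le H_{\mathrm{localize}}$ then only need elementary checks. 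The three terms of the bound are handled with three families of alternatives $\nu''$:

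\emph{Detection term.} Merge a pair of adjacent change points around $i^\star$, for instance by shrinking the segment of length $s_{i^\star}$ between them. Then $f'$ and $f''$ differ only on an interval of length $\lesssim s_{i^\star}$, by $|\Delta_{i^\star}|$, and $\nu''$ has $m-2$ change points. This alternative is not admissible because $N=m$. I would therefore instead re-insert two change points elsewhere, far away with tiny jumps, which costs nothing, or place them by symmetry. Correct outputs under the two environments must then be disjoint.

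\emph{Localization terms.} For each $i$, shift $x_i^*$ by $3\eta$. This gives KL contribution $\frac12\Delta_i^2 N_t(\text{shifted strip})$. The events ``$\hat x_i$ is within $\eta$ of $x_i^*$'' are disjoint across the two environments, which yields $\mathbb{E}[N_i]\gtrsim\Delta_i^{-2}\log(1/\delta)$.

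\emph{$\log_+(s_i/(16\eta))$ terms.} Use a Fano or packing argument. Consider $\approx s_i/(16\eta)$ alternatives, each shifting $x_i^*$ to a different $2\eta$-separated position inside its spacing window. Each alternative differs from the reference only on its own strip. So the total sample budget, summed over strips, must be $\gtrsim \Delta_i^{-2}\log(s_i/\eta)$.

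\textbf{Combining the terms and the main obstacle.} The samples used to separate the different alternatives are disjoint in location, because they fall in different strips near different change points. The required counts therefore add, and their sum gives the stated lower bound on $B$. The main obstacle is obtaining a single $\nu'$ on which all three costs add up \emph{with probability} $\geq\delta$, rather than in expectation and for different environments. I would first try a Fano-type inequality in its high-probability form, as in the lemma of Garivier--Kaufmann. I would pair it with the observation that the regions probed (near the merging pair, near each $x_i^*$, and the packing strips) are essentially disjoint. Then $\{\mathcal T<B\}$ would force one of the region budgets to be too small, contradicting one of the change-of-measure inequalities. Making the packing argument compatible with the $\log(1/\delta)$ factor, giving $\log(s_i/\eta)$ rather than a product, is the delicate step.
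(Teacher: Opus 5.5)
\textbf{The detection term has a genuine gap.} Your plan compares $\nu'$ with a \emph{single} alternative $\nu''$ that differs from it on one interval $I$, with expectations taken under $\nu'$. That change of measure only gives $\mathbb{E}_{\pi,\nu'}[N_I]\gtrsim\Delta_{i^\star}^{-2}\log(1/\delta)$. The length $|I|\lesssim s_{i^\star}$ never enters, because an adaptive algorithm can place all its samples inside $I$. The factor $1/s_{i^\star}$ comes only from not knowing \emph{where} the short segment is. Capturing this needs a family of positions and an averaging argument, which your construction (one moved change point, one merged pair) does not contain.

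In the paper (Lemma~\ref{thm:LB_H_detect}) the construction is as follows. All spacings are halved to make room. The worst pair is replaced by a plateau of height $\tilde\Delta=\sqrt2|\Delta_{i^\star}|$ and width $\tilde\vartheta=\vartheta_{i^\star}/2$, and this plateau is slid by $s\in[0,1/4]$, giving $\tilde\nu^{(s)}$. Every $\tilde\nu^{(s)}$ is compared to the same null $\bar\nu$, with both jumps removed, which does not depend on $s$.

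The key is the direction of the divergence, $\KL(\bar{\mathbb{P}}_{\tilde\pi},\mathbb{P}_{\tilde\pi,s})$. This equals $\tilde\Delta^2/2$ times the expected number of pulls \emph{under $\bar\nu$} in $[\tilde l+s,\tilde l+s+\tilde\vartheta)$, so the sampling law is the same for all $s$. Averaging the Bretagnolle--Huber bound $\KL\geqslant\log(1/(6\delta))$ over $s$ then gives $\log(1/(6\delta))\leqslant 2\tilde\vartheta\tilde\Delta^2\chi$. Hence $\chi\geqslant\frac12H_{\mathrm{detect}}^{(m,\nu)}\log(1/(6\delta))$, where $\chi$ is the largest $(1-\delta)$-quantile of $\mathcal{T}_\pi$ over the family and $\tilde\pi$ is $\pi$ truncated at $\chi$.

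This argument needs $\bar{\mathbb{P}}_\pi(\mathcal{T}_\pi\leqslant\chi)\leqslant2\delta$, and that is where your ``tiny re-inserted change points'' belong. The paper approximates $\bar\nu$ by two $m$-change-point environments with vanishing plateaus at opposite ends of $[\tilde l,\tilde r]$, whose correct outputs are incompatible. They are not used to make a single alternative admissible.

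\textbf{The step you call the main obstacle is unresolved, but also unnecessary.} The statement only asks for \emph{some} $\nu'$ in the comparison class. The paper therefore proves the detection bound and the localization bound on two possibly different environments, each satisfying both sandwich conditions, and keeps whichever bound is larger. The inequality $\max(a,b)\geqslant(a+b)/2$ is exactly where the constants $\frac14,\frac12,\frac12$ come from.

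Inside the localization part, the additive combination of $\log(1/\delta)$ and $\log(s_i/\eta)$ that you flag as delicate comes from one inequality:
\begin{itemize}
\item Take $\alpha_i\approx s_i/(8\eta)$ positions for $x_i$. The common null is $\nu_{J^{(-i)}}$, with the $i$-th jump removed, so it does not depend on $j_i$.
\item The events $\{\mathcal{T}_\pi\leqslant\chi,\ |\hat x_i-x_i^J|\leqslant\eta\}$ are disjoint in $j_i$ and contained in $\{\mathcal{T}_\pi\leqslant\chi\}$, whose null probability is at most $2\delta$. Their average null probability is therefore at most $2\delta/\alpha_i$.
\item Joint convexity of $\kl$ then yields $\kl(1-2\delta,2\delta/\alpha_i)\geqslant\frac12\log(\alpha_i/(8\delta))$.
\item Summing over $i$ works because $\nu_J$ and $\nu_{J^{(-i)}}$ differ only on $[x_i^J,x_{i+1}^J)$, and these segments are disjoint across $i$ under the same uniform mixture over $J$.
\end{itemize}

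Two further points in your localization plan do not hold as written. Your packing claim that each shifted alternative differs from the reference ``only on its own strip'' is false: shifts of $x_i^*$ from a fixed reference produce nested difference intervals. Shifting by $3\eta$ also presupposes $s_i\gtrsim\eta$, which the theorem does not assume. Removing the jump rather than shifting it avoids both issues.
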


\begin{proof}[Sketch of proof]
The proof is information-theoretic. The high-probability statement combines two ingredients, proved separately in Theorems~\ref{thm:LB_H_localize_v2} and~\ref{thm:LB_H_detect} .

\emph{(i) Detection term.}
To obtain a $(1-\delta)$-quantile lower bound of order $H_{\mathrm{detect}}^{(m,\nu)}\log(1/\delta)$, we reduce the problem to signal detection. For intuition, consider an instance with two opposite jumps $\pm\Delta$ separated by spacing $s$, and compare it with a null environment with no change point. If an algorithm detects a change with probability at least $1-\delta$ using budget quantile $T$, it must sample sufficiently often inside the true plateau, whose location is unknown among about $1/s$ candidates shifts. This yields
$T\gtrsim \frac{1}{s\Delta^2}\log\!\left(\frac{1}{\delta}\right)$, which is exactly the energy-driven detection scaling. Our reduction extends this intuition to the general case. \\
\emph{(ii) Localization at precision $\eta$.}
To obtain the term $\sum \Delta_i^{-2}\log(s_i/\eta)$, we construct a family of alternatives by shifting each change point $x_i^*$ over $\alpha_i\asymp s_i/8\eta$ candidate positions. Distinguishing these alternatives induces the factor $\log_+(s_i/\eta)$ in the quantile bound. We exploit symmetry techniques to reveal this multiple-testing cost. The term $H_{\mathrm{localize}}^{(m,\nu)}\log(1/\delta)$ follows from standard change-of-measure arguments, as in Theorem~5.2 in~\cite{pmlr-v267-lazzaro25a}. \\
By standard Markov-type arguments, the high-probability lower bound of Theorem~\ref{thm:main_LB} implies the expectation lower bound stated in the following corollary. See Appendix~\ref{appendix:LB} for the full proof.
\end{proof}

\begin{corollary}\label{cor:main_LB}
Let $\delta\in(0,1/16)$ and $\eta\in(0,1/8)$. Assume that all change points of $\nu$ are spaced by at least $2\eta$ ($\forall i=2,\dots,m-1,\ \vartheta_i >2\eta$). Then, there exists an environment $\nu'$ such that $H_{\mathrm{detect}}^{(m,\nu)}\leqslant H_{\mathrm{detect}}^{(m,\nu')}\leqslant 4H_{\mathrm{detect}}^{(m,\nu)}$, $\frac{1}{2}H_{\mathrm{localize}}^{(m,\nu)} \leqslant H_{\mathrm{localize}}^{(m,\nu')}\leqslant H_{\mathrm{localize}}^{(m,\nu)}$, and such that 
\begin{align*}
\mathbb{E}_{\pi,\nu'}[\mathcal{T}_{\pi}]\geqslant c\left(H_{\mathrm{detect}}^{(m,\nu)}+H_{\mathrm{localize}}^{(m,\nu)}\log\left(\frac{1}{4\delta}\right)+ \sum_{i=1}^m \frac{1}{\Delta_{i}^{2}}\log_+\!\left(\frac{s_i}{16\eta}\right) \right) \enspace, 
\end{align*}
where $c>0$ is a numerical constant.
\end{corollary}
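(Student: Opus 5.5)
The corollary follows from Theorem~\ref{thm:main_LB} by a Markov-type argument, with one extra step for the detection term: in expectation we can only keep $H_{\mathrm{detect}}^{(m,\nu)}$ without its $\log(1/\delta)$ factor.

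\textbf{Step 1: take $\nu'$ and the quantile bound from the theorem.} Let $\nu'$ be the environment given by Theorem~\ref{thm:main_LB}, so the two-sided comparisons of $H_{\mathrm{detect}}$ and $H_{\mathrm{localize}}$ hold as stated. Let $\tau$ be the threshold inside the probability:
\[
\tau=\tfrac14 H_{\mathrm{detect}}^{(m,\nu)}\log\!\big(\tfrac{1}{8\delta}\big)+\tfrac12 H_{\mathrm{localize}}^{(m,\nu)}\log\!\big(\tfrac{1}{8\delta}\big)+\tfrac12\sum_{i}\Delta_i^{-2}\log_+\!\big(\tfrac{s_i}{16\eta}\big).
\]
Then $\mathbb{P}_{\pi,\nu'}(\mathcal{T}_\pi\ge\tau)\ge\delta$, and Markov's inequality gives $\mathbb{E}_{\pi,\nu'}[\mathcal{T}_\pi]\ge\delta\tau$. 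This is too weak, since it loses a factor $\delta$, so the quantile statement must be used differently.

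\textbf{Step 2: upgrade via correctness at a constant level.} A $(\delta,\eta,m)$-correct algorithm is also $(\delta',\eta,m)$-correct for every $\delta'\ge\delta$, in particular for the fixed constant $\delta'=1/16$ (allowed since $\delta<1/16$, and $\log(1/(8\delta'))=\log 2>0$). Applying the underlying quantile bounds at level $\delta'$ gives $\mathcal{T}\gtrsim\tau(\delta')$ with probability at least $1/16$, so $\mathbb{E}[\mathcal{T}]\ge c\,\tau(1/16)$. This yields the $H_{\mathrm{detect}}^{(m,\nu)}$ term and the $\sum_i\Delta_i^{-2}\log_+(s_i/16\eta)$ term with constant coefficients, since the latter does not depend on $\delta$. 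One caveat: Theorem~\ref{thm:main_LB} as stated takes the alternative $\nu'$ from a construction that may depend on $\delta$. I would argue from the proofs of Theorems~\ref{thm:LB_H_detect} and~\ref{thm:LB_H_localize_v2} that the construction of $\nu'$ depends only on $\nu$ and $\eta$, so that a single $\nu'$ serves all levels.

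\textbf{Step 3: the $H_{\mathrm{localize}}\log(1/4\delta)$ term.} This term must keep its $\log(1/\delta)$ factor in expectation. I would use the standard change-of-measure argument directly rather than going through quantiles. By the transportation lemma of \cite{garivier2016optimal}, for each $i$ compare $\nu'$ with an environment in which $x_i^*$ is shifted by more than $2\eta$. The assumption $\vartheta_i>2\eta$ ensures that the shifted point stays between its neighbours and that the correct answers of the two environments are disjoint. This gives $\sum_x\mathbb{E}[N_x]\,\mathrm{KL}_x\ge\kl(\delta,1-\delta)\ge\log(1/(4\delta))$. The divergence is of order $\Delta_i^2$ only for samples falling in the shifted region, which isolates a per-change-point sample count $\mathbb{E}[N^{(i)}]\gtrsim\Delta_i^{-2}\log(1/4\delta)$. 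Because spacings exceed $2\eta$, the shifted regions for different $i$ can be taken disjoint, so these counts add up to $H_{\mathrm{localize}}^{(m,\nu')}\log(1/4\delta)\ge\frac12 H_{\mathrm{localize}}^{(m,\nu)}\log(1/4\delta)$. This is the analogue of Theorem~5.2 of \cite{pmlr-v267-lazzaro25a}.

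\textbf{Step 4: combine.} Sum the bounds from Steps 2 and 3, using that a maximum of three nonnegative terms is at least one third of their sum, and absorb all constants into $c$.

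The main obstacle is Step 3 together with the consistency of $\nu'$ across steps. The lower bounds must hold for one common environment $\nu'$, and the shifts used in the change of measure must stay disjoint and inside the plateaus. This is exactly what the assumption $\vartheta_i>2\eta$ is for, so I would verify it carefully against the explicit construction of $\nu'$ in the appendix.
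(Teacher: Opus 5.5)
Your Steps 1--2 are exactly the paper's argument. $(\delta,\eta,m)$-correctness implies $(1/16,\eta,m)$-correctness, so Theorem~\ref{thm:main_LB} at level $1/16$ together with Markov's inequality gives the $H_{\mathrm{detect}}^{(m,\nu)}$ and $\sum_i\Delta_i^{-2}\log_+(s_i/16\eta)$ terms in expectation under some admissible $\nu'$. The caveat you raise there is unnecessary, because the theorem is invoked only once, at the fixed level $1/16$. The justification you propose for it is also wrong. The families $(\nu_J)_J$ and $(\tilde\nu^{(s)})_s$ depend only on $\nu$ and $\eta$, but the member actually returned is the one realizing the supremal $(1-\delta)$-quantile, so it depends on $\pi$ and on the level.

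The genuine gap is Step 3. You run the change of measure at $\nu'$. That requires every change point of $\nu'$ to be movable by more than $2\eta$ within its plateau, with disjoint modified regions, so the spacings of $\nu'$ must exceed $2\eta$. The hypothesis $\vartheta_i>2\eta$ holds only for $\nu$, and both constructions shrink spacings. Lemma~\ref{lem:complexity_nu_J} only gives $\vartheta_i^J\geqslant\vartheta_i/4$; indeed $\vartheta_i^J=\vartheta_i/2$ whenever $\vartheta_i<4\eta$, since the neighbouring $\alpha$'s then equal $1$. Likewise, $\tilde\nu^{(s)}$ halves every spacing except the two adjacent to the planted bump. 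If, say, all $\vartheta_i\in(2\eta,4\eta)$, then the interior change points of $\nu'$ have both neighbours closer than $2\eta$. The verification you defer to ``the explicit construction'' therefore fails.

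The missing observation is that no common environment is needed. The corollary only asks for \emph{some} $\nu'$ satisfying the two sandwich conditions, and $\nu$ itself satisfies them trivially. The paper therefore applies the $H_{\mathrm{localize}}\log(1/(4\delta))$ bound at $\nu$, where the spacing hypothesis does hold. This is Theorem~5.2 of \cite{pmlr-v267-lazzaro25a}, which is what your Step 3 re-derives. This gives $\mathbb{E}_{\pi,\nu}[\mathcal{T}_\pi]\geqslant A$ and $\mathbb{E}_{\pi,\nu'}[\mathcal{T}_\pi]\geqslant B$. The paper then combines them via $\max(a,b)\geqslant(a+b)/2$, i.e.\ it keeps whichever of $\nu,\nu'$ has the larger expected budget, which is at least $(A+B)/2$. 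If you carry out Step 3 at $\nu$ and take the maximum over the two environments rather than over terms at a single environment, your proof goes through.
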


\section{Numerical experiments}\label{sec:experiments}

\begin{figure}[ht]
    \centering
    \begin{subfigure}{0.42\textwidth}
        \centering
        \includegraphics[width=\textwidth]{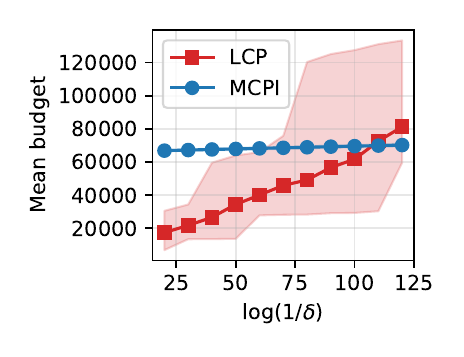}
        \caption{\textbf{Experiment 2}. Average budget and $(0.05,0.95)$ quantiles for localizing two CPs with varying $\delta$.}\label{fig:varying_delta}
    \end{subfigure}
    \hfill
    \begin{subfigure}{0.42\textwidth}
        \centering
        \includegraphics[width=\textwidth]{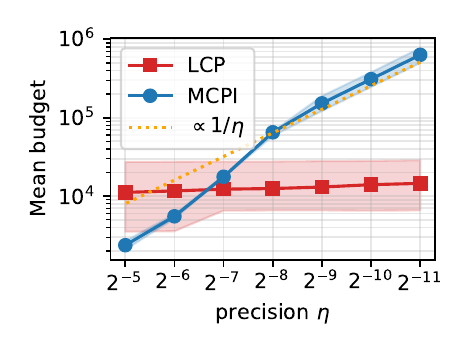}
        \caption{\textbf{Experiment 3}. Average budget and $(0.05,0.95)$ quantiles for localizing two CPs with varying $\eta$.}\label{fig:varying_eta}
    \end{subfigure}
    \caption{Numerical experiments}
\end{figure}

Our theoretical findings are supported by simple numerical experiments on simulated data, where we model the underlying noise to be standard normal. In the following two experiments, 
we consider an environment with two change points, 
$x_1^* \sim \mathcal{U}(0, 1/2)$ and $x_2^* = x_1^* + s$ with $s = 1/4$, 
and jump sizes $\Delta_1 = -\Delta_2 = 1$. Each experiment is averaged 
over $1000$ Monte-Carlo iterations.
We compare our method, \texttt{LCP}, to \texttt{MCPI} 
from~\cite{pmlr-v267-lazzaro25a}. Recall that \texttt{MCPI} is designed 
for discrete bandit change point problems, so we apply it to a 
discretization of the continuous problem (see 
Appendix~\ref{sec:continuousdiscrete}). \texttt{MCPI} follows a 
track-and-stop approach with a forced exploration phase. We will see 
that while the number of arms, here of order $1/\eta$, does not affect 
the asymptotic sample complexity as $\delta \to 0$, it strongly 
influences the algorithm's performance in practice. 

In \textbf{Experiment 2}, we set $\eta=2^{-8}$. With these configurations, we run \texttt{LCP} (again we treat $\delta_{\mathrm{explore}}$ as tuning parameter and set $\delta_{\mathrm{explore}}=1$) and \texttt{MCPI} on a discretization for varying values of $\delta$, that is, $\log(1/\delta)\in\{20,30,\dots,120\}$. We plot the averaged budget in Figure~\ref{fig:varying_delta}. We can see that in this setting, the slope of the budget for \texttt{LCP} with respect to $\log(1/\delta)$ is larger than the very small slope of the budget required for \texttt{MCPI}. Still, it takes $\delta \approx e^{-110}$ until the average budget of our method is larger.

In \textbf{Experiment 3}, we choose $\delta =0.05$ and run \texttt{LCP} (again, with $\delta_{\mathrm{explore}}=1$) and \texttt{MCPI} for $\eta=2^{-i}$, $i=5,6,\dots,11$. We plot the results of our experiment in Figure~\ref{fig:varying_eta}. For \texttt{LCP}, there is only a small increase of the budget when $\eta$ decreases, compared to \texttt{MCPI}. For the latter, we can see that for small values of $\eta$, the budget increases nearly proportionally to $1/\eta$.

Both experiments illustrate that for moderate choices of $\delta$, asking for a small precision (or considering many arms in the discrete case) does not limit the performance of \texttt{LCP}, in contrast to \texttt{MCPI}.
We refer to Appendix~\ref{sec:further:experiments} for an evaluation of the runtime and further experiments.

\section{Discussion}\label{sec:discussion}

\paragraph{Comparison with \cite{pmlr-v267-lazzaro25a}.}
Prior expected-budget bounds of order $H_{\mathrm{localize}}^{(N)}\log(1/\delta)$ effectively decompose the problem into $N$ independent single-change-point localizations. This reduction misses a key phenomenon when $m\geqslant 2$: estimating several closely located jumps induces an additional detection cost. Our analysis isolates this term as $H_{\mathrm{detect}}^{(m)}$, and shows that in expectation it enters additively and does not depend on $\delta$; in moderate-confidence regimes, it can dominate the total budget. We also sharpen the dependence on precision: the localization component scales as $H_{\mathrm{localize}}^{(N)}\log(1/\eta)$ rather than polynomially in the number of arms $K=1/\eta$ in~\cite{pmlr-v267-lazzaro25a}. More generally, the guarantees are non-asymptotic in both $(\eta,\delta)$ and adaptive to local spacing $s_i$, in contrast to approaches that require a global separation condition $\min_i\vartheta_i>\eta$.
\paragraph{Optimality.}
When $N=m$, our upper and lower bounds match up to logarithmic factors, yielding order-optimal sample complexity. In the limit $\delta\to 0$, our expectation bound scales as $H_{\mathrm{localize}}^{(N)}\log(1/\delta)$ up to a numerical constant, so the algorithm is rate-optimal in this regime. The case $N<m$ is open: we expect the detection--localization trade-off to persist, but characterizing the optimal rate would require new lower-bound techniques.
\paragraph{Fixed budget setting.}
Our procedure targets the fixed-confidence setting, but one stage of the doubling schedule in Algorithm~\ref{alg:lcp} could potentially be adapted to fixed-budget. Doing so would require a parameter-free exploration phase, whereas the current method relies on a tuning parameter $\delta_{\mathrm{explore}}$ to control failure probability. We leave this extension for future work.
\paragraph{Extension to multi-dimensional change point detection.}
 Our framework extends to multivariate change-point and kernel-based models by replacing the one-dimensional test at each stage (including in SHB) with an appropriate two-sample procedure~\cite{wang2018high}. Adapting to sparse multidimensional change points, however, requires a careful exploration-phase design, which we leave for future work.
\paragraph{Limitations.}
Our lower bounds are limited to the case $N=m$, so the optimal trade-off when $N<m$ is still an open question, which is highly non-trivial. Algorithmically, our procedure relies on a doubling schedule, which is not always desirable in practice. Adaptations are required to derive fully-adaptive counterpart of the method.
\newpage 

\begin{ack}
The work of M. Graf has been partially supported by the DFG Forschungsgruppe FOR 5381 "Mathematical Statistics in the Information Age - Statistical Efficiency  and Computational Tractability" and DFG SFB 1294 "Data Assimilation". The work of V. Thuot has partially been supported by ANR-21-CE23-0035 (ASCAI, ANR).
\end{ack}

\bibliography{biblio_change_point}

\newpage 
\appendix

\section{List of Notation}\label{sec:notation}
We summarize the notation used throughout the paper in Table~\ref{tab:notation}.
 
\begin{table}[ht]
  \centering
  \caption{Summary of notation.}\label{tab:notation}
  \begin{tabular}{@{}lp{10cm}@{}}
    \toprule
    \textbf{Symbol} & \textbf{Description} \\
    \midrule
    $f$           & Unknown step function \\
    $m$                  & Total number of change points \\
    $x_i^*$  & Location of the $i^{\mathrm{th}}$ change point \\
    $x_t$                 & Location, chosen by a learner at time $t$ \\
    $\varepsilon_t$ & Sub-Gaussian noise at time $t$\\
    $y_t$ & Feedback $y_t=f(x_t)+\varepsilon_t$ that the learner observes at time $t$\\
    $N$ & Number of change points the learner wants to localize\\
    $\eta$ & Precision parameter\\
    $\delta$  & Confidence parameter for correctness \\
    $\delta_{\mathrm{explore}}$ & Confidence parameter for exploration\\
    $\mathcal{T}$ & Sample complexity / budget\\
    $\Delta_i$ & Jump at the $i^{\mathrm{th}}$ change point, $\Delta_i=f(x_i^*)-f(x_{i-1}^*)$\\
    $\Delta_{(i)}$ & Jump ordered by its magnitude such that $|\Delta_{(1)}|\geq\dots\geq|\Delta_{(m)}|$\\
	$\vartheta_i$ & Spacing after the $i^{\mathrm{th}}$ change point, $\vartheta_i=x_{i+1}^*-x_{i}^*$ for $i=1,\dots,m-1$, with convention $\vartheta_0=\vartheta_m=1$ \\
    $s_i$ & Local spacing between the $i^{\mathrm{th}}$ change point and its neighboring change points, $s_i=\vartheta_{i-1}\wedge\vartheta_i$ for $i=1,\dots,m$,\\
    $\mathcal{E}_i^2$ & Energy of the $i^{\mathrm{th}}$ change point, $\mathcal{E}_i^2=s_i\Delta_i^2$\\
    $H_{\mathrm{detect}}^{(m)}$ & Complexity term for change point detection, $H_{\mathrm{detect}}^{(m)}=\displaystyle\max_{i=1,\dots,m}\mathcal{E}_i^{-2}$\\
	$H_{\mathrm{localize}}^{(N)}$& Complexity term for change point localization, $H_{\mathrm{localize}}^{(N)}=\sum_{i=1}^N1/\Delta_{(i)}^2$ \\
    $l(I)$, $r(I)$ & Boundaries of a compact interval $I=[l(I),r(I)]$ \\ 
    $\Delta(I)$ & Jump between interval boundaries, $\Delta(I)=f(r(I))-f(l(I))$
    \\
    \midrule
 
 
    \bottomrule
  \end{tabular}
\end{table}
\section{Further numerical experiments and runtime analysis}\label{sec:further:experiments} 
All experiments were run on a MacBook Air (Apple M1, 8 GB RAM), 
using 4 parallel worker processes. The runtime is documented in table~\ref{tab:compute_resources}.
\begin{table}[ht]
    \centering
    \caption{Computational resources used for each experiment (runtimes in sec.)}
    \label{tab:compute_resources}
    \begin{tabular}{lcccc}
        \toprule
        & \textbf{\# MC iterations} & \textbf{Total runtime} & \textbf{Runtime \texttt{LCP}} & \textbf{Runtime \texttt{MCPI}} \\
        \midrule
        Experiment 1 & 1000  & 6667.64 & 433.76 & 26277.82 \\
        Experiment 2 & 1000& 1035.69 & 145.15 & 3987.08 \\
        Experiment 3 & 1000 & 1634.18 & 37.19 & 6487.09 \\
        Experiment 4 & 1000& 599.06 & 37.31 & 2354.73 \\
        Experiment 5 & 100 &  2619.28& 163.92 & 10281.14 \\
        \bottomrule
    \end{tabular}
\end{table}

\begin{figure}[ht]
  \centering
  \includegraphics[]{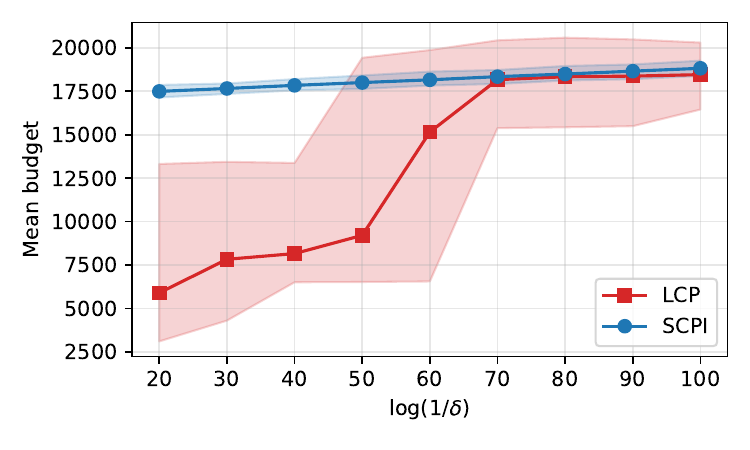}
  \caption{\textbf{Experiment 4}. Average budget for localizing one change point with varying $\delta$.}\label{fig:one_cp}
\end{figure}
We illustrate \textbf{Experiment 4} in Figure~\ref{fig:one_cp}. We consider one change point $x_1^*$ and $\Delta_1=1$. Again, we consider standard normal noise. In $1000$ Monte-Carlo runs, we run \texttt{LCP} (with $\delta_{\mathrm{explore}}=1$) and \texttt{SCPI} with precision $\eta=2^{-7}$ and varying $\delta$. \texttt{SCPI} is an adaptation of \texttt{MCPI}, also provided in \cite{pmlr-v267-lazzaro25a} for identification of a single change point. Like in Experiment 2, we can see that while the budget of our method increases stronger for $\log(1/\delta)$ growing, we still achieve better performance for moderate choices of $\delta$.

\begin{figure}[ht]
  \centering
  \includegraphics[]{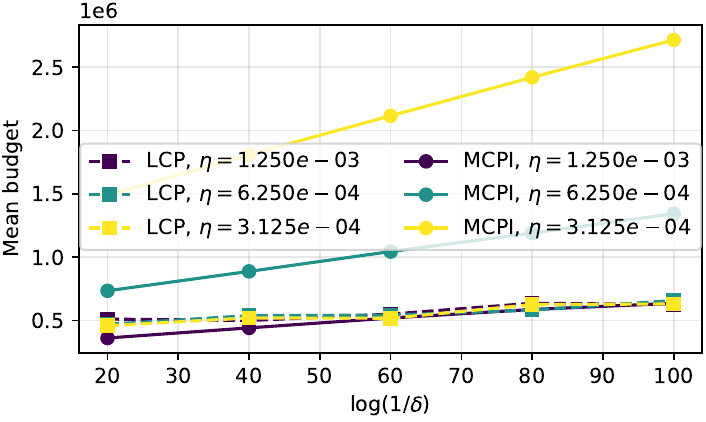}
  \caption{\textbf{Experiment 5}. Average budget for localizing $10$ change points for three values of $\eta$ and varying $\delta$.}\label{fig:many_cps}
\end{figure}
In \textbf{Experiment 5}, illustrated in Figure~\ref{fig:many_cps}, we consider a problem with $m=10$ change points and $\Delta_i=1$ for $i$ odd and $\Delta_i=-1$ for $i$ even. The noise is again supposed to be standard normal. We run \texttt{MCPI} and \texttt{LCP} (with $\delta_{\mathrm{explore}}=1$) with varying $\log(1/\delta)\in\{20,40,60,80,100\}$ and $\eta=0.0025\cdot2^{-i}$, $i=1,2,3$. For \texttt{LCP} we observe that only the choice of $\delta$ has a visible but mild effect on the average budget. For \texttt{MCPI}, increasing $\eta$ by a factor $4$ leads to a decrease of the budget by a factor $1/4$. Meanwhile, the slope seems to depend almost linearly on $\log(1/\delta)$.

\newpage
\section{Proofs of upper bounds}\label{appendix:UB}

\subsection{Multiple Change Point Detection}

\begin{lemma}\label{lem:upperbounddetection}

Let $0<\delta<1/4$. On an event $\xi$ with $\mathbb{P}(\xi)\geq 1-\delta$, Algorithm~\ref{alg:di} only outputs intervals that contain at least one change point. 
		
In addition, for any change point $x_i^*$ (with $i\in\{1,\dots,m\}$), if
		 \begin{align*}
			T&\geq c\cdot\omega^a_i(\delta)\cdot \frac{1}{\mathcal{E}_i^2} ~,
		\end{align*}
	with 
	\[
	\omega_i^a(\delta)=(\log(\mathcal{E}_i^{-2})\vee 1)\cdot\log \left(\frac{\log(\mathcal{E}_i^{-2})\vee e}{s_i\cdot \delta}\right)~, 
	\]
	then on the same event $\xi$, there exists an interval $I\in\mathcal{I}$ such that $x_i^*\in I$ and $|I|\leq \frac{s_i}{2}$. 

    In particular, if $T\geqslant \max_{i=1}^m c\cdot\omega^a_i(\delta)\cdot \frac{1}{\mathcal{E}_i^2}$, then on the event $\xi$, for any change point $i\in[m]$, there exists a unique interval $I\in \mathcal{I}$ containing $x_i^*$, and this interval has a length smaller than $\frac{s_i}{2}$. 
    
	\end{lemma}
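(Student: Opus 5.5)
Define the good event $\xi$ as the event on which every empirical endpoint mean concentrates uniformly over all depths and endpoints. Concretely: for every $d\in\{1,\dots,d_{\max}\}$ and every $i\in\{0,\dots,n_d\}$,
\[
|\hat y_i^{(d)}-f(x_i^{(d)})|\leq \tfrac{\beta_d}{2\sqrt 2}\cdot\sqrt 2/\sqrt2 = \tfrac{\beta_d}{2},
\]
which I would set up more explicitly as $\sqrt{\tfrac{2}{T_d}\log\big(\tfrac{2d_{\max}(n_d+1)}{\delta}\big)}=\beta_d/2$. Since $\hat y_i^{(d)}$ is an average of $T_d$ independent $1$-subGaussian variables, Hoeffding gives failure probability at most $\delta/(d_{\max}(n_d+1))$ for each pair $(d,i)$. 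A union bound over the at most $\sum_d (n_d+1)$ pairs yields $\mathbb{P}(\xi)\geq 1-\delta$; depths with $T_d=0$ have $\beta_d=\infty$ and contribute nothing. On $\xi$ the first claim is immediate. If $\Delta(I)=0$ for a dyadic interval $I=[x_{i-1}^{(d)},x_i^{(d)}]$, then $|\hat y_i^{(d)}-\hat y_{i-1}^{(d)}|\leq\beta_d$, so $I$ is never flagged. Every flagged interval therefore has $f(r(I))\neq f(l(I))$ and contains a change point. Pruning only removes intervals, so every interval of $\mathcal I$ was flagged at some point.

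For the detection claim, fix $i$ and choose the natural depth $d_i$ as the smallest $d$ with $2^{-d}\leq s_i/4$, so that $2^{-d_i}\in(s_i/8,s_i/4]$. The point $x_i^*$ lies in some dyadic interval $I=[a,a+2^{-d_i}]$ at depth $d_i$; the half-open convention $f$ constant on $[x_{l-1}^*,x_l^*)$ should be handled with care at endpoints. Because both neighbouring change points are at distance at least $s_i>2^{-d_i}$, the interval $I$ contains only $x_i^*$, and $|\Delta(I)|=|\Delta_i|$. On $\xi$ the test fires provided $|\Delta_i|>2\beta_{d_i}$, that is, $T_{d_i}\gtrsim \Delta_i^{-2}\log(2d_{\max}(2^{d_i}+1)/\delta)$. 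Now $T_{d_i}\asymp T/(d_{\max}2^{d_i})\asymp s_iT/d_{\max}$, so it suffices that
\[
T\gtrsim \frac{d_{\max}}{s_i\Delta_i^2}\log\Big(\frac{d_{\max}}{s_i\delta}\Big).
\]
This matches $\omega_i^a(\delta)/\mathcal E_i^2$ as long as $d_{\max}\lesssim \log(\mathcal E_i^{-2})\vee1$. One must also check that $d_i\leq d_{\max}$, i.e.\ $T/\log(1/\delta)\gtrsim 1/s_i$, which follows from the assumed lower bound on $T$ since $\Delta_i^2\leq 1$.

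The main obstacle is that $d_{\max}=\lfloor\log_2(T/\log(1/\delta))\rfloor$ grows with $T$ rather than with $\mathcal E_i^{-2}$, so the statement cannot hold literally for arbitrarily large $T$ unless this is absorbed. I would handle it by noting that the condition $T\geq T\cdot(\text{stuff})$ becomes, after substituting $d_{\max}\leq\log_2 T$, a condition of the form $T/\log T\gtrsim \mathcal E_i^{-2}\log(\cdot)$. This is monotone in $T$, so it holds for all $T$ above the threshold $T\asymp \mathcal E_i^{-2}\log(\mathcal E_i^{-2})\log(\log(\mathcal E_i^{-2})/(s_i\delta))$, with the usual inversion $x/\log x\geq A\Leftarrow x\geq 2A\log A$. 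The constant $c$ absorbs everything, and $\log\log(1/\delta)$ contributions are dominated by $\log(1/\delta)$.

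Finally, after flagging $I$ at depth $d_i$ I need that some interval of length at most $s_i/2$ containing $x_i^*$ survives to the end. Pruning removes only intervals strictly containing a newly flagged one. At later depths, any flagged interval inside $I$ contains a change point on $\xi$, and the only change point in $I$ is $x_i^*$. So $I$ can be replaced only by a strictly smaller interval still containing $x_i^*$, which is in turn replaced by descendants containing $x_i^*$; by induction some interval containing $x_i^*$ of length at most $2^{-d_i}\leq s_i/4$ remains.

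For uniqueness under the "in particular" condition, note that each interval in $\mathcal I$ is dyadic, and two dyadic intervals either nest or have disjoint interiors. Any other surviving interval containing $x_i^*$ would either nest with the small one, and so be pruned or be a descendant, or touch it only at an endpoint. The endpoint case is excluded because a change point sitting exactly at a dyadic endpoint lies, by the half-open convention, in only one of the two adjacent intervals whose endpoint values differ.
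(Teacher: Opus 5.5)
Your overall architecture matches the paper's: a uniform Hoeffding event at level $\beta_d/2$, no flagging of cells with $\Delta(I)=0$, a natural depth at which one dyadic cell isolates $x_i^*$, and the pruning invariant. The gap is in the step you yourself single out, namely turning $|\Delta_i|>2\beta_{d_i}$ into the stated budget.

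\textbf{The gap.} Substituting $d_{\max}\leq\log_2 T$ discards the division by $\log(1/\delta)$ in $d_{\max}=\lfloor\log_2(T/\log(1/\delta))\rfloor$. The inversion $x\geq 2A\log A$ with $A\asymp\mathcal{E}_i^{-2}\log(1/(s_i\delta))$ then gives a threshold containing $\mathcal{E}_i^{-2}\log(1/\delta)\log\log(1/\delta)$. The reason is that $\log A$ contains $\log\log(1/\delta)$, and this multiplies the $\log(1/\delta)$ already present in $A$. The result is not bounded by $c\,\omega_i^a(\delta)\,\mathcal{E}_i^{-2}$, which is linear in $\log(1/\delta)$ for a fixed instance. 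Your remark that $\log\log(1/\delta)$ terms are dominated by $\log(1/\delta)$ only handles additive terms, not this multiplicative one.

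The loss is real, not an artefact of a loose inversion. Take $m=1$, $|\Delta_1|=1$, $s_1=1$, so $\omega_1^a(\delta)=\log(e/\delta)$. At $T=c\log(e/\delta)$, the bound $d_{\max}\leq\log_2T$ only certifies $T_{d_1}\gtrsim c\log(1/\delta)/\log_2(c\log(e/\delta))$. This falls below the required $32\log(1/\delta)$ once $\log_2\log(1/\delta)\gtrsim c$. So no fixed $c$ comes out of your estimate, even though the algorithm itself is fine there, since in fact $d_{\max}\leq\log_2(2c)$.

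\textbf{The fix.} Keep $d_{\max}\leq\log_2(T/\log(1/\delta))$, and split $\log\bigl(2d_{\max}(n_{d_i}+1)/\delta\bigr)$ into its $\log(1/\delta)$, $\log(1/s_i)$ and $\log d_{\max}$ parts.
\begin{itemize}
\item For the $\log(1/\delta)$ part, the requirement $\mathcal{E}_i^{-2}\,d_{\max}\log(1/\delta)\lesssim T$ becomes $u/\log u\gtrsim\mathcal{E}_i^{-2}$ with $u=T/\log(1/\delta)$. This condition is free of $\delta$ and monotone in $u$.
\item For the other two parts, no $\log(1/\delta)$ factor multiplies $d_{\max}$, so the crude bound $d_{\max}\leq\log_2 T$ is harmless, together with $1/s_i\leq\mathcal{E}_i^{-2}$.
\end{itemize}
This is what Lemma~\ref{lem:di:technical} does, and it is why $d_{\max}$ is defined with that division. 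Your observation that it suffices to have $d_{\max}\lesssim\log(\mathcal{E}_i^{-2})\vee 1$ at the threshold holds precisely because of it.

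\textbf{A smaller point on uniqueness.} The half-open convention does not by itself rule out a surviving dyadic cell $K$ whose left endpoint is $x_i^*$. Such a $K$ can legitimately be flagged because it contains another change point $x_l^*$ in $(l(K),r(K)]$. What rules it out is the hypothesis applied to $x_l^*$. Indeed, $|K|\geq|x_l^*-x_i^*|\geq s_l>|R_l|$, where $R_l$ is the surviving cell of length at most $s_l/2$ containing $x_l^*$. Hence $R_l$ is a strict dyadic sub-cell of $K$, so $K$ is pruned when $R_l$ is added at its finer depth.
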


\begin{proof}[Proof of Lemma~\ref{lem:upperbounddetection}]

    We refer to Section~\ref{sec:detection} and Algorithm~\ref{alg:di} for the introduction of the notation used in this proof. 
	Fix one depth $d\in\{1,\dots,d_{\max}\}$ and one index $j\in\{0,\dots,n_d\}$. Let $y_j^{(d)}=f(x_j^{(d)})$ be the true function value at $x_j^{(d)}$. By assumption, the noise is $1$-sub-Gaussian, so Hoeffding's inequality yields, for any $\varepsilon>0$,
	\begin{align*}
		\mathbb{P}\left(|\hat y_j^{(d)}-y_j^{(d)}|\geq \varepsilon\right)\leq 2\exp\left(-\frac{T_d\varepsilon^2}{2}\right)\enspace.
	\end{align*}

	From a union bound, first over all $n_d+1$ endpoints of depth $d$, and then over each depth $d=1,\dots,d_{\max}$, we obtain an event $\xi$ with $\mathbb{P}(\xi)\geq 1-\delta$ such that for all $d=1,\dots,d_{\max}$ and all $j=0,\dots,n_d$,
	\begin{align}\label{ineq:Hoeffdingfordetection}
		|\hat y_j^{(d)}-y_j^{(d)}|\leq \sqrt{\frac{2}{T_d}\log\left(\frac{2d_{\max}(n_d+1)}{\delta}\right)}=\beta_d/2.
	\end{align}

	First, on $\xi$, intervals without change points are not added to $\mathcal{I}$. Indeed, consider any depth $d$ and any interval $[x_{j-1}^{(d)},x_j^{(d)}]$ that does not contain a change point. Then $y_{j-1}^{(d)}=y_j^{(d)}$, and therefore, on $\xi$ (see~\eqref{ineq:Hoeffdingfordetection}),
	\begin{align*}
		|\hat y_j^{(d)}-\hat y_{j-1}^{(d)}|\leq |\hat y_j^{(d)}-y_j^{(d)}|+|y_{j-1}^{(d)}-\hat y_{j-1}^{(d)}|\leq \beta_d\enspace,
	\end{align*}
	and the interval is not added to $\mathcal{I}$ (Line~\ref{line:di:test}). This guarantees that every interval in $\mathcal{I}$ contains at least one change point, which proves the first part of the statement. 

	Second, on $\xi$, change points are detected if the budget is large enough. Fix any change point $x_i^*$, with $i\in\{1,\dots,m\}$, jump size $\Delta_i$, and minimal spacing $s_i$ to the other change points. Recall that $d_{\max}=\lfloor \log_2(T/\log(1/\delta))\rfloor$, $n_d=2^d$, and define the change point-specific depth
	\begin{align}
		d_i^*=\left\lceil \log_2\left(\frac{2}{s_i}\right)\right\rceil\leq d_{\max}\label{def:d^*_i}
	\end{align}
	The latter inequality ensures that depth $d_i^*$ is considered by the algorithm. It follows from
	\begin{align*}
		T\geq 2e\cdot \mathcal{E}_i^{-2}\cdot\log(1/\delta)\geq \frac{2e\cdot\log(1/\delta)}{s_i}\enspace,
	\end{align*}
	recalling that $\mathcal{E}_i^2=s_i\Delta_i^2\leq s_i$.

	Consider depth $d_i^*$ and the index $j$ such that $x_{j-1}^{(d_i^*)}<x_i^*\leq x_j^{(d_i^*)}$. By \eqref{def:d^*_i}, we have $2^{-d_i^*}\leq s_i/2$, so the definition of $s_i$ ensures that no other change point is contained in $[x_{j-1}^{(d_i^*)},x_j^{(d_i^*)}]$, and $y_j^{(d_i^*)}-y_{j-1}^{(d_i^*)}=\Delta_i$. Moreover, on $\xi$, using~\eqref{ineq:Hoeffdingfordetection} and the reverse triangle inequality,
	\begin{align*}
		|\hat y_j^{(d_i^*)}-\hat y_{j-1}^{(d_i^*)}|\geq |\Delta_i|-\beta_{d_i^*}\enspace.
	\end{align*}
	Now, if $T$ is large enough to ensure $\Delta_i\geq 2\beta_{d_i^*}$, then the test in Line~\ref{line:di:test} of Algorithm~\ref{alg:di} is positive, and $[x_{j-1}^{(d_i^*)},x_j^{(d_i^*)}]$ is added to $\mathcal{I}$ during the iteration at depth $d_i^*$. In particular, $x_i^*$ is contained in an interval of $\mathcal{I}$ with length at most $2^{-d_i^*}\leq s_i/2$.  Besides, at the following iterations at depth $d>d_i^*$, the algorithm maintains in $\mathcal{I}$ exactly one interval containing $x_i^*$, and whose length is at most $s_i/2$.
	
	Indeed, consider any depth $d>d_i^*$ and an interval $[x_{j-1}^{(d)},x_j^{(d)}]$ included in $[x_{j-1}^{(d_i^*)},x_j^{(d_i^*)}]$. If it does not contain a change point, it is not added to $\mathcal{I}$, as the event $\xi$ holds (Equation~\ref{ineq:Hoeffdingfordetection}). Otherwise, $[x_{j-1}^{(d_i^*)},x_j^{(d_i^*)}]$ contains $x_i^*$ (as it is the unique change point in that interval). It may be added to $\mathcal{I}$, in which case the interval from $\mathcal{I}$ containing $x_i^*$ is removed. Therefore, at the end of Algorithm~\ref{alg:di}, $x_i^*$ is contained in exactly one interval of $\mathcal{I}$, with length at most $2^{-d_i^*}\leq s_i/2$. 

	It remains to determine a condition on $T$ that guarantees $\Delta_i\geq 2\beta_{d_i^*}$.

	Intuitively, by definition of $d_i^*$, we have $T_{d_i^*}\underset{\log}{\simeq} s_i\,T$ up to logarithmic factors, so $\Delta_i\geq 2\beta_{d_i^*}$ is roughly equivalent to $T\geq c\,\mathcal{E}_i^{-2}\log(1/\delta)$ up to logarithmic factors. The precise formulation is given in technical Lemma~\ref{lem:di:technical}, which concludes the proof.
\end{proof}
	
	\begin{lemma}\label{lem:di:technical}
		Let $0<\delta<1/4$, $\Delta_i\leq 1$. If
		\begin{align*}
			T&\geq 7.4\cdot 10^5 \cdot (\log(\mathcal{E}_i^{-2})\vee 1)\cdot\log \left(\frac{\log(\mathcal{E}_i^{-2})\vee 1}{s_i\cdot \delta}\right) \cdot \mathcal{E}_i^{-2} ~,
		\end{align*}
		then $T_{d_i^*}\geqslant 1$, and $\Delta_i \geqslant 2\beta_{d_i^*}$. 
	\end{lemma}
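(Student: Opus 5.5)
We need $T_{d_i^*}\ge 1$ and $\beta_{d_i^*}\le |\Delta_i|/2$; since $\beta_d^2 = \frac{8}{T_d}\log\bigl(\frac{2d_{\max}(n_d+1)}{\delta}\bigr)$, the second condition is equivalent to
\[
T_{d_i^*}\geq \frac{32}{\Delta_i^2}\log\left(\frac{2d_{\max}(n_{d_i^*}+1)}{\delta}\right).
\]
The plan is to lower bound $T_{d_i^*}$ by a quantity proportional to $s_iT/\log T$ and upper bound the logarithm by something like $\log(\log T/(s_i\delta))$. The resulting implicit condition on $T$ is then solved using the standard inversion ``$T\ge C a\log(a)$ implies $T/\log T\ge a$''.

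\textbf{Step 1: bounding the depth-dependent quantities.} From $d_i^*=\lceil\log_2(2/s_i)\rceil$ we get $n_{d_i^*}=2^{d_i^*}\le 4/s_i$, so $n_{d_i^*}+1\le 5/s_i$. Since $\delta<1/4$, $\log(1/\delta)\ge\log 4>1$, so $d_{\max}\le\log_2 T$. Moreover, $d_{\max}\ge d_i^*\ge 1$ holds as in the proof of Lemma~\ref{lem:upperbounddetection}, because the hypothesis on $T$ dominates $2e\,\mathcal{E}_i^{-2}\log(1/\delta)$ with room to spare. Hence
\[
T_{d_i^*}=\left\lfloor\frac{T}{d_{\max}(n_{d_i^*}+1)}\right\rfloor\ge \frac{s_iT}{5\log_2 T}-1\ge \frac{s_iT}{10\log_2 T},
\]
where the last inequality holds as soon as $s_iT\ge 10\log_2 T$. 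This is also implied by the hypothesis, and it gives $T_{d_i^*}\ge1$ as well. Similarly, the logarithmic factor is at most $\log(10\log_2 T/(s_i\delta))$.

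\textbf{Step 2: reducing to an inequality in $T$ alone.} It now suffices to show
\[
\frac{s_iT}{10\log_2 T}\ge \frac{32}{\Delta_i^2}\log\left(\frac{10\log_2T}{s_i\delta}\right),
\]
or equivalently $T\ge 320\,\mathcal{E}_i^{-2}\log_2 T\,\log\bigl(10\log_2 T/(s_i\delta)\bigr)$. Write $A=\mathcal{E}_i^{-2}\ge1$, $L=\log(A)\vee1$, and $B=\log\bigl(L/(s_i\delta)\bigr)$, so that the hypothesis reads $T\ge C\,A L B$ with $C=7.4\cdot10^5$. The right-hand side $g(T)$ of the target inequality grows only polylogarithmically, so $T/g(T)$ is increasing for $T$ beyond a small absolute threshold. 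It therefore suffices to check the target inequality at $T_0=CALB$.

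\textbf{Step 3: checking at $T_0$ (main obstacle).} We need $\log T_0\lesssim L+\log B$ and $\log(\log T_0/(s_i\delta))\lesssim B$. The difficulty is that $T_0$ contains the factor $1/s_i$ and $B$ contains $\log(1/\delta)$, so $\log T_0$ is not bounded by $L$ alone. We handle this using $s_i\ge\mathcal{E}_i^2=1/A$, hence $\log(1/s_i)\le L$, and we absorb $\log B$ and $\log\log(1/\delta)$ into the $B$ factor. Concretely, $\log T_0\le \log C+L+\log L+\log B\le c_1 L\cdot\log(e B)$. Then $\log(10\log_2T_0/(s_i\delta))\le \log(1/(s_i\delta))+\log L+\log\log(eB)+c_2\le c_3B$, using $B\ge\log 4$ to absorb constants and doubly-logarithmic terms. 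Multiplying these bounds, $320\,A\log_2T_0\cdot c_3 B$ is at most $C A L B$ provided $\log(eB)$ can be absorbed into $B$. If it cannot, I would instead split the cases $\log(1/\delta)\le L$ and $\log(1/\delta)>L$ and bound $\log T_0$ by the dominant one. Tracking the numerical constants through this step is where the value $7.4\cdot10^5$ is produced, and it is the main tedious point: one must ensure that every absorbed lower-order term fits under a single generous constant.
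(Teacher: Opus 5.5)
There is a genuine gap. It enters in Step~1, where you bound $d_{\max}\le\log_2 T$ instead of keeping $d_{\max}\le\log_2\bigl(T/\log(1/\delta)\bigr)$.

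After that loss, the inequality you reduce to in Step~2, $T\ge 320\,A\log_2T\,\log\bigl(10\log_2T/(s_i\delta)\bigr)$, is no longer implied by the hypothesis uniformly in $\delta$. At $T_0=CALB$ you have $\log_2T_0\ge\log_2B\ge\log_2\log(1/\delta)$, and the second logarithm is at least $\log(1/\delta)$. So the right-hand side is at least $320\,A\log(1/\delta)\log_2\log(1/\delta)$. On the other hand, $\log(1/s_i)\le\log A\le L$ and $\log L\le L$, so $B\le\log(1/\delta)+2L$ and $T_0\le C\,A\,L\bigl(\log(1/\delta)+2L\bigr)$. For fixed $\mathcal{E}_i$ and $s_i$, your Step~2 inequality therefore fails at $T_0$ for all sufficiently small $\delta$. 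The threshold is astronomically small, but the lemma is claimed for every $\delta\in(0,1/4)$. For instance, with $A=s_i=1$ it would require $C\ge320\log_2\bigl(C\log(1/\delta)\bigr)$.

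This is exactly the obstruction you hit in Step~3. There is no spare factor into which $\log(eB)$ could be absorbed, because the $B$ on the right is already used up matching the $B$ from the second logarithm. Your case split does not help either: for $\log(1/\delta)>L$ you still have $\log T_0\approx L+\log\log(1/\delta)$. No tracking of constants can close this, even though the lemma's conclusion does hold in that regime, since the true $d_{\max}$ at $T_0$ stays bounded in $\delta$.

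The missing idea is to keep the $\log(1/\delta)$ inside $d_{\max}$, as the paper does, so that you control $\log\bigl(T/\log(1/\delta)\bigr)$ rather than $\log T$. Your Step~1 goes through unchanged with this replacement and gives $T_{d_i^*}\ge s_iT/\bigl(10\log_2(T/\log(1/\delta))\bigr)$. Then:
\begin{itemize}
\item Since $B\le\log(1/\delta)+2L$, you get $T_0/\log(1/\delta)\le C\,A\,L\,(1+2L/\log 4)$, hence $\log_2\bigl(T_0/\log(1/\delta)\bigr)=O\bigl((\log C)\,L\bigr)$ with no dependence on $\delta$.
\item The second logarithm is then at most $B+O(\log\log C)\lesssim B$, since $B\ge\log 4$.
\item The product is $O(LB)$, which $T_0=CALB$ dominates for $C$ of the stated size.
\end{itemize}
Monotonicity in $x=T/\log(1/\delta)$ then extends the inequality from $T_0$ to all $T\ge T_0$. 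The paper does the same computation by splitting the outer logarithm into its $\log(1/\delta)$, $\log(1/s_i)$ and $\log\log$ parts. It handles the $\log(1/\delta)$ part using monotonicity of $\log(x)/x$ in that same variable $x$.
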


\begin{proof}[Proof of Lemma~\ref{lem:di:technical}]
	Assume that $T\geq c\cdot (\log(\mathcal{E}_i^{-2})\vee 1)\cdot\log \left(\frac{\log(\mathcal{E}_i^{-2})\vee 1}{s_i\cdot \delta}\right) \cdot \mathcal{E}_i^{-2}$ for a constant $c$ we want to determine.

	First, we want to verify that $T_{d^*_i}\geqslant 1$. By definition of the budget allocated to depth $d^*_i$, we have
	\begin{align*}
		T_{d^*_i} = \left\lfloor\frac{T}{d_{\max}(n_{d^*_i}+1)}\right\rfloor \enspace .
	\end{align*}
	We have (since $\delta<1/4$)
	\begin{align*}
		d_{\max}\leq \log_2(T/\log(1/\delta))\leq\frac{\log(T)}{\log(2)} \\ \intertext{and}
		n_{d^*_i}+1 = 2^{\left\lceil\log_2\left(\frac{2}{s_i}\right)\right\rceil}+1\leq \frac{5}{s_i}\enspace ,\\ 
	\end{align*}
	so 
	\begin{align*}
		\frac{d_{\max}(n_{d^*}+1)}{T}\leq\frac{5}{\log(2)}\cdot\frac{\log(T)}{s_i\cdot T}
		\enspace .
	\end{align*}
	Note that $s_i\leq 1/2$ (otherwise, we could simply pick any $T\geq 60 $, which could be relevant in the cases $i=1$ and $i=N$). Observe that $\mathcal{E}_i^2=s_i\Delta_i^2\leq s_i$, so that by assumption on $T$, we have
	\begin{align}
		T\geq c_1\cdot\frac{\log\left(\frac{1}{s_i}\right)}{s_i}\label{eq:ineqc1}.
	\end{align}
	Then we obtain from $\log(x)/x$ being decreasing for  $x\geq e$ that holds
	\begin{align*}
		\frac{5}{\log(2)}\cdot\frac{\log(T)}{s_i\cdot T}&\leq \frac{5}{\log(2)}\cdot\frac{\log(c_1)+\log\left(\frac{1}{s_i}\right)+\log\log\left(\frac{1}{s_i}\right)}{c_1\log\left(\frac{1}{s_i}\right)}\\
		&\leq \frac{5}{\log(2)^2}\cdot\frac{\log(c_1)+2}{c_1}\leq 1 \enspace ,
	\end{align*} 
	where the last inequality holds for $c_1\geqslant 65$.

	Now, if $c_1\geq 65$, then $T_{d^*_i}\geqslant 1$, in which case $\beta_{d^*_i}$ is well-defined. Then, $\lfloor x\rfloor\geq x/2$ for $x\geq 1$, we can therefore obtain
	\begin{align*}
		\frac{32}{T_{d^*}}\leq 64\frac{d_{\max}(n_{d^*}+1)}{T}\leq \frac{320}{\log(2)}\cdot\frac{\log(T/\log(1/\delta))}{s_i\cdot T}
	\end{align*}
	The objective $\Delta_i \geqslant 2\beta_{d^*_i}$ is equivalent to the condition
	\begin{align*}
		4\beta_{d^*_i}^2=\frac{32}{T_{d^*_i}}\log\left(\frac{2d_{\max}(n_{d^*_i}+1)}{\delta}\right)\leq\Delta_i^2\enspace,
	\end{align*}
	which holds as long as
	\begin{align*}
		\frac{320}{\log(2)}\cdot\frac{\log(T/\log(1/\delta))\cdot \mathcal{E}_i^{-2}}{T}\cdot\log\left(\frac{10\log(T/\log(1/\delta))}{\log(2)\cdot s_i\cdot\delta}\right)\leq1 \enspace .
	\end{align*}
	Note that since $\delta<\frac{1}{4}$, we have
	\begin{align*}
		&\frac{320}{\log(2)}\cdot\frac{\log(T/\log(1/\delta))\cdot \mathcal{E}_i^{-2}}{T}\cdot\log\left(\frac{10\log(T/\log(1/\delta))}{\log(2)\cdot s_i\cdot\delta}\right)\\
		\leq &\frac{320}{\log(2)}\cdot\frac{\log(T/\log(1/\delta))\cdot\mathcal{E}_i^{-2}}{T}\\
		&\qquad \times \left[\left(\frac{5}{\log(2)^2}+1\right)\log(1/\delta)+\log\left(\frac{1}{s_i}\right)+\log\log(T/\log(1/\delta))\right]\enspace .
	\end{align*}
	Assume first, that $\mathcal{E}_{i}^{-2}\geq e$. We want to determine $c_2,c_3,c_4>0$ such that 
	\begin{align}
		\frac{1600+320\log(2)^2}{\log(2)^3}\cdot\frac{\log(T/\log(1/\delta))\cdot\mathcal{E}_i^{-2}}{T}\cdot\log(1/\delta)\leq 1/3 \label{eq:ineqc2}\\
		\mathrm{for}\ T\geq c_2\cdot \mathcal{E}_i^{-2}\cdot \log(\mathcal{E}_i^{-2})\cdot \log(1/\delta)\ ;\notag\\\intertext{and}
		\frac{320}{\log(2)}\cdot\frac{\log(T/\log(1/\delta))\cdot\mathcal{E}_i^{-2}}{T}\cdot\log\left(\frac{1}{s_i}\right)\leq 1/3\label{eq:ineqc3}\\
		\mathrm{for}\ T\geq c_3\cdot \mathcal{E}_i^{-2}\cdot \log(\mathcal{E}_i^{-2})\cdot \log\left(\frac{1}{s_i}\right)\cdot\log\left(e\vee\log\left(\frac{1}{s_i}\right) \right)\ ;\notag\\\intertext{and}
		\frac{320}{\log(2)}\cdot\frac{\log(T/\log(1/\delta))\cdot\mathcal{E}_i^{-2}}{T}\cdot\log\log(T/\log(1/\delta))\leq 1/3\label{eq:ineqc4}\\
		\mathrm{for}\ T\geq c_4\cdot \mathcal{E}_i^{-2}\cdot \log(\mathcal{E}_i^{-2})\cdot \log(\log(\mathcal{E}_i^{-2})\vee e) \enspace .\notag
	\end{align}
	
	By the monotonicity of $\log(x)/x$ for $x\geq e$, bounding $T$ in inequality~\eqref{eq:ineqc2} yields
	\begin{align*}
		\frac{\log(T/\log(1/\delta))\cdot\mathcal{E}_i^{-2}}{T}\cdot\log(1/\delta)&\leq \frac{\log(c_2)+\log(\mathcal{E}_i^{-2})+\log\log(\mathcal{E}_i^{-2})}{c_2\cdot\log(\mathcal{E}_i^{-2})}\\
		&\leq \frac{\log(c_2)+2}{c_2}\leq \frac{\log(2)^3}{4800+960\log(2)^2}
	\end{align*}
	for $c_2\geq 2.3\cdot 10^5$.
	
	A similar approach for inequality~\eqref{eq:ineqc3} yields
	\begin{align*}
		\frac{\log(T/\log(1/\delta))\cdot\mathcal{E}_i^{-2}}{T}\cdot\log\left(\frac{1}{s_i}\right)\\
		\leq \frac{\log(c_3)+\log(\mathcal{E}_i^{-2})+\log\log(\mathcal{E}_i^{-2})+\log\log\left(\frac{1}{s_i}\right)+\log\log\left(e\vee\log\left(\frac{1}{s_i}\right)\right)}{c_3\cdot \log(\mathcal{E}_i^{-2})\cdot \log\left(e\vee\log\left(\frac{1}{s_i}\right) \right)}\\
		\leq \frac{\log(c_3)+4}{c_3}\leq\frac{\log(2)}{960}
	\end{align*}
	for $c_3\geq 2\cdot 10^4$.
	
	In the same way, we obtain for inequality~\eqref{eq:ineqc4}, using
	\begin{align}
		\log\log(xy)\leq \log2 (\log x\vee y)\leq \log\log x +2\log y \quad \textrm{for }x>e,\ y\geq2 \label{eq:loglog}
	\end{align}
	 that for $c_4\geq 2$
	\begin{align*}
		\frac{\log(T/\log(1/\delta))\cdot\mathcal{E}_i^{-2}}{T}\cdot\log\log(T/\log(1/\delta))\\
		\leq \frac{\left[\log(c_4)+3\log(\mathcal{E}_i^{-2})\right]\cdot\left[5\log\log(\mathcal{E}_i^{-2})+2\log(c_4)\right]}{c_4\cdot  \log(\mathcal{E}_i^{-2})\cdot \log(\log(\mathcal{E}_i^{-2})\vee e)}\\
		\leq\frac{2\log(c_4)^2+11\log(c_4)+15}{c_4}\leq\frac{\log(2)}{960}
	\end{align*}
	for $c_4\geq7.4\cdot10^5$.
	
	In the case $\mathcal{E}_i^{-2}\leq e$, we want to prove
		\begin{align}
		\frac{(1600+320\log(2)^2)e}{\log(2)^3}\cdot\frac{\log(T/\log(1/\delta))}{T}\cdot\log(1/\delta)\leq 1/3 \label{eq:ineqc2b}\\
		\mathrm{for}\ T\geq c'_2\cdot \log(1/\delta)\ ;\notag\\\intertext{and}
		\frac{320e}{\log(2)}\cdot\frac{\log(T/\log(1/\delta))}{T}\cdot\log\left(\frac{1}{s_i}\right)\leq 1/3\label{eq:ineqc3b}\\
		\mathrm{for}\ T\geq c'_3\cdot  \log\left(\frac{1}{s_i}\right)\cdot\log\left(e\vee\log\left(\frac{1}{s_i}\right) \right)\ ;\notag\\\intertext{and}
		\frac{320e}{\log(2)}\cdot\frac{\log(T/\log(1/\delta))}{T}\cdot\log\log(T/\log(1/\delta))\leq 1/3\label{eq:ineqc4b}\\
		\mathrm{for}\ T\geq c'_4 \enspace .\notag
	\end{align}
	
	Indeed, for inequality~\eqref{eq:ineqc2b} we have
	\begin{align*}
		\frac{\log(T/\log(1/\delta))}{T}\cdot\log(1/\delta)\leq\frac{\log(c_2')}{c_2'}\leq \frac{\log(2)^3}{3e\cdot(1600+320\log(2)^2)}
	\end{align*} 
	for $c_2'\geq 5.7\cdot10^5$.
	
	For inequality~\eqref{eq:ineqc3b} we obtain 
	\begin{align*}
		\frac{\log(T/\log(1/\delta))}{T}\cdot\left(\frac{1}{s_i}\right)\leq\frac{\log(c_3')+2}{c_3'}\leq\frac{\log(2)}{960e}
	\end{align*}
	for $c_3'\geq4.9\cdot10^4$.

	We finally observe that inequality~\eqref{eq:ineqc4b} is true for $c_4'\geq 1.1\cdot10^5$, since $\delta<1/4$.

	All together, we have shown that if $T\geq c\cdot (\log(\mathcal{E}_i^{-2})\vee 1)\cdot\log \left(\frac{\log(\mathcal{E}_i^{-2})\vee 1}{s_i\cdot \delta}\right) \cdot \mathcal{E}_i^{-2}$ for $c\geq \max\{c_1,c_2,c_3,c_4,c_2',c_3',c_4'\}$, then $T_{d^*_i}\geqslant 1$ and $\Delta_i \geqslant 2\beta_{d^*_i}$.
	\end{proof}

\subsection{Estimation of the Jumps}

Algorithm~\ref{alg:emj} takes as input a set $\mathcal{I}$ of candidate intervals, a confidence level $\delta$, a budget $T$, and a target number $N$ of jump estimates. It outputs a subset $\mathcal{J} \subset \mathcal{I}$ of accepted intervals, and a set $\mathcal{G}$ of estimated jump magnitudes for the accepted intervals.

\paragraph{Description of the procedure}
The procedure proceeds in rounds. At round $k$, each active interval $I\in\mathcal{I}\setminus\mathcal{J}$ receives $2^{k-1}$ samples at each endpoint $x_l(I)$ and $x_r(I)$ (Lines~\ref{lin:emj:for-active}--\ref{lin:emj:sample}), allowing to compute empirical means $\hat y_l^{(k)}(I)$ and $\hat y_r^{(k)}(I)$. The empirical jump estimate is
\[
\hat\Delta^{(k)}(I)=\big|\hat y_r^{(k)}(I)-\hat y_l^{(k)}(I)\big|
\]
(Line~\ref{lin:emj:estimate}).

An interval is accepted when this estimate exceeds a confidence threshold
\[
\sqrt{2^{-(k-1)}\log\!\left(\frac{\pi^2\,|\mathcal{I}|\,k^2}{3\delta}\right)}
\]
(Line~\ref{lin:stopmultiplejumps}). When $I$ is accepted at round $k$, its estimate $\hat\Delta^{(k)}(I)$  is added to $\mathcal{G}$ and the interval is added to $\mathcal{J}$ (Line~\ref{lin:emj:accept}), so it is no longer sampled in later rounds. The running budget counter $\tau$ is updated after each batch of samples (Line~\ref{lin:emj:update-budget}), and the while-loop (Line~\ref{lin:emj:while}) stops once either more than $N$ intervals are accepted ($|\mathcal{G}|\geq N$) or the next round would exceed the total budget $T$ ($\tau+|\mathcal{I}\setminus\mathcal{J}|\cdot 2^k > T$).

This classic design yields an adaptive multiple estimation strategy: large jumps are typically accepted early, while smaller jumps receive additional rounds of sampling, under a global budget constraint. This procedure is quite standard, and is very similar to existing elimination algorithms for $\texttt{TopM}$ identification in multi-armed bandits, see e.g.,\ \cite{bubeck2013multiple,kalyanakrishnan2012pac}.

\begin{algorithm}
	\caption{\texttt{EstimateJumps}}\label{alg:emj}
	\begin{algorithmic}[1]
		\Require $\mathcal{I}$ set of intervals, $\delta$ confidence parameter, $T$ budget, $N$ number of desired estimates
		\Ensure $\mathcal{J}$ subset of $\mathcal{I}$, $\mathcal{G}$ estimated jumps for intervals in $\mathcal{J}$
		\State $\hat\Delta(I)\gets 0$ for $I\in\mathcal{I}$, $k\gets 1$, $\tau\gets0$, $\mathcal{G}\gets \emptyset$, $\mathcal{J}\gets\emptyset$ 
		\label{lin:emj:init}
		\While{$|\mathcal{G}|<N$ and $\tau+|\mathcal{I}\setminus\mathcal{J}|\cdot 2^k\leq T$} \Comment{Sample until $N$ estimates or budget exhausted}
		\label{lin:emj:while}
		\For{$[x_l(I),x_r(I)]=I\in\mathcal{I}\setminus\mathcal{J}$} \Comment{Sample active intervals}
		\label{lin:emj:for-active}
		\State sample $2^{k-1}$ times from $x_l(I)$ and $x_r(I)$, store means as $\hat y_l^{(k)}(I)$ and $\hat y_r^{(k)}(I)$ 
		\label{lin:emj:sample}
		\State $\hat\Delta^{(k)}(I)\gets |\hat y_r^{(k)}(I)-\hat y^{(k)}_l(I)| $\Comment{Estimate jump at endpoints}\label{lin:emj:estimate}
		\If{$\hat \Delta^{(k)}(I)\geq\sqrt{2^{-(k-5)}\log\left(\frac{\pi^2\cdot|\mathcal{I}|\cdot k^2}{3\delta}\right)}$} \Comment{Accept if estimate exceeds threshold}
		\label{lin:stopmultiplejumps}
		\State $\mathcal{G}\gets \mathcal{G}\cup\{\hat\Delta^{(k)}(I)\}$, $\mathcal{J}\gets \mathcal{J}\cup\{I\}$ \Comment{Add to accepted set}
		\label{lin:emj:accept}
		\EndIf
		\State $\tau\gets \tau+2^k$ \Comment{Update budget}
		\label{lin:emj:update-budget}
		\EndFor
		\State $k\gets k+1$ \Comment{Double samples for next round}
		\label{lin:emj:next-round}
		\EndWhile
	\end{algorithmic}
\end{algorithm}

\begin{lemma}\label{lem:multiplejumps}.
	Let $M=|\mathcal{I}|$ and number the intervals $I\in \mathcal{I}$ in a way, such that if $I=[l(I),r(I)]$ and $\Delta(I)=f(r(I))-f(l(I))$, then $|\Delta(I_{1})|\geq|\Delta(I_{2})|\geq\dots|\Delta(I_{M})|$. Let $\delta<1/2$ and consider the output of Algorithm~\ref{alg:emj} $\mathcal{J}=\{J_{1},\dots,J_{N'}\}$ and $\mathcal{G}=\left\{\hat\Delta\left(J_1\right),\dots,\hat\Delta\left(J_{N'}\right)\right\}$ such that $\hat\Delta\left(J_1\right)\geq \dots\geq \hat\Delta\left(J_{N'}\right)$. 
    
	Then there exists an event of probability at least $1-\delta$ such that, for every $v=1,\dots,N'$,
	\begin{align*}
	0<\frac{1}{2}\hat\Delta(J_v)\leq |\Delta(I_{v})|\leq \frac{3}{2}\hat\Delta(J_v)\enspace .
	\end{align*}
		and
		\begin{align*}
		0<\frac{1}{2}\cdot \hat{\Delta}\left(J_v\right)\leq |\Delta(J_v)|\leq\frac{3}{2}\cdot\hat{\Delta}\left(J_v\right)\enspace .
	\end{align*}
	Additionally, if 
	\begin{align*}
		T\geq& c \cdot\Bigg[ \sum_{v=1}^N\frac{1}{\Delta(I_{v})^2}\cdot \log\left(\frac{M\cdot(\log(1/\Delta(I_{v})^2)\vee 1)}{\delta}\right)\\
		&+(M-N)\cdot\frac{1}{\Delta(I_{N})^2}\cdot \log\left(\frac{M\cdot(\log(1/\Delta(I_{N})^2)\vee 1)}{\delta}\right)\Bigg]\enspace ,
	\end{align*}
	for $c\geq 223$, then on the same event $N'\geq N$.
\end{lemma}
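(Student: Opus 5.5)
We will work on a single good event $\xi$ built by a union bound over intervals and rounds, and then derive both the accuracy guarantees and the budget condition on $\xi$.

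\textbf{Step 1: the good event.} For interval $I$ sampled at round $k$, the estimate $\hat\Delta^{(k)}(I)$ is the difference of two means of $2^{k-1}$ $1$-subGaussian samples. Hence $\hat y_r^{(k)}(I)-\hat y_l^{(k)}(I)-\Delta(I)$ is $2^{-(k-2)/2}$-subGaussian, and
\[
\mathbb P\Bigl(\bigl|\hat\Delta^{(k)}(I)-|\Delta(I)|\bigr|>\varepsilon_k\Bigr)\le 2\exp\bigl(-2^{k-3}\varepsilon_k^2\bigr).
\]
We choose
\[
\varepsilon_k=\sqrt{2^{-(k-3)}\log\bigl(\pi^2 M k^2/(3\delta)\bigr)},
\]
adjusting the constant in the logarithm so that each failure probability is at most $6\delta/(\pi^2 M k^2)$. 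Summing over $I\in\mathcal I$ and $k\ge1$ gives total failure probability at most $\delta$. Call the complement $\xi$. The acceptance threshold is $\theta_k=\sqrt{2^{-(k-5)}\log(\cdot)}=2\varepsilon_k$.

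\textbf{Step 2: accuracy.} On $\xi$, if $I$ is accepted at round $k$, then $\hat\Delta^{(k)}(I)\ge 2\varepsilon_k$. Therefore
\[
\bigl||\Delta(I)|-\hat\Delta(I)\bigr|\le\varepsilon_k\le\tfrac12\hat\Delta(I).
\]
This gives $\tfrac12\hat\Delta(J_v)\le|\Delta(J_v)|\le\tfrac32\hat\Delta(J_v)$, and also $|\Delta(J_v)|>0$.

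For the rank-matched statement we compare with $I_v$, the interval of $v$-th largest true jump, and we need an ordering argument. Every unaccepted interval $I'$ at round $k$ satisfies $|\Delta(I')|\le\hat\Delta^{(k)}(I')+\varepsilon_k<3\varepsilon_k$. Every interval accepted at round $k$ satisfies $|\Delta|\ge\varepsilon_k$. Intervals accepted earlier were accepted at larger $\varepsilon$, and since $\varepsilon_k$ is decreasing, their jumps dominate in the sense of these bounds. So the $v$-th largest estimate is sandwiched: at least $v$ intervals have true jump $\ge \tfrac12\hat\Delta(J_v)$, namely $J_1,\dots,J_v$. Hence $|\Delta(I_v)|\ge\tfrac12\hat\Delta(J_v)$.

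For the other direction, $|\Delta(I_v)|\le\tfrac32\hat\Delta(J_v)$, we distinguish two cases. If $I_v$ is among $J_1,\dots,J_v$, the bound follows from the accuracy of the accepted estimates. Otherwise $I_v$ was either accepted later, or never accepted by the time $J_v$ was accepted at round $k_v$. Then $|\Delta(I_v)|\le 3\varepsilon_{k_v}\le \tfrac32\hat\Delta(J_v)$, since $\hat\Delta(J_v)\ge2\varepsilon_{k_v}$. If $I_v$ was instead accepted at the same round but with a smaller estimate, we use the ordering of estimates within the round together with the accuracy bounds.

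\textbf{Main obstacle.} This pairing between the $v$-th true jump and the $v$-th estimate is the delicate part. The constants $1/2$ and $3/2$ hinge on the factor $2$ between the threshold and $\varepsilon_k$. I would first try to formalize the argument by counting how many intervals have jump at least, respectively at most, a given level, rather than matching intervals individually.

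\textbf{Step 3: budget sufficiency.} On $\xi$, interval $I_v$ is surely accepted once $|\Delta(I_v)|\ge3\varepsilon_k$. This happens at the round
\[
k_v\approx\log_2\Bigl(\Delta(I_v)^{-2}\log\bigl(M\log(\Delta(I_v)^{-2})/\delta\bigr)\Bigr)+c.
\]
Here the $\log k^2$ term is handled by $k_v\lesssim\log(\Delta^{-2})$ plus lower-order terms, via the same $\log x/x$ monotonicity used in Lemma~\ref{lem:di:technical}. The samples spent on $I_v$ up to round $k_v$ are $\sum_{k\le k_v}2^k\le 2^{k_v+1}$.

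The algorithm runs until $N$ intervals are accepted, which occurs no later than $k_N$ because $I_1,\dots,I_N$ are all accepted by then. Up to that point, each $I_v$ with $v\le N$ costs at most $2^{k_v+1}$, and each of the other $M-N$ intervals costs at most $2^{k_N+1}$. The resulting total is exactly the stated bound on $T$. Tracking constants yields $c\ge223$. So the budget check in the while condition never triggers before $|\mathcal G|\ge N$, which gives $N'\ge N$.
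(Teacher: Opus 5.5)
Your overall route is the paper's: the same good event (Hoeffding on a difference of two means of $2^{k-1}$ samples, union bound with weights $6\delta/(\pi^2Mk^2)$), the observation that the acceptance threshold equals $2\varepsilon_k$, accuracy of accepted estimates, and the same budget accounting in Step 3. Your lower bound $|\Delta(I_v)|\ge\tfrac12\hat\Delta(J_v)$ is correct and cleaner than the paper's case split. It uses the fact that $J_1,\dots,J_v$ are $v$ distinct intervals, each with true jump at least $\tfrac12\hat\Delta(J_v)$.

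The gap is in the upper bound $|\Delta(I_v)|\le\tfrac32\hat\Delta(J_v)$. Your first case, ``$I_v$ is among $J_1,\dots,J_v$'', does not follow from accuracy when $I_v=J_u$ with $u<v$. Accuracy only gives $|\Delta(I_v)|\le\tfrac32\hat\Delta(J_u)$, and $\hat\Delta(J_u)\ge\hat\Delta(J_v)$ points the wrong way.

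The missing idea is a pigeonhole over the top-$v$ intervals, rather than an analysis of $I_v$ itself. Since $\{J_1,\dots,J_{v-1}\}$ has only $v-1$ elements, some $I_w$ with $w\le v$ lies outside it. This is exactly the paper's choice of $w\le v$ with $I_w\notin\mathcal{J}$ or $\hat\Delta(I_w)\le\hat\Delta(J_v)$. There are two cases:
\begin{itemize}
\item $I_w=J_u$ with $u\ge v$. Then accuracy gives $|\Delta(I_w)|\le\tfrac32\hat\Delta(J_u)\le\tfrac32\hat\Delta(J_v)$.
\item $I_w$ was active and unaccepted at the round $k_v$ in which $J_v$ was accepted. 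Then $|\Delta(I_w)|<3\varepsilon_{k_v}\le\tfrac32\hat\Delta(J_v)$.
\end{itemize}
In both cases $|\Delta(I_v)|\le|\Delta(I_w)|$ finishes the argument, and no separate ``same round'' case is needed.

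The counting version you suggest also works. An interval with $|\Delta(I)|>\tfrac32\hat\Delta(J_v)$ cannot be active and unaccepted at round $k_v$. So it is accepted at or before $k_v$, with $\hat\Delta(I)\ge\tfrac23|\Delta(I)|>\hat\Delta(J_v)$, and hence is one of $J_1,\dots,J_{v-1}$. At most $v-1$ intervals can qualify, which forces $|\Delta(I_v)|\le\tfrac32\hat\Delta(J_v)$.
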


\begin{proof}[Proof of Lemma~\ref{lem:multiplejumps}]
Let $k\geqslant 1$ be some loop iteration of the algorithm. We define the quantity $\beta_k=\sqrt{2^{-(k-3)}\log\left(\frac{\pi^2\cdot M\cdot k^2}{3\delta}\right)}$ for twice the threshold in Line~\ref{lin:stopmultiplejumps}.

For a given interval $I$, the empirical means $\hat y_l^{(k)}(I)$ and $\hat y_r^{(k)}(I)$ are computed from $2^{k-1}$ independent samples from $f(l(I))$ and $f(r(I))$, respectively (Line~\ref{lin:emj:estimate}). By Hoeffding's inequality, using the sub-Gaussian noise assumption, for any fixed interval $I$ and iteration $k$, we have
\begin{align*}
\mathbb{P}\left(\left|\hat\Delta^{(k)}(I)-\Delta(I)\right|>\beta_k\right)\leq 2\exp\left(-\frac{2^{k-2}\cdot\beta_k^2}{2}\right)=\frac{6\delta}{\pi^2\cdot M\cdot k^2}\enspace .
\end{align*}

Since $\sum_{k\geq 1}\frac{1}{k^2}=\frac{\pi^2}{6}$, a union bound yields on an event $\xi$ with $\mathbb{P}(\xi)\geq 1-\delta$, on which
	\begin{align}
		\left|\hat\Delta^{(k)}(I)-\Delta(I)\right|\leq \beta_k\enspace \label{eq:HoeffdingMultipleJumps}\enspace 
	\end{align}
	for all iterations $k$ and all intervals $I$ considered still active at iteration $k$.

	First, we want to show that on $\xi$, for every accepted interval $J_v\in\mathcal{J}$, we have $\frac{1}{2}\hat\Delta(J_v)\leq |\Delta(J_v)|\leq \frac{3}{2}\hat\Delta(J_v)$. Note that if $J_v\in\mathcal{J}$, then there exists an iteration $k^*(J_v)$ where the algorithm stops considering the interval and accepts it. In particular, at iteration $k^*(J_v)$ the condition in line~\ref{lin:stopmultiplejumps} is satisfied, so $\hat\Delta(J_v)=\hat\Delta^{(k^*(J_v))}(J_v)$.
	By inequality~\eqref{eq:HoeffdingMultipleJumps}, we know from line~\ref{lin:stopmultiplejumps} 
	\begin{align}
		\hat\Delta(J_v)\geq 2\beta_{k^*(J_v)} \notag\\
		\Rightarrow \left|\hat\Delta(J_v)-\left|\Delta(J_v)\right|\right|\leq \frac{1}{2}\hat\Delta(J_v)\enspace . \label{Eq:sameinterval}
	\end{align} 

	This proves the second inequality of Lemma~\ref{lem:multiplejumps} and if $|\Delta(J_v)|=|\Delta(I_{v})|$, this coincides with the first inequality.
	
	So assume that $|\Delta(J_v)|<|\Delta(I_{v})|$. Then by inequality~\eqref{Eq:sameinterval}, we know that
	\begin{align*}
		\frac{1}{2}\hat\Delta(J_v)\leq|\Delta(J_v)|<|\Delta(I_{v})|\enspace .
	\end{align*} 
	On the other hand, the condition $|\Delta(J_v)|<|\Delta(I_{v})|$ means that there exists an $w\leq v$ such that either $I_{w}\notin \mathcal{J}$ or $\hat\Delta(I_{w})\leq\hat \Delta(J_v)$. If $I_{w}\notin\mathcal{J}$ this means that the condition in line~\ref{lin:stopmultiplejumps} is false for any iteration, in particular at iteration $k^*(J_v)$. By the concentration inequality~\eqref{eq:HoeffdingMultipleJumps}, we obtain
	\begin{align*}
		|\Delta(I_{v})|\leq|\Delta(I_{w})|\leq |\hat \Delta(I_{w})^{(k^*(J_v))}- \Delta(I_{w})|+\hat \Delta(I_{w})^{(k^*(J_v))}<3\beta_{k^*(J_v)}\leq\frac{3}{2}\hat{\Delta}(J_v)\enspace .
	\end{align*}  
	If $I_{w}\in \mathcal{J}$ but $\hat\Delta(I_{w})\leq\hat\Delta(J_v)$, then there exists an iteration $k^*(I_{w})$ where Algorithm~\ref{alg:emj} stops considering the interval and the condition
	\begin{align*}
		\hat\Delta(I_{w})\geq 2\beta_{k^*(I_{w})}
	\end{align*}
	holds. If $k^*(I_{w})>k^*(J_v)$, we can use the same argumentation as for the case $I_{w}\notin \mathcal{J}$, so assume without loss of generality that $k^*(I_{w})\leq k^*(J_v)$. Then again by inequality~\eqref{eq:HoeffdingMultipleJumps} we know that
	\begin{align*}
		|\Delta(I_{v})|&\leq|\Delta(I_{w})|\leq |\Delta(I_{w})-\hat\Delta(I_{w})|+\hat\Delta(I_{w})\\
		&\leq \beta_{k^*(I_{w})}+\hat{\Delta}(I_{w})\leq\frac{3}{2}\hat{\Delta}(I_{w})\leq \frac{3}{2}\hat\Delta(J_v)\enspace ,
	\end{align*} 
	which proves the claimed inequality.

  We can prove the claim in the case $|\Delta(J_v)|>|\Delta(I_{v})|$ with the same arguments. Note that in this case, there is an $w\geq v$ with $\hat\Delta(I_{w})\geq \hat{\Delta}(J_v)$.

	Now, let us prove the second part of the Lemma. Consider some interval $I\in\mathcal{I}$.  Assume that $
		3\beta_k\leq |\Delta(I)|$. 
	Together with inequality~\eqref{eq:HoeffdingMultipleJumps} this implies $\hat{\Delta}(I)\geq 2\beta_k$. This means, that if we reach the smallest $k$, such that 
	\begin{align*}
		\frac{32}{9\cdot2^k}\log\left(\frac{\pi^2\cdot M\cdot k^2}{3\delta}\right)\leq \Delta(I)^2\enspace ,
	\end{align*}
	before the algorithm terminates, we will have $I\in\mathcal{J}$ and $I$ will not be considered in later sampling steps. We have assumed, that $\delta <1/2$, so we can rewrite
	\begin{align*}
		\log\left(\frac{\pi^2\cdot M\cdot k^2}{3\delta}\right)\leq 2\log_2(k)+3\log\left(\frac{M}{\delta}\right)\enspace .
	\end{align*}
	If we choose 
	\begin{align*}
		k =\left\lceil \log_2\left( c\cdot\frac{\log\left(\frac{M\cdot(\log(1/\Delta(I)^2)\vee 1)}{\delta}\right)}{\Delta(I)^2}\right)\right\rceil
	\end{align*}
	for some $c>0$ large enough, we obtain with inequality~\eqref{eq:loglog} and $\log_2(\lceil\log_2(x)\rceil)\leq 1+ \frac{1}{\log(2)}\log(\log(x)\vee1)$ that
	\begin{align*}
		&\frac{32}{9\cdot\Delta(I)^2}\cdot\frac{\log\left(\frac{\pi^2\cdot M\cdot k^2}{3\delta}\right)}{2^k}\\
        &\leq\frac{32}{9}\cdot\frac{4\log\left(\frac{M}{\delta}\right)+\frac{2}{\log(2)}\cdot\log\left(\log\left(c\cdot\frac{\log\left(\frac{M\cdot(\log(1/\Delta(I)^2)\vee 1)}{\delta}\right)}{\Delta(I)^2}\right)\vee 1\right)}{c\cdot \left(\log\left(\frac{M}{\delta}\right)+\log\left(\log(1/\Delta(I)^2)\vee 1\right)\right)} \\
		&\leq \cdot\frac{32(4\log(2)+4)\log\left(\frac{M}{\delta}\right)+2\log(\log(1/\Delta(I)^2)\vee 1)+4\log(c)}{9\log(2)\cdot c\cdot \left(\log\left(\frac{M}{\delta}\right)+\log\left(\log(1/\Delta(I)^2)\vee 1\right)\right)}\\
		&\leq \frac{128(\log(2)+1)}{9\log(2)}\frac{1+\log(c)}{c}\leq 1 
	\end{align*}
	for $c\geq 223$. 
	
	So for sampling the arms corresponding to interval $I$, we use a budget of at most
	\begin{align*}
		4c\cdot\frac{\log\left(\frac{M\cdot(\log(1/\Delta(I)^2)\vee 1)}{\delta}\right)}{\Delta(I)^2}
	\end{align*}
	and spend the remaining budget on the intervals in $\mathcal{I}\setminus\mathcal{J}$. The algorithm stops, when we added $N$ arms to $\mathcal{J}$, which yields the claimed bound of the total budget.
	\end{proof}
    
    \subsection{Single Change Point Localization}

	Once we have detected an interval $I$ containing a change point, we want to localize it, at a precision of $\eta$. We adapt for that  the \texttt{Sequential halving with backtracking} (\texttt{SHB}) algorithm introduced in \cite{lazzaro2025fixedbudget} as their Algorithm~2, and recall their optimal guarantees for single change point localization, with fixed budget.

	\paragraph{Description of the procedure}

	The algorithm takes as input a compact interval $I=[l(I),r(I)]$, a budget $T$, and a precision parameter $\eta>0$. It outputs a potential change point $c$.

	The algorithm maintains a tuple of five point arms $(l(I), l_d, c_d, r_d, r(I))$, where the boundary arms $l(I)$ and $r(I)$ are fixed throughout. At each round $d$, it collects $\tau$ samples from each arm and compares the resulting empirical means to decide whether to zoom into the left sub-interval $(l_d, c_d)$, the right sub-interval $(c_d, r_d)$. It may also \emph{backtrack} to the parent window when the evidence suggests the change point lies outside the current candidate region~\cite{lazzaro2025fixedbudget}. The procedure runs for $d_{\max} = \lceil 6\log(|I|/\eta) \rceil$ rounds, distributing the budget evenly across rounds and arms.

	The only (and minor) difference with the original \texttt{SHB} algorithm is that we stop the procedure early if $|I|\leq 2\eta$, and return the midpoint of the interval as the change point estimate. This is because in this case, we are already guaranteed to be within $\eta$ of the true change point, so there is no need to further zoom in. Moreover, we adapt the procedure to an interval of size $|I|$ instead of $[0,1]$, which only requires a rescaling of the arms and the precision parameter $\eta$.

    \begin{algorithm}
\caption{\texttt{SHB (Sequential Halving with Backtracking)}}\label{alg:shb}
		\begin{algorithmic}[1]
			\Require $I=[l(I),r(I)]$ compact interval, $T$ budget, $\eta>0$ precision parameter
            \Ensure potential change point $c$
            \State $c_1\gets \frac{l(I)+r(I)}{2}$, $l_1\gets l(I)$, $r_1\gets r(I)$
            \If{$|I|\leq 2\eta$} 
            \State \Return $c_1$ \Comment{Return interval midpoint if target precision already achieved}
            \Else 
            \State $d_{\max}\gets\lceil6\log(|I|/\eta)\rceil$, $\tau\gets \lfloor T/5d_{\max}\rfloor $, $\mathcal{A}^1\gets (l(I),l_1,c_1,r_1,r(I))$ 
            \For{$d=1,\dots,d_{\max}$} 
            \State sample $\tau$ times from arms in $\mathcal{A}^d$, store means as $\hat y_{l(I)}^{(d)}$, $\hat y_{l_d}^{(d)}$, $\hat y_{c_d}^{(d)}$, $\hat y_{r_d}^{(d)}$, $\hat y_{r(I)}^{(d)}$ 
            \If{{\tiny $\left|\frac{\hat y_{l(I)}^{(d)}+\hat y_{l_d}^{(d)}}{2}-\frac{\hat y_{r_d}^{(d)}+\hat y_{r(I)}^{(d)}}{2}\right|\geq \max\left(\left|\frac{\hat y_{l(I)}^{(d)}+\hat y_{l_d}^{(d)}+\hat y_{r_d}^{(d)}}{3}-\hat y_{r(I)}^{(d)}\right|,\left|\hat y_{l(I)}^{(d)}-\frac{\hat y_{l_d}^{(d)}+\hat y_{r_d}^{(d)}+\hat y_{r(I)}^{(d)}}{3}\right|\right)$}}\Comment{If Jump seems outside middle region}
            \State $\mathcal{A}^{d+1}\gets P(\mathcal{A}^d)$ \Comment{Backtrack: retreat to parent region}
            \ElsIf{$|\hat y_{l_d}^{(d)}-\hat y_{c_d}^{(d)}|\leq |\hat y_{c_d}^{(d)}-\hat y_{r_d}^{(d)}|$} \Comment{Jump seems closer to right}
            \State $\mathcal{A}^{d+1}\gets R(\mathcal{A}^d)=(l(I),c_d,(c_d+r_d)/2,r_d,r(I))$ \Comment{Zoom right}
            \Else \Comment{Jump seems closer to left}
            \State $\mathcal{A}^{d+1}\gets L(\mathcal{A}^d)=(l(I),l_d,(l_d+c_d)/2,c_d,r(I))$ \Comment{Zoom left}
            \EndIf
            \EndFor
            \State \Return $c_{d+1}$ \Comment{Return final midpoint}
            \EndIf
		\end{algorithmic}
	\end{algorithm}

    \begin{lemma}[\cite{lazzaro2025fixedbudget}]\label{lem:shb}
        Assume that $x_{i-1}^*\leq l(I)<x_i^*<r(I)<x_{i+1}^*$. If $|I|\leq2\eta$, Algorithm~\ref{alg:shb} will return $c=\frac{l(I)+r(I)}{2}$ with $|x_i^*-c|\leq \eta$. 
        
        Else, for a budget 
        \begin{align*}
            T\geq \frac{600}{\Delta_i^2}\left(\log\left(\frac{1}{\delta}\right)+13\log\left(\frac{|I|}{4\eta}\right)\right)
        \end{align*}
        Algorithm~\ref{alg:shb} will return $c$ with $|x_i^*-c|\leq\eta$ with probability at least $1-\delta$.
    \end{lemma}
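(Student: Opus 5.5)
The plan is to reproduce the analysis of \cite{lazzaro2025fixedbudget} for their Algorithm~2 after an affine rescaling. This reduces the problem to the unit-interval setting, so we do not rebuild the argument from scratch.

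The degenerate case $|I|\leq 2\eta$ is immediate. Since $l(I)<x_i^*<r(I)$, the midpoint $c=(l(I)+r(I))/2$ satisfies $|x_i^*-c|\leq |I|/2\leq\eta$ deterministically.

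Now assume $|I|>2\eta$. We map $I$ to $[0,1]$ via $u=(x-l(I))/|I|$. The hypothesis $x_{i-1}^*\leq l(I)<x_i^*<r(I)<x_{i+1}^*$ ensures that the rescaled function restricted to $[0,1]$ has exactly one change point, at $u^*=(x_i^*-l(I))/|I|$, with jump $\Delta_i$ and the same 1-subGaussian noise. The precision becomes $\eta'=\eta/|I|<1/2$. Algorithm~\ref{alg:shb} run on $I$ is then exactly \texttt{SHB} on $[0,1]$ with precision $\eta'$: it uses $d_{\max}=\lceil 6\log(1/\eta')\rceil$ rounds and $\tau=\lfloor T/(5d_{\max})\rfloor$ samples per arm per round. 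The output satisfies $|c-x_i^*|\leq\eta$ iff $|u_c-u^*|\leq\eta'$. We can therefore invoke the fixed-budget guarantee of \cite{lazzaro2025fixedbudget}, which has the form
\[
\mathbb{P}(|u_c-u^*|>\eta')\leq C\log(1/\eta')\exp\bigl(-c\,T\Delta_i^2/\log(1/\eta')\bigr),
\]
or an equivalent error bound. It then remains to check that the stated budget $T\geq 600\Delta_i^{-2}(\log(1/\delta)+13\log(|I|/(4\eta)))$ makes the right-hand side at most $\delta$.

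To make this check transparent, we sketch why their bound holds. Call a round \emph{good} if every empirical mean among the five arms is within $|\Delta_i|/4$ of its mean. Each round has $\tau$ samples per arm, so by Hoeffding and a union bound over the $5$ arms, a round is bad with probability at most $10\exp(-\tau\Delta_i^2/32)$.

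On a good round, the decision moves the window correctly, in the sense of a potential function. Let the potential be the depth of the current window if it contains $u^*$, and minus the distance (in tree steps) back to the ancestor containing $u^*$ otherwise. The backtracking test compares the two-versus-two split against the best one-versus-three split. On a good round, this test triggers exactly when the change point lies outside $[l_d,r_d]$, up to the constant margin; otherwise the left/right comparison zooms toward $u^*$. Hence each good round increases the potential by $1$, and each bad round decreases it by at most $1$.

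After $d_{\max}$ rounds, the output is correct once the number of good rounds minus the number of bad rounds exceeds $\log_2(1/\eta')$. Since $d_{\max}\approx 6\log(1/\eta')\approx 4.2\log_2(1/\eta')$, it suffices that fewer than about a third of the rounds are bad. A binomial tail bound gives probability at most $\binom{d_{\max}}{d_{\max}/3}p^{d_{\max}/3}\leq (3e\,p)^{d_{\max}/3}$ with $p=10e^{-\tau\Delta_i^2/32}$.

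Plugging in $\tau\geq T/(5d_{\max})-1$ and the budget assumption turns $\tau\Delta_i^2/32$ into roughly $(600/160)(\log(1/\delta)+13\log(1/(4\eta')))/d_{\max}$. Raising to the power $d_{\max}/3$ cancels the $d_{\max}$ factor. The $13\log(1/(4\eta'))$ term then absorbs both the combinatorial factor $(30e)^{d_{\max}/3}$ and the floor loss, leaving at most $\delta$.

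The main obstacle is the per-round correctness claim for the backtracking test on good rounds. This requires a case analysis of the configuration of the five arms relative to $u^*$: $u^*$ in $[l_d,c_d]$, in $[c_d,r_d]$, left of $l_d$, or right of $r_d$. In each case one must verify that the two-versus-two contrast is compared correctly against the one-versus-three contrasts, with margin $|\Delta_i|/2$. I would import this case analysis verbatim from \cite{lazzaro2025fixedbudget} rather than redo it, and only track the constants $600$ and $13$ through the rescaling.
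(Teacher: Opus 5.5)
Your reduction matches the paper's proof. The case $|I|\le 2\eta$ is immediate, and otherwise the paper simply invokes Theorem~1 of \cite{lazzaro2025fixedbudget}. Your affine rescaling with $\eta'=\eta/|I|$ (consistent with $d_{\max}=\lceil 6\log(|I|/\eta)\rceil$) is exactly the adaptation the paper mentions when describing \texttt{SHB}.

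\textbf{The displayed bound has the wrong shape.} The form $C\log(1/\eta')\exp(-cT\Delta_i^2/\log(1/\eta'))$ is not equivalent to what you need. Making it at most $\delta$ requires $T\gtrsim\Delta_i^{-2}\log(1/\eta')\log(\log(1/\eta')/\delta)$, which is a product of the two logarithms. The statement instead has the sum $\log(1/\delta)+13\log(|I|/(4\eta))$. The multiplicative form is what you get when every round must be good. The additive form $\exp(-cT\Delta_i^2+C\log(1/\eta'))$ is what backtracking buys by tolerating a constant fraction of bad rounds. It is also what your own sketch produces, since $\tau d_{\max}\approx T/5$. So that display should be dropped.

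\textbf{Two details in the sketch are off.}
\begin{itemize}
\item A per-arm margin of $|\Delta_i|/4$ does not make the backtracking test reliable. When $u^*\in(l_d,r_d]$, the two-versus-two contrast equals $|\Delta_i|$ and the one-versus-three contrasts equal $2|\Delta_i|/3$. Per-arm errors of size $\epsilon$ can shift their difference by $5\epsilon/3$, so you need $\epsilon<|\Delta_i|/5$. With that margin, your union over arms plus one-third counting gives roughly $\exp(-T\Delta_i^2/750)$, which the constant $600$ does not cover. The constants therefore have to come from the cited theorem.
\item The potential must be the depth of the deepest ancestor containing $u^*$, minus the distance to that ancestor. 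As you define it, a wrong zoom at depth $k$ drops the potential from $k$ to $-1$, not by $1$.
\end{itemize}

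\textbf{The statement itself fails in a corner.} The paper's one-line citation also passes over this. For $2\eta<|I|<4\eta$, the bracket $\log(1/\delta)+13\log(|I|/(4\eta))$ can be negative; for example, $|I|=2.1\eta$ and $\delta=10^{-2}$. The budget condition then admits $T$ with $\tau=\lfloor T/(5d_{\max})\rfloor=0$. In that case the output cannot depend on $x_i^*$, so it must fail for $x_i^*$ near one endpoint of $I$. Any complete check needs an extra restriction there, such as $|I|\ge 4\eta$ or a positive lower bound on $T$.
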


	\begin{proof}
	The first part of the lemma is immediate from the definition of the algorithm. For the second part, we can directly apply Theorem~1 from \cite{lazzaro2025fixedbudget}, which states that under the given budget condition, the algorithm will return an estimate $c$ such that $|x_i^*-c|\leq\eta$ with probability at least $1-\delta$. 
	\end{proof}

    \subsection{Verification of the Localized Change Points}

\paragraph{Description of the procedure.}
\texttt{VerifyCP} is given in Algorithm~\ref{alg:vcp}, and is a very standard two-sample test.
It takes as input a candidate change point location $x$, an interval $I$ containing $x$, a confidence parameter $\delta$, a precision parameter $\eta$, and a budget $T$. It performs a simple two-sample test to check for the presence of a change point in the neighborhood of $x$, given by the interval $[x_-, x_+]$ where $x_-=(x-\eta)\vee \min I$ and $x_+=(x+\eta)\wedge \max I$, which corresponds to an $\eta$-neighborhood around $x$ included in $I$. As we see in Lemma~\ref{lem:vcp}, if there is no actual jump ($f(x_+)=f(x_-)$), the algorithm correctly returns \texttt{False} with high probability $1-\delta$. Conversely, if there is a jump of size $\Delta=|f(x_+)-f(x_-)|$, and the budget $T$ is sufficiently large (scaling as $1/\Delta^2$), the algorithm will correctly return \texttt{True} with high probability. 

\begin{algorithm}
	\caption{\texttt{VerifyCP}}\label{alg:vcp}
	\begin{algorithmic}
    \Require $x_-$ left point, $x_+$ right point, $\delta$ confidence parameter, $T$ budget
	\Ensure \texttt{detection} (boolean)
	\State sample $\lfloor T/2\rfloor$ times from $x_-$ and $x_+$, store averages as $\hat y_-$ and $\hat y_+$
	\If{$T\geq 2$ and $|\hat y_+-\hat y_-|>\sqrt{\frac{16}{T}\log(2/\delta)}$}
	\State $\texttt{detection}\gets\texttt{True}$
	\Else
	\State $\texttt{detection}\gets\texttt{False}$
	\EndIf
	\end{algorithmic}
\end{algorithm}

\begin{lemma}\label{lem:vcp}
    Let $\delta<1/2$ and consider the output of Algorithm~\ref{alg:vcp} with input $x_-$, $x_+$, $\delta$,  and $T$. Let $\Delta :=f(x_+)-f(x_-)$. 
	\begin{enumerate}
		\item If $\Delta=0$, with probability at least $1-\delta$ Algorithm~\ref{alg:vcp} returns \texttt{False}.
		\item If $\Delta\neq 0$ and $T\geq 64\cdot\frac{\log\left(\frac{2}{\delta}\right)}{\Delta^2}$, Algorithm~\ref{alg:vcp} returns \texttt{True} with probability at least $1-\delta$.
	\end{enumerate}
\end{lemma}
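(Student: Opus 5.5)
Both parts of Lemma~\ref{lem:vcp} follow from a single sub-Gaussian concentration bound on the difference of the two empirical means, compared against the threshold $\beta:=\sqrt{\frac{16}{T}\log(2/\delta)}$ used in Algorithm~\ref{alg:vcp}. Write $n=\lfloor T/2\rfloor$; when $T\geq 2$ we have $n\geq 1$ and $n\geq T/4$ (since $\lfloor x\rfloor\geq x/2$ for $x\geq 1$). The statistic is $\hat y_+-\hat y_-=\Delta+Z$, where $Z$ is the difference of two independent averages of $n$ i.i.d.\ $1$-sub-Gaussian variables. Hence $Z$ is $\sqrt{2/n}$-sub-Gaussian, and
\[
\mathbb{P}(|Z|\geq\varepsilon)\leq 2\exp\!\left(-\frac{n\varepsilon^2}{4}\right).
\]

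\textbf{Part 1 ($\Delta=0$).} If $T<2$, the algorithm returns \texttt{False} deterministically, so there is nothing to prove. If $T\geq 2$, a false detection requires $|Z|>\beta$. Plugging $\varepsilon=\beta$ and $n\geq T/4$ into the bound gives
\[
2\exp\!\left(-\frac{n\beta^2}{4}\right)\leq 2\exp\!\left(-\frac{T}{16}\cdot\frac{16}{T}\log(2/\delta)\right)=\delta .
\]
This is exactly what the constant $16$ in the threshold is calibrated for.

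\textbf{Part 2 ($\Delta\neq0$, $T\geq 64\log(2/\delta)/\Delta^2$).} The argument has three steps.
\begin{itemize}
\item[(a)] Check $T\geq 2$. Since $|\Delta|\leq 1$ and $\delta<1/2$, we get $T\geq 64\log 4>2$, so the test is actually performed.
\item[(b)] Show $\beta\leq|\Delta|/2$. We have $\beta^2=16\log(2/\delta)/T\leq\Delta^2/4$ by the budget assumption.
\item[(c)] Apply the reverse triangle inequality. A missed detection, $|\Delta+Z|\leq\beta$, forces $|Z|\geq|\Delta|-\beta\geq\beta$. By the same tail bound as in Part 1, this has probability at most $\delta$.
\end{itemize}

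The only delicate point is the bookkeeping of constants. The rounding $\lfloor T/2\rfloor\geq T/4$ loses a factor of $2$, and the variance of a difference of two means contributes another factor of $2$. These two factors are exactly what the $16$ in the threshold absorbs, while the $64$ in the budget condition is what makes step (b) work. Slightly tighter rounding would leave slack, but no step is expected to be a genuine obstacle.
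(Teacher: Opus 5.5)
Your proof is correct and follows the paper's argument. Both use a sub-Gaussian tail bound for $\hat y_+-\hat y_-$ at the threshold $\beta=\sqrt{16\log(2/\delta)/T}$, and for Part~2 both use $|\Delta|\geq 2\beta$ together with the reverse triangle inequality. Your version also makes explicit the rounding $\lfloor T/2\rfloor\geq T/4$ and the $T\geq 2$ clause, which the paper's proof passes over.

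One small caveat concerns step (a). The standing assumption only bounds individual jumps $|\Delta_i|\leq 1$, while $f(x_+)-f(x_-)$ can be a sum of several jumps. So $|\Delta|\leq 1$ is justified only when $[x_-,x_+]$ contains at most one change point, as in the application within Theorem~\ref{thm:cpl}. Some such restriction, or an explicit $T\geq 2$ hypothesis, is genuinely needed. For very large $|\Delta|$ the budget condition allows $T=1$, and then the algorithm deterministically returns \texttt{False}.
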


\begin{proof}[Proof of Lemma~\ref{lem:vcp}]
	
By Hoeffding's inequality, we know that for 
\begin{align*}
    \beta = \sqrt{\frac{16}{T}\log(2/\delta)}
\end{align*}
holds
\begin{align*}
    \mathbb{P}\left(\left|(\hat y_+-\hat y_-)-(y_+-y_-)\right|\geq \beta\right)\leq \delta\enspace .
\end{align*}
So if $y_+=y_-$, then $|\hat y_+-\hat y_-|<\beta$ and Algorithm~\ref{alg:vcp} returns \texttt{False}, with probability at least $1-\delta$. If on the other hand $\Delta=|y_+-y_-|\geq 2\beta $ this guarantees, that $|\hat y_+-\hat y_-|>\beta$ and Algorithm~\ref{alg:vcp} will return \texttt{True}. This yields the condition
\begin{align*}
    \Delta \geq 2\cdot\sqrt{\frac{16}{T}\log(2/\delta)} \quad \Leftrightarrow \quad T\geq 64\cdot \frac{\log(2/\delta)}{\Delta^2}\enspace .& \qedhere
\end{align*}
    
\end{proof}
\subsection{Proof of the Main Theorem}
\begin{proof}[Proof of Theorem~\ref{thm:cpl}]

In this proof, we gather the guarantees on our four subroutines through the following Lemmas:
(i) Lemma~\ref{lem:upperbounddetection} for \texttt{DetectIntervals},
(ii) Lemma~\ref{lem:multiplejumps} for \texttt{EstimateJumps},
(iii) Lemma~\ref{lem:shb} for \texttt{SHB},
(iv) Lemma~\ref{lem:vcp} for \texttt{VerifyCP}.

First, we justify the correctness guarantee in point 1 of the theorem. Then, we fix a stage index $k$ in the doubling schedule and analyze the four steps of the procedure at this stage, deriving conditions on $k$ under which they provide an accurate estimate of $N$ change points and the stopping condition is reached. Finally, we derive a high-probability bound on the required budget when $\delta_{\mathrm{explore}}=\delta$, and an expectation bound when $\delta_{\mathrm{explore}}=1/4$.

Consider $\delta,\eta$ and $N$ as in the statement of the theorem, and any environment $\nu$ with $m\geq N$ change points. We run Algorithm~\ref{alg:lcp} with input parameters $(N,\delta,\eta,\delta_{\mathrm{explore}})$, with $\delta_{\mathrm{explore}}\in\{\delta,1/4\}$.

\paragraph{Correction}

The correctness guarantee relies only on the verification step and is independent of the choice of $\delta_{\mathrm{explore}}$. Let us verify that if Algorithm~\ref{alg:lcp} stops at some stage $k$, the set $\mathcal{C}^{(k)}$ returned by the algorithm contains $N$ points $c_1^{(k)}<\dots<c_N^{(k)}$ with $|x_{l_v}^*-c_v^{(k)}|\leq \eta$ for $N$ distinct change points $1\leqslant l_1<l_2<\dots<l_N\leqslant m$.

If not, then there exists a stage index $k$, and some estimated change point $c_v^{(k)}\in J_v^{(k)}$, with $J_v^{(k)}\in \mathcal{J}^{(k)}$, such that  $[c_v^{(k)}-\eta,c_v^{(k)}+\eta]\cap J_v^{(k)}$ does not contain any change point, but $\mathrm{ok}_v^{(k)}=\texttt{VerifyCP}\left(l_{v}^{(k)},r^{(k)}_{v},\delta^{(k)},\left\lfloor\alpha_{v}^{(k)}\cdot 2^k\right\rfloor\vee 1\right)$ is True. 

Consider such a $c_v^{(k)}$. Then, in $\texttt{VerifyCP}$, for $l_v^{(k)}=(c_v^{(k)}-\eta)\vee \min(J_v^{(k)})$ and $r_v^{(k)}=(c_v^{(k)}+\eta)\wedge \max(J_v^{(k)})$, we have $f(l_v^{(k)})=f(r_v^{(k)})$. By Lemma~\ref{lem:vcp}, there exists an event $\xi_{\mathrm{verify},v}^{(k)}$ with probability at least $1-\delta^{(k)}$ such that \texttt{VerifyCP} in line~\ref{lin:stepverify} returns \texttt{False} for $c_v^{(k)}$. By a union bound, the probability of returning an incorrect set $C^{(k)}$ in step $k$ is therefore bounded by $N\delta^{(k)}=\frac{\delta}{4}\cdot \frac{6}{\pi^2(k)^2}$. Now, since
	 \begin{align*}
	 	\sum_{k'=\lceil\log_2(N)\rceil}^{+\infty}\frac{\delta}{4}\cdot \frac{6}{\pi^2(k')^2}\leq\frac{1}{4}\delta \enspace ,
	 \end{align*}
	another union bound tells us that, with probability at least $1-\delta$, the set $\mathcal{C}$ returned by Algorithm~\ref{alg:lcp} contains estimates of $N$ distinct change points at uniform precision $\eta$. This proves the correctness guarantee in point 1 of Theorem~\ref{thm:cpl}.

\paragraph{Construction of a good event for a fixed stage index $k$}

Fix for now a stage index $k$ in the doubling schedule (the index of the global while loop), at which the budget used is $T_{k}=4\times 2^{k}$. We analyze the four steps of the procedure at this stage, and we derive conditions on $k$ under which they provide an accurate estimate of $N$ change points and the stopping condition is reached with probability at least $1-\delta_{\mathrm{explore}}$.

\textit{(i) Detection of changes in intervals of interest.} 

For a fixed stage index $k$, consider the set $\mathcal{I}^{(k)}$ of intervals returned by \texttt{DetectIntervals} in line~\ref{lin:stepdetection} of Algorithm~\ref{alg:lcp}, with confidence parameter $\delta_{\mathrm{explore}}/4$, and budget $T_k/4=2^k$.

Consider $\omega_i^a(\delta_{\mathrm{explore}}/4)$, and $c_1=7.4\cdot 10^5$ numerical constant from Lemma~\ref{lem:upperbounddetection}.
By Lemma~\ref{lem:upperbounddetection}, under the condition 
\begin{equation}\label{eq:conditiondetection}
	2^k\geq c_1\cdot\max_{i=1,\dots,N}\omega_i^a\left(\frac{\delta_{\mathrm{explore}}}{4}\right)\cdot \mathcal{E}_i^{-2} \enspace, 
\end{equation}
there is an event $\xi_{\mathrm{detect}}^{(k)}$ with $\mathbb{P}\left(\xi_{\mathrm{detect}}^{(k)}\right)\geq1-\delta_{\mathrm{explore}}/4$, on which the set $\mathcal{I}^{(k)}$ consists of $m$ disjoint intervals, each containing exactly one change point. Denote by $I_i^{(k)}$ the interval in $\mathcal{I}^{(k)}$ with $x_i^*\in I_i^{(k)}$; it follows from the lemma that $|I_i^{(k)}|\leqslant \frac{s_i}{2}$.

Note that since $\delta_{\mathrm{explore}} <1/4$ we have 
	\begin{align*}
		\max_{i=1,\dots,N} \omega_i^a\left(\frac{\delta_{\mathrm{explore}}}{4}\right)\cdot \mathcal{E}_i^{-2}\leq 2 \left( \omega_1 H_{\mathrm{detect}}^{(m)}\left(\log\left(\frac{1}{\delta_{\mathrm{explore}}}\right)+\omega_2\right)\right) \enspace,
	\end{align*}
so that the condition~\eqref{eq:conditiondetection} is satisfied as soon as 
\begin{equation}\label{eq:conditiondetection_sufficient}
2^k\geq 2c_1 \cdot \omega_1 H_{\mathrm{detect}}^{(m)}\left(\log\left(\frac{1}{\delta_{\mathrm{explore}}}\right)+\omega_2\right) \enspace.
\end{equation}

Assume for what follows that the event $\xi_{\mathrm{detect}}^{(k)}$ holds, and that condition \eqref{eq:conditiondetection} is satisfied.	

\textit{(ii) Estimation of the jumps.}

Consider the subset $\mathcal{J}^{(k)}\subset \mathcal{I}^{(k)}$ of intervals returned by \texttt{EstimateJumps} in line~\ref{lin:stepmultiplejumps} of Algorithm~\ref{alg:lcp}, with confidence parameter $\delta_{\mathrm{explore}}/4$ and budget $T_k/4=2^k$, and number of arms $N$. Let $\mathcal{G}^{(k)}$ denote the set of estimated jump magnitudes on the intervals in $\mathcal{J}^{(k)}=\{J_v^{(k)}\}_{v=1}^{N'}$, ordered decreasingly, so that $\hat \Delta(J_1^{(k)})\geqslant \dots \geqslant \hat \Delta(J_{N'}^{(k)})$, with $N'=|\mathcal{J}^{(k)}|$.
On $\xi_{\mathrm{detect}}^{(k)}$, the intervals in $\mathcal{I}^{(k)}$ are disjoint and each contains exactly one change point. In particular, the true jumps on these intervals are exactly the jump magnitudes of the $m$ change points of $f$.\footnote{Recall that we defined $\Delta(I)=f(l)-f(r)$ for the jump across an interval $I=[l,r]$.} Therefore, we can apply Lemma~\ref{lem:multiplejumps} to the set of intervals $\mathcal{I}^{(k)}$ returned by \texttt{DetectIntervals} and to the corresponding jumps $\Delta_1,\dots,\Delta_m$.

By Lemma~\ref{lem:multiplejumps}, there exists an event $\xi_{\mathrm{estimate}}^{(k)}$ with $\mathbb{P}(\xi_{\mathrm{estimate}}^{(k)})\geq 1-\delta_{\mathrm{explore}}/4$, so that, with $c_2$ a numerical constant from Lemma~\ref{lem:multiplejumps}, under the condition
	\begin{align}
		2^k\geq c_2\cdot\Bigg[ &\sum_{i=1}^N\frac{1}{\Delta_{(i)}^2}\cdot\log\left(\frac{4m\cdot(\log(1/\Delta_{(i)}^2)\vee 1)}{\delta_{\mathrm{explore}}}\right) \nonumber\\
        &+(m-N)\cdot\frac{1}{\Delta_{(N)}^2}\cdot\log\left(\frac{4m\cdot(\log(1/\Delta^2(I_{(N)}))\vee 1)}{\delta_{\mathrm{explore}}}\right)\Bigg] \label{eq:conditionestimate}
	\end{align}
	\texttt{EstimateJumps} returns a set $\mathcal{G}^{(k)}=\{\hat\Delta(J_v^{(k)}),\ v=1,\dots,N'\}$, with $N'\geq N$, with $\hat\Delta(J_1^{(k)})\geq \hat{\Delta}(J_2^{(k)})\geq \cdots \geq \hat\Delta(J_{N'}^{(k)})$, and such that for $v=1,\dots,N$,
	\begin{align}
	\frac{1}{2}\hat\Delta(J_v^{(k)})\leq|\Delta(J_v^{(k)})|\leq\frac{3}{2}\hat\Delta(J_v^{(k)}) \label{eq:estimate_jump_interval} \\ 
	\frac{1}{2}\hat\Delta(J_v^{(k)})\leq|\Delta_{(v)}|\leq\frac{3}{2}\hat\Delta(J_v^{(k)}) \label{eq:estimate_jump_order} \enspace.
	\end{align} 
	  These inequalities will allow us to verify that the proportion $\alpha_v^{(k)}$ of the budget allocated to interval $J_v^{(k)}$ is proportional to $\Delta_{(v)}^{-2}$, which we interpret as the optimal allocation for localizing the change point in $J_v^{(k)}$.
	
	Let us now examine condition~\ref{eq:conditionestimate}.  Naturally, estimating the jumps once they have been detected is less costly than the detection step itself.
	First, the energy $\mathcal{E}_i^2$ is the product of $\Delta_i^2$ and a spacing term, so $\Delta_i^{-2}\leq \mathcal{E}_i^{-2}$ for all $i$. This implies that all logarithmic terms in the condition~\ref{eq:conditionestimate} are upper bounded by $2(\omega_2+\log(1/(\delta_{\mathrm{explore}})))$, with $\omega_2$ defined in the statement of the theorem.
	Secondly, one has 
	\begin{align*}
	\sum_{i=1}^N\frac{1}{\Delta_{(i)}^2}+(m-N)\cdot\frac{1}{\Delta_{(N)}^2} & \leqslant \sum_{i=1}^m\frac{1}{\Delta_{(i)}^2}
	 \leqslant \sum_{i=1}^m\frac{1}{s_i\Delta_{(i)}^2}s_i\leq \left(\max_{j=1}^m {\mathcal{E}_j^{-2}}\right)\sum_{i=1}^m s_i  \leqslant 2 H_{\mathrm{detect}}^{(m)}
\enspace,
	\end{align*}
	where the last inequality holds since $\sum_{i=1}^m s_i\leq 2$, and $H_{\mathrm{detect}}^{(m)}=\displaystyle\max_{i=1}^m \mathcal{E}_i^{-2}$.

	All together, condition~\ref{eq:conditionestimate} is implied by
	\begin{equation}\label{eq:conditionestimate_sufficient}
	2^k\geq 2c_2\cdot H_{\mathrm{detect}}^{(m)}\cdot (\omega_2+\log(1/\delta_{\mathrm{explore}}))
\end{equation}
	
	Now, we assume that both events $\xi_{\mathrm{detect}}^{(k)}$ and $\xi_{\mathrm{estimate}}^{(k)}$ hold, and that conditions~\eqref{eq:conditiondetection} and~\eqref{eq:conditionestimate} are satisfied. We are now in a position to analyze the last two steps of the procedure.
	
	\textit{(iii) Localization of the change points.}

	Let $v\in\{1,\dots,N\}$, and consider the interval $J_v^{(k)}$ returned by \texttt{EstimateJumps}, with the $v$-th largest estimated jump $\hat\Delta(J_v^{(k)})$. From the guarantees on \texttt{DetectIntervals}, we know that $J_v^{(k)}$ contains exactly one change point, with jump $\Delta(J_v^{(k)})$, and that the if this unique change point is $x_{j_v}^*$, then $|\Delta(J_v^{(k)})| = |\Delta_{j_v}|$, and $|J_v^{(k)}| \leq s_{j_v}$.

	Consider the proportion $\alpha_v^{(k)}$ of the budget allocated to interval $J_v^{(k)}$ for localization, defined in Line~\ref{lin:stepbinsearch} of Algorithm~\ref{alg:lcp}. Let $c_v^{(k)}$ denote the output of the SHB algorithm (Algorithm~\ref{alg:shb}) applied to $J_v^{(k)}$ with budget $\lfloor\alpha_v^{(k)}\cdot 2^k \rfloor \vee 1$, where $\alpha_v^{(k)}$ is defined in~\ref{def:alpha_v}. 

	If $|J_v^{(k)}|\leq 2\eta$, Algorithm~\ref{alg:shb} takes the midpoint of the interval as the change-point estimate $c_v^{(k)}$, and we have the guarantee $|c_v^{(k)}-x_{j_v}^*|\leq\eta$. Otherwise, by Lemma~\ref{lem:shb} from~\cite{lazzaro2025fixedbudget}, there exists an event $\xi^{(k)}_{\mathrm{loc},v}$ of probability at least $1-\delta_{\mathrm{explore}}/4N$ such that, under the condition
	 \begin{align}
	 	\lfloor\alpha_v^{(k)}\cdot 2^k \rfloor \vee 1\geq \frac{7200}{\Delta_{j_v}^2}\log\left(\frac{N}{\eta\cdot\delta_{\mathrm{explore}}}\right) \label{eq:conditionloc}\enspace ,
	 \end{align}
	 an estimate $c_v^{(k)}\in J_v^{(k)}$ with $|c_v^{(k)}-x_{j_v}^*|\leq \eta$.
	 
	 Let us now find a condition on $k$ that implies the condition~\eqref{eq:conditionloc} for any $v=1,\dots,N$.
	 From Equations~\eqref{eq:estimate_jump_interval} and~\eqref{eq:estimate_jump_order}, we have $|\Delta(J_v^{(k)})|\geq \frac{1}{2}\hat\Delta(J_v^{(k)})\geqslant \frac{1}{2}\frac{2}{3} \Delta_{(v)} = \frac{1}{3} \Delta_{(v)}$, where $\Delta_{(v)}$ is the $v$-th largest true jump. In particular, this implies that 
	 \[
	 H_{\mathrm{localize}}^{(N)}=\sum_{w=1}^N \Delta_{(w)}^{-2}\geq \frac{1}{9}\sum_{w=1}^N \Delta(J_w^{(k)})^{-2} \enspace.
	 \]
	
	 Moreover, from inequality~\eqref{eq:estimate_jump_interval}, we also obtain for any $v=1,\dots,N$,
	 \begin{align*}
	 	\alpha_{v}^{(k)}=\frac{\left(\hat\Delta(J_{v})\right)^{-2}}{\sum_{w=1}^N\left(\hat\Delta(J_{w})\right)^{-2}}
	 	\geq \frac{\left(2\Delta_{j_v}\right)^{-2}}{\sum_{w=1}^N \left(\frac{2}{3}\Delta_{j_w}\right)^{-2}}
	 	=\frac{1}{9}\frac{\Delta_{j_v}^{-2}}{\sum_{w=1}^N \Delta_{j_w}^{-2}}\enspace . 
	 \end{align*}

	Now, under the assumption 
	 \begin{align*}
	 	2^k\geq 1263600\log\left(\frac{N}{\eta\cdot\delta_{\mathrm{explore}}}\right) \sum_{w=1}^{N}\frac{1}{\Delta_{(v)}^2}\geq 140400\log\left(\frac{N}{\eta\cdot\delta_{\mathrm{explore}}}\right) \sum_{w=1}^{N}\frac{1}{\Delta_{j_w}^2}
	 \end{align*}
	 we obtain
	 \begin{align*}
	 	\left\lfloor\alpha_{i_v}^{(k)}\cdot2^k\right\rfloor \geq \frac{7200}{\Delta_{j_v}^2}\log\left(\frac{N}{\eta\cdot\delta_{\mathrm{explore}}}\right)\enspace .
	 \end{align*}
	 By a union bound, we obtain an event $\xi_{\mathrm{localize}}^{(k)}=\cap_{v=1}^N \xi_{\mathrm{loc},v}^{(k)}$ with $\mathbb{P}(\xi_{\mathrm{localize}}^{(k)})\geq 1-\delta/4$, on which $|c_v^{(k)}-x_{j_v}^*|\leq \eta$. This yields following sufficient condition for localization:
	 \begin{align}\label{eq:conditionloc_sufficient}
		2^k\geq 1263600\cdot H_{\mathrm{localize}}^{(N)}\cdot\left(\log\left(\frac{1}{\delta_{\mathrm{explore}}\eta}\right)+\omega_3\right)\enspace , 
	 \end{align}
	 using $\omega_3=\log\left(N\cdot\log(H_{\mathrm{localize}}^{(N)})\right)$ as defined in the theorem. 

	For now, assume that we are on the event $\xi_{\mathrm{detect}}^{(k)}\cap\xi_{\mathrm{estimate}}^{(k)}\cap\xi_{\mathrm{localize}}^{(k)}$.

	\textit{(iv) Verification of the localized change points.}

	Finally, we seek a condition on $k$ such that, if the previous steps go well, the verification step also succeeds and the algorithm stops. This will hold under the following condition:
	 \begin{align*}
	 	2^k=c'\cdot\log\left(\frac{N\cdot\left(\log\left(\sum_{w=1}^N\frac{1}{\Delta_{(w)}^2}\right)\vee 1\right)}{\delta}\right)\cdot \sum_{w=1}^N\frac{1}{\Delta_{(w)}^2},
	 \end{align*}
	 where $c'>0$ is computed later. If we can show that
	 \begin{align*}
	 	\alpha_{i_v}^{(k)}\cdot 2^k\geq 128\cdot\frac{\log\left(\frac{4\pi^2\cdot N\cdot k^2}{3\cdot\delta}\right)}{\Delta_{j_v}^2}\enspace ,\quad i=1,\dots,N\enspace , 
	 \end{align*}
	 then
	 \begin{align*}
	 	\left\lfloor \alpha_{i_v}^{(k)}\cdot 2^k\right\rfloor \geq 64\cdot\frac{\log\left(\frac{4\pi^2\cdot N\cdot k^2}{3\cdot\delta}\right)}{\Delta_{j_v}^2}\enspace ,\quad v=1,\dots,N\enspace ,
	 \end{align*}
	 and Lemma~\ref{lem:vcp} together with a union bound show that, on the event $\xi_{\mathrm{verify}}^{(k)}$ with $\mathbb{P}(\xi^{(k)}_{\mathrm{verify}})\geq1-\frac{3\delta}{2\pi^2k^2}$ (already introduced in the correctness proof), \texttt{VerifyCP} in line~\ref{lin:stepverify} returns \texttt{True} for all $i=1,\dots,N$.
	 
	 Note that we have by the inequalities~\eqref{eq:estimate_jump_interval} and \eqref{eq:estimate_jump_order} that
	 \begin{align*}
	 	&\alpha_{i_v}^{(k)}\cdot c'\cdot\log\left(\frac{N\cdot\left(\log\left(\sum_{w=1}^N\frac{1}{\Delta_{(w)}^2}\right)\vee 1\right)}{\delta}\right)\cdot\sum_{w=1}^N\frac{1}{\Delta_{(w)}^2}\\
	 	&\geq\frac{c'}{9\Delta_{j_v}^2}\cdot\log\left(\frac{N\cdot\left(\log\left(\frac{1}{9}\sum_{i=1}^N\frac{1}{\Delta_{j_w}^2}\right)\vee 1\right)}{\delta}\right)\\
        &\geq\frac{c'}{18\Delta_{j_v}^2}\cdot\log\left(\frac{N\cdot\left(\log\left(\sum_{i=1}^N\frac{1}{\Delta_{j_w}^2}\right)\vee 1\right)}{\delta}\right)\enspace .
	 \end{align*}
	 The last inequality follows from the fact that for $\delta<1/4$ and $x>0$, we can bound
     \begin{align*}
         \log\left(\frac{N\cdot(\log(\frac{1}{9}x)\vee 1))}{\delta}\right)
         \geq\frac{1}{2}\log\left(\frac{N\cdot(\log(x)\vee 1)}{\delta}\right)\enspace ,
     \end{align*}
     since
     \begin{align*}
         \log\left(\frac{N\cdot(\log(\frac{1}{9}x)\vee 1))}{\delta}\right)
         -\frac{1}{2}\log\left(\frac{N\cdot(\log(x)\vee 1)}{\delta}\right)\\
         =\frac{1}{2}\log\left(\frac{N}{\delta}\right)+\log\left(\log\left(\frac{1}{9}x\right)\vee 1\right)-\frac{1}{2}\log\left(\log\left(x\right)\vee 1\right)\\
         \geq \log(2)+\log\left(\left(\log\left(x\right)-\log(9)\right)\vee 1\right)-\frac{1}{2}\log\left(\log\left(x\right)\vee 1\right)\enspace\eqqcolon g(x) .
     \end{align*}
     The function $g$ is continuous $[0,\infty)$ with $g(x)=\log(2)$ on $[0,e]$, $g(x)$ decreasing on $(e,9e)$ with $g(9e)=\log(2)-\frac{1}{2}\log(1+\log(9))\geq 0$ and increasing on $(9e,\infty)$, so we have indeed $g(x)\geq 0$.

     Therefore, we obtain
	 \begin{align*}
	 	\frac{\log\left(\frac{4\pi^2\cdot N\cdot k^2}{3\cdot\delta}\right)}{\alpha_{i_v}^{(k)}\cdot2^k\cdot\Delta_{j_v}^2}&\leq \frac{182 \log\left(\frac{N}{\delta}\right)+102\log\left(\log\left(\sum_{w=1}^N\frac{1}{\Delta_{(w)}^2}\right)\vee 1\right)+72\log(c')}{c'\cdot\left[\log\left(\log\left(\sum_{w=1}^N\frac{1}{\Delta_{j_w}^2}\right)\vee 1\right)+\log\left(\frac{N}{\delta}\right)\right]}\\
	 	&\leq \frac{182+72\log(c_4)}{c_4}\leq \frac{1}{128}
	 \end{align*}
	 when $c_4>1.4\cdot10^5$. For the second inequality, we used Equation~\eqref{eq:loglog} to obtain
	 \begin{align*}
	 	\log\log(2^k)&\leq\log\left(\log\left(\sum_{w=1}^N\frac{1}{\Delta_{(w)}^2}\right)\vee 1\right)\\&\qquad+2\log\left(c'\cdot\log\left(\frac{N\cdot\left(\log\left(\sum_{w=1}^N\frac{1}{\Delta_{(w)}^2}\right)\vee 1\right)}{\delta}\right)\right)\\
	 	&\leq 2\log\left(\frac{N}{\delta}\right)+3\log\left(\log\left(\sum_{w=1}^N\frac{1}{\Delta_{(w)}^2}\right)\vee 1\right)+2\log(c')\enspace .
	 \end{align*}

	 All together, using $\omega_3= \log\left({N\cdot\left(\log\left(H_{\mathrm{localize}}^{(N)}\right)\vee 1\right)}\right)$, we have shown that under the condition
	 \begin{align}
	 	2^k\geq 1.4\cdot10^5\cdot\left(\omega_3+\log(1/\delta)\right)\cdot H_{\mathrm{localize}}^{(N)}\label{eq:condition_verify_sufficient}\enspace ,
	 \end{align} 
	 then on the event $\xi_{\mathrm{verify}}^{(k)}$, \texttt{VerifyCP} in line~\ref{lin:stepverify} returns \texttt{True} for all $v=1,\dots,N$.
	 
\paragraph{Bound in high probability.}

Now, we are ready to derive a high probability bound on the required budget when $\delta_{\mathrm{explore}}=\delta$. 
We have a constant $c$ such that for any $k\geq \lceil\log_2(N)\rceil$, if $k$ satisfies
	\begin{align}
	 	2^k&\geq c\cdot\left(\omega_1 H_{\mathrm{detect}}^{(m)}\left(\log\left(\frac{1}{\delta}\right)+\omega_2\right)+H_{\mathrm{localize}}^{(N)}\left(\log\left(\frac{1}{\delta\eta}\right)+\omega_3\right)\right)\label{eq:condition_quantile}
		\enspace ,
	 \end{align}
	 then all conditions~\eqref{eq:conditiondetection_sufficient},\eqref{eq:conditionestimate_sufficient},\eqref{eq:conditionloc_sufficient},\eqref{eq:condition_verify_sufficient} are also verified (with $\delta_{\mathrm{explore}}=\delta$). From the previous paragraph, under this budget, on the intersection $\xi^{(k)}:=\xi_{\mathrm{detect}}^{(k)}\cap\xi_{\mathrm{estimate}}^{(k)}\cap\xi_{\mathrm{localize}}^{(k)}\cap\xi_{\mathrm{verify}}^{(k)}$, the stopping condition at the $k$-th stage of Algorithm~\ref{alg:lcp} is reached.
	 It is in particular sufficient to choose $k_*$ as the smallest integer such that this bound~\eqref{eq:condition_quantile} holds. Now, on the event $\xi=\xi_{\mathrm{detect}}^{(k_*)}\cap\xi_{\mathrm{estimate}}^{(k_*)}\cap\xi_{\mathrm{localize}}^{(k_*)}\cap(\cap_{k'=1}^{k_*}\xi_{\mathrm{verify}}^{(k')})$, then the algorithm stops at stage $k_*$ (or before), and the output is correct.
	 
	Finally, one has 
	$\mathbb{P}(\xi^{(k_*)})\geq 1-\frac{3}{4}\delta-\sum_{k'=1}^{k_*}\frac{6}{\pi^2k'^2}\cdot\frac{\delta}{4}\geqslant 1-\delta$. 
	 Moreover, on $\xi$, the total budget used by the algorithm can be bounded as $\sum_{k'=\lceil\log_2(N)\rceil}^{k_*}4\cdot 2^{k'}<8\cdot 2^{k_*}$, which gives us the claimed bound on the total budget. This concludes the proof of the high probability bound in point 2 of Theorem~\ref{thm:cpl}. 

	 \paragraph{Bound in expectation.} 

	 Finally, we can also derive a bound on the expected budget. Fix $\delta_{\mathrm{explore}}=1/4$.

	 From the analysis of the four steps of the procedure, there exists a universal constant $c$ such that for any $k\geq \lceil\log_2(N)\rceil$, if $k$ satisfies
	 \begin{align}
		2^k&\geq c\cdot\left(\omega_1 H_{\mathrm{detect}}^{(m)}\left(\log(4)+\omega_2\right)+H_{\mathrm{localize}}^{(N)}\left(\log(4/\delta\eta)+\omega_3\right)\right)\label{eq:condition_expectation}\enspace ,
	\end{align}
	then all conditions~\eqref{eq:conditiondetection_sufficient},\eqref{eq:conditionestimate_sufficient},\eqref{eq:conditionloc_sufficient},\eqref{eq:condition_verify_sufficient} are verified. Under this budget constraint, on the intersection $\xi^{(k)}:=\xi_{\mathrm{detect}}^{(k)}\cap\xi_{\mathrm{estimate}}^{(k)}\cap\xi_{\mathrm{localize}}^{(k)}\cap\xi_{\mathrm{verify}}^{(k)}$, the stopping condition at the $k$-th stage of Algorithm~\ref{alg:lcp} is reached. Moreover, $\mathbb{P}(\xi^{(k)})\geq 1-\frac{3}{4}\cdot\frac{1}{4}-\sum_{k'=\lceil\log_2(N)\rceil}^{k}\frac{6}{\pi^2k'^2}\cdot\frac{\delta}{4}\geq \frac{3}{4}$, as $\delta\leqslant 1/4$ and $\delta_{\mathrm{explore}}=1/4$.
	
	Now, consider $k_0$, the smallest integer such that condition~\eqref{eq:condition_expectation} is satisfied. Then, on the event $\xi^{(k_0)}$, for any $k\geq k_0$, the stopping condition at the $k$-th stage of Algorithm~\ref{alg:lcp} is reached, and the output is correct with probability at least $\frac{3}{4}$. Therefore, if we denote by $K$ the random variable corresponding to the stage at which the algorithm stops, then $(K-k_0)\vee 0$ is stochastically dominated by a geometric random variable with parameter $3/4$, and $\mathbb{E}[2^{(K-k_0)\vee 0}]\leqslant \sum_{l=0}^{\infty}2^l(1/4)^l(3/4)=\frac{3}{2}$.
	
	Moreover, the total budget used by the algorithm can be bounded as 
	\[
	\mathcal{T}\leqslant \sum_{k'=\lceil\log_2(N)\rceil}^{K}4\cdot 2^{k'}<8\cdot 2^{k_0}\cdot 2^{(K-k_0)\vee 0}\enspace,
	\]
	and 
	\begin{align*}
	\mathbb{E}[\mathcal{T}] & \leq 8\cdot 2^{k_0}\cdot \mathbb{E}[2^{(K-k_0)\vee 0}] \leq 12\cdot 2^{k_0} \\
	2^{k_0} & \leq 2c\cdot\left(\omega_1 H_{\mathrm{detect}}^{(m)}\left(\log(4)+\omega_2\right)+H_{\mathrm{localize}}^{(N)}\left(\log(4/\delta\eta)+\omega_3\right)\right)\enspace .
	\end{align*}
	This concludes the proof of the bound in expectation in point 3 of Theorem~\ref{thm:cpl}. 
\end{proof}

\section{Proofs of Lower Bounds}\label{appendix:LB}
\subsection{Proof of Theorem~\ref{thm:main_LB} and Corollary~\ref{cor:main_LB}}

Before proving Theorem~\ref{thm:main_LB}, we first state two lemmas, which are the main building blocks of the proof. The first is a lower bound showing the optimality of the localization complexity $H_{\mathrm{localize}}^{(m)}$, and the second is a lower bound scaling with $H_{\mathrm{detect}}^{(m)}$.

Fix $\delta\in(0,1/4)$ and $\eta\in(0,1/8)$.
For both lemmas, we consider any $(\delta,\eta,m)$-correct algorithm $\pi$ for the active change point detection problem, and any valid environment $\nu$ with $m\geqslant 1$ change points. We consider the case where all change points have to be localized, so that $N=m$. In all this section, we may omit the superscript $(m)$ in $H_{\mathrm{detect}}^{(m)}$ and $H_{\mathrm{localize}}^{(m)}$ for readability, and simply write $H_{\mathrm{detect}}$ and $H_{\mathrm{localize}}$. We may precise the dependence of these quantities on the environment $\nu$ when needed, and write $H_{\mathrm{detect}}^{\nu}$ and $H_{\mathrm{localize}}^{\nu}$.

\begin{lemma}\label{thm:LB_H_localize_v2}
There exists an environment $\nu'$ such that (1) $\nu'$ and $\nu$ have the same jump magnitudes $\Delta_1,\dots,\Delta_m$  --- so that $H_{\mathrm{localize}}^{(m,\nu')}=H_{\mathrm{localize}}^{(m,\nu)}$; (2) $\forall i\in{1,\dots,m}$, $\frac{1}{4}s_i\leqslant s'_i\leqslant \frac{3}{4}s_i$  --- so that $\frac{4}{3} H_{\mathrm{detect}}^{(m,\nu)} \leqslant H_{\mathrm{detect}}^{(m,\nu')}\leqslant 4H_{\mathrm{detect}}^{(m,\nu)}$;
and
\begin{align*}
\mathbb{P}_{\pi,\nu'}\left(\mathcal{T}_{\pi}\geqslant H_{\mathrm{localize}}^{(m,\nu)}\log\left(\frac{1}{8\delta}\right)+\sum_{i=1}^m \frac{1}{\Delta_{i}^{2}}\log_+\!\left(\frac{s_i}{16\eta}\right)\right)\geqslant \delta  \enspace.
\end{align*}
\end{lemma}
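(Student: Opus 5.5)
The plan is to build a finite family of perturbed environments and to show, by contradiction, that at least one member $\nu'$ satisfies the quantile bound. We only move the change points and keep the levels $\mu_0,\Delta_1,\dots,\Delta_m$ fixed, so property (1) holds automatically for every member of the family.

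\textbf{Construction.} For each $i$, I would select a window $W_i$ of length about $s_i/4$ near $x_i^*$. The windows are placed so that shifting every change point to any position in its window keeps the new local spacings in $[\tfrac14 s_i,\tfrac34 s_i]$. For instance, centre $W_i$ at distance about $s_i/4$ from $x_i^*$, on the side that keeps neighbours apart. This gives property (2).

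Inside $W_i$ I would place a grid of $\alpha_i\asymp s_i/(16\eta)$ candidate positions, spaced by more than $2\eta$. On the grid, an $\eta$-accurate estimate of the $i$-th change point identifies the grid index uniquely. The family is $\{\nu_{\mathbf j}:\mathbf j\in\prod_i[\alpha_i]\}$, where $\nu_{\mathbf j}$ puts change point $i$ at the $j_i$-th grid position. When $\alpha_i<2$, I would use only two positions, which yields just the $\log(1/(8\delta))$ contribution for that $i$ (the case $\log_+=0$).

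Two environments that differ only in coordinate $i$ have different means only on the strip between the two positions, which lies inside $W_i$. Each sample in that strip costs KL at most $\Delta_i^2/2$. The windows $W_i$ are disjoint, so the sample counts $n_i(W_i)$ can be charged separately and summed.

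\textbf{Contradiction argument.} Suppose that for every $\mathbf j$ we have $\mathbb P_{\nu_{\mathbf j}}(\mathcal T_\pi\ge t^*)<\delta$, where $t^*$ is the claimed threshold. Truncate $\pi$ at time $t^*$ (output arbitrarily if it has not stopped). The truncated algorithm is then correct with probability at least $1-2\delta$ on every $\nu_{\mathbf j}$, and it uses at most $t^*$ samples almost surely.

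Fix a coordinate $i$ and the other coordinates $\mathbf j_{-i}$. As a reference I take $\nu_{0}$, a member of the family (or a midpoint configuration) lying in the same coordinate slice. The events $\{\hat x$ identifies grid point $j\}$, $j\in[\alpha_i]$, are disjoint, so under $\nu_0$ some index $j$ has probability at most $1/\alpha_i$. Under $\nu_{\mathbf j}$ that same event has probability at least $1-2\delta$.

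The high-probability change-of-measure (data-processing) inequality applied to the truncated, hence bounded, algorithm then gives
\[
\tfrac{\Delta_i^2}{2}\,\mathbb E_{\nu_0}[n_i(W_i)]\ \ge\ \kl\!\left(1-2\delta,\tfrac{1}{\alpha_i}\right)\ \gtrsim\ \log\alpha_i .
\]
A second alternative, with only one change point swapped and $\nu_{\mathbf j}$ itself as the reference, gives the usual $\kl(\delta,1-\delta)\ge\log(1/(8\delta))$ lower bound as in Theorem~5.2 of \cite{pmlr-v267-lazzaro25a}. Together these account for the $\frac{1}{\Delta_i^2}\big(\log(1/(8\delta))+\log_+(s_i/16\eta)\big)$ term.

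Summing over $i$ over disjoint windows, and averaging $\mathbf j$ uniformly to choose a common reference (the symmetry step), I expect $\mathbb E[\mathcal T]\ge\sum_i n_i$ to exceed $t^*$ for some member. This contradicts $\mathcal T\le t^*$ almost surely.

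\textbf{Main obstacle.} The hard part is making the per-coordinate bounds hold \emph{simultaneously} under one environment, and turning them into a $\delta$-quantile statement rather than $m$ separate expectation bounds under different references. The first thing I would try is a uniform prior over $\prod_i[\alpha_i]$. Conditionally on $\mathbf j_{-i}$, the $i$-th coordinate still faces an $\alpha_i$-ary test, and the chain rule for KL over the disjoint windows makes the $m$ terms add up. This yields $\max_{\mathbf j}\mathbb E_{\nu_{\mathbf j}}[\mathcal T\wedge t^*]\ge t^*$, up to the constants absorbed in $\tfrac18,\tfrac1{16}$. Under the contradiction hypothesis $\mathcal T\wedge t^*$ is essentially bounded by $t^*$, so this gives the contradiction.
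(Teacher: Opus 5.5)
Your skeleton matches the paper's: a product family of shifted configurations, truncation at the quantile, a uniform prior with joint convexity of $\kl$, and disjoint regions summed under one measure. Two steps fail as written, however.

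First, the key display has the divergence in the wrong direction. A bound $\kl(1-2\delta,1/\alpha_i)$ whose first argument is the probability under $\nu_{\mathbf j}$ comes from $\KL(\mathbb{P}_{\nu_{\mathbf j}},\mathbb{P}_{\nu_0})$. The divergence decomposition then puts the sample counts under $\nu_{\mathbf j}$, not under $\nu_0$. If the counts are under the reference, you get $\kl(\leqslant 1/\alpha_i,\ \geqslant 1-2\delta)$ instead, which yields $\log(1/\delta)$ but no $\log\alpha_i$. Moreover, your reference, and the index $j$ picked by pigeonhole, depend on $i$. So the $m$ counts live under different measures and cannot be added into one budget. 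This is exactly the obstacle you flag at the end, and invoking the chain rule over disjoint windows does not fix it. The fix has three parts: average over all $j_i$ by joint convexity, keep the counts under $\nu_J$ itself, and use for coordinate $i$ a reference that does not depend on $j_i$. Then every $i$ is evaluated under the same mixture $\mathbb{E}_{J\sim\mathcal{U}}\mathbb{E}_{\pi,J}$, and disjointness of the regions gives $\sum_i \tfrac{2}{\Delta_i^2}\kl(\cdot,\cdot)\leqslant\chi$.

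Second, even with this fix, a family member as reference only yields $\kl(1-2\delta,1/\alpha_i)$, because under it the $\alpha_i$ disjoint events carry total mass close to $1$. The $\log(1/\delta)$ term must then come from your separate swap argument. Two separate lower bounds on the quantile combine only through a maximum, which in general gives only half of the stated threshold. A factor in front of the logarithm cannot be absorbed into the constants $\tfrac18,\tfrac1{16}$ inside it.

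The missing idea is the paper's alternative $\nu_{J^{(-i)}}$, obtained from $\nu_J$ by flattening the $i$-th plateau, so that $f_J-f_{J^{(-i)}}=\Delta_i\mathbb{I}\{x\in[x_i^J,x_{i+1}^J)\}$. This alternative has three useful properties:
\begin{enumerate}
\item It does not depend on $j_i$.
\item It differs from $\nu_J$ only on intervals that are disjoint across $i$.
\item It has fewer than $m$ change points, so any $(\delta,\eta,m)$-correct algorithm satisfies $\mathbb{P}_{\pi,J^{(-i)}}(\mathcal{T}_\pi\leqslant\chi)\leqslant 2\delta$. This follows by approximating $\nu_{J^{(-i)}}$ with two environments carrying a vanishing extra jump at locations more than $2\eta$ apart.
\end{enumerate}
The events $\xi_{i,j_i}=\{\mathcal{T}_\pi\leqslant\chi,\ |\hat x_i-x_i^J|\leqslant\eta\}$ are disjoint in $j_i$ and contained in $\{\mathcal{T}_\pi\leqslant\chi\}$. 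Hence their average mass under $\nu_{J^{(-i)}}$ is at most $2\delta/\alpha_i$. The single bound $\kl(1-2\delta,2\delta/\alpha_i)\geqslant\tfrac12\log(\alpha_i/(8\delta))$, multiplied by $2/\Delta_i^2$ and summed over $i$, then produces both $\log(1/(8\delta))$ and $\log\alpha_i\geqslant\log_+(s_i/(16\eta'))$ with coefficient $1$, as the statement requires.
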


\begin{lemma}\label{thm:LB_H_detect}
Assume that $\nu$ contains at least $2$ change points, $N=m\geqslant 2$. Then, there exists an environment $\nu'$ such that $H_{\mathrm{detect}}^{(m,\nu)}\leqslant H_{\mathrm{detect}}^{(m,\nu')}\leqslant 4H_{\mathrm{detect}}^{(m,\nu)}$, $\frac{1}{2}H_{\mathrm{localize}}^{(m,\nu)} \leqslant H_{\mathrm{localize}}^{(m,\nu')}\leqslant H_{\mathrm{localize}}^{(m,\nu)}$, and
\begin{align*}
\mathbb{P}_{\pi,\nu'}\left(\mathcal{T}_{\pi}\geqslant \frac{1}{2}H_{\mathrm{detect}}^{(m,\nu)}\log\left(\frac{1}{6\delta}\right) \right)\geqslant \delta  \enspace .
\end{align*}
\end{lemma}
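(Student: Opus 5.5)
The idea is to reduce to a signal-detection problem around the change point of smallest energy. Let $i^\star\in\arg\max_i \mathcal{E}_i^{-2}$, so that $H_{\mathrm{detect}}^{(m,\nu)}=1/(s_{i^\star}\Delta_{i^\star}^2)$, and let $j^\star\in\{i^\star-1,i^\star+1\}$ be the neighbour realising $s_{i^\star}$ (with $m\ge 2$, at least one neighbour is a true change point; if $s_{i^\star}=1$ by the boundary convention, the bound is of order $\Delta^{-2}\log(1/\delta)$ and follows from a simpler single-jump change of measure). Write $s=s_{i^\star}$ and $\Delta=\Delta_{i^\star}$.

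\textbf{Step 1: build $\nu'$.} I will construct $\nu'$ with a \emph{bump}: two consecutive change points at distance $s$, with jumps of magnitudes $|\Delta|$ and $|\Delta_{j^\star}|$ and opposite signs, so that $f$ returns to its previous level after the bump. The remaining $m-2$ change points keep their jump magnitudes. Each remaining change point will be placed with its original spacing if possible, or otherwise with spacing at least a constant fraction of its original spacing. They are packed into a region of length at most $1/2$, leaving a free segment $S$ of length at least $1/4$ on which $f$ is constant. Because jump magnitudes are preserved up to possibly enlarging $|\Delta_{j^\star}|$ to $|\Delta|$, we get $\tfrac12 H_{\mathrm{localize}}^{\nu}\le H_{\mathrm{localize}}^{\nu'}\le H_{\mathrm{localize}}^{\nu}$. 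Because spacings shrink at most by a factor $4$, and the bump keeps energy $s\Delta^2$, we get $H_{\mathrm{detect}}^{\nu}\le H_{\mathrm{detect}}^{\nu'}\le 4H_{\mathrm{detect}}^{\nu}$. Checking that such a packing exists while $\sum_i s_i\le 2$ is the fiddly but elementary part.

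\textbf{Step 2: alternatives and pigeonhole.} The bump sits inside $S$. Define alternatives $\nu_1,\dots,\nu_K$ by translating the bump to $K\asymp 1/s$ pairwise disjoint positions inside $S$. These positions are separated by more than $2\eta$, which requires $s\gtrsim\eta$; otherwise I use shifts of size $4\eta$ with correspondingly fewer positions. All $\nu_j$ are valid $m$-change-point environments. For any $(\delta,\eta,m)$-correct algorithm, let $E_j$ be the event that the output contains points within $\eta$ of both bump ends at position $j$. These events are pairwise disjoint, so $\mathbb{P}_{\pi,\nu_j}(E_j)\ge 1-\delta$ and $\mathbb{P}_{\pi,\nu_{j'}}(E_j)\le\delta$ for $j'\neq j$. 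Fix $T=\tfrac12 H_{\mathrm{detect}}^{\nu}\log(1/(6\delta))$ and suppose, for contradiction, that $\mathbb{P}_{\pi,\nu_j}(\mathcal{T}_\pi\ge T)<\delta$ for every $j$. The idea is to compare each $\nu_j$ against a reference, for which I take $\nu_1$. The laws of the data truncated at $T$ under $\nu_1$ and $\nu_j$ differ only through samples taken in the two bump supports $B_1\cup B_j$. Since the supports are disjoint and the total number of samples is at most $T$, pigeonhole gives some $j\ge2$ with $\mathbb{E}_{\nu_1}[N_{B_j}(T\wedge\mathcal{T})]\le T/(K-1)$.

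\textbf{Step 3: change of measure.} This is the main obstacle. A plain KL bound would involve $\mathbb{E}_{\nu_1}[N_{B_1}]$, which can be large, because the algorithm may legitimately sample the true bump heavily. I will use the truncated transportation inequality (Kaufmann et al.) with the roles chosen so that only the samples in $B_j$ count. That is, compare $\nu_1$ against the environment $\tilde\nu_j$ carrying bumps at \emph{both} $B_1$ and $B_j$. This environment has $m+2$ change points, and its localisation events still separate from $E_1$, so the correctness of $\pi$ on $\tilde\nu_j$ is usable. Its KL to $\nu_1$ is at most $\tfrac{\Delta^2}{2}\mathbb{E}_{\nu_1}[N_{B_j}(T\wedge\mathcal{T})]$. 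On the other side, the data-processing inequality applied to the event $E_1\cap\{\mathcal{T}<T\}$, whose probabilities are at least $1-2\delta$ and at most $3\delta$ under the two measures, gives a lower bound $\mathrm{kl}(1-2\delta,3\delta)\ge\log(1/(6\delta))$. Combining the two bounds yields $T\gtrsim K\Delta^{-2}\log(1/(6\delta))\ge \tfrac{1}{2 s\Delta^2}\log(1/(6\delta))$, which contradicts the choice of $T$. I expect the delicate points to be choosing the comparison environment so that the localisation events are genuinely disjoint, and fixing the constants so that $K-1\ge 1/(2s)$ inside $S$.
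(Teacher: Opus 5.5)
\textbf{Gap in Step 3: the comparison environment cannot separate.} Your alternative $\tilde\nu_j$ contains every change point of $\nu_1$ plus the two ends of $B_j$. On an environment with $m+2$ change points, $(\delta,\eta,m)$-correctness only asks for $m$ of them. So the output that certifies $E_1$ (both ends of $B_1$ together with the other $m-2$ change points) is also a correct output for $\tilde\nu_j$. Correctness on $\tilde\nu_j$ therefore places no upper bound on $\mathbb{P}_{\pi,\tilde\nu_j}(E_1\cap\{\mathcal{T}_\pi<T\})$, and the claimed $\le 3\delta$ is unfounded. The two probabilities of that event are far apart only if $\pi$ samples $B_j$ substantially under $\nu_1$, which is exactly what you are trying to prove. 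So the step $\kl(1-2\delta,3\delta)\le \KL$ has no basis, and no bound on $T$ follows. More generally, you want the alternative to differ from the reference only on $B_j$, so that only $N_{B_j}$ enters the KL. Then the alternative is ``reference plus a bump'', and if the reference already has $m$ change points, correctness cannot distinguish the two.

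\textbf{Missing idea: a null reference.} The paper takes as reference the environment $\bar\nu$ in which the bump is removed, leaving $m-2$ change points. Then $\KL(\bar{\mathbb{P}},\mathbb{P}_{u})=\tfrac{\tilde\Delta^2}{2}\bar{\mathbb{E}}[N_{B_u}]$ (for the algorithm truncated at $\chi$) counts only samples in the translated support $B_u$. These counts are taken under a measure that does not depend on the shift $u$. Averaging over $u$ uniform on $[0,1/4]$ then gives at most $2\tilde\vartheta\tilde\Delta^2\chi$, with no pigeonhole and no disjoint events $E_j$.

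The separation comes from a separate lemma: $\bar{\mathbb{P}}(\mathcal{T}_\pi\le\chi)\le 2\delta$. On the first $\chi$ observations, $\bar\nu$ is the total-variation limit of two $m$-change-point environments. Each carries a plateau of vanishing height, one near the left end and one near the right end of the sliding region. Correctness forces incompatible outputs on these two environments, and Bretagnolle--Huber then yields $\log(1/(6\delta))$.

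This route needs no $2\eta$-separation between bump positions, so it also covers $s\ll\eta$. In that regime your Step 2 falls back to $K\asymp 1/\eta$ positions and would only give a bound of order $1/(\eta\Delta^2)$, whereas the lemma assumes nothing relating $s$ and $\eta$.

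\textbf{Direction error in Step 1.} Maximality of $i^\star$ together with $s_{j^\star}\le s_{i^\star}$ forces $|\Delta_{j^\star}|\ge|\Delta|$. A bump returning to its level with both magnitudes $|\Delta|$ therefore \emph{shrinks} the $j^\star$ jump rather than enlarging it. This gives $H_{\mathrm{localize}}^{\nu'}\in[H_{\mathrm{localize}}^{\nu},2H_{\mathrm{localize}}^{\nu}]$, which violates the required $H_{\mathrm{localize}}^{\nu'}\le H_{\mathrm{localize}}^{\nu}$. The paper instead uses jumps $\pm\sqrt{2}|\Delta|$ on a plateau of length $s/2$. This keeps the energy at $s\Delta^2$ and contributes exactly $\Delta^{-2}$ to $H_{\mathrm{localize}}$, which lies between $\tfrac12(\Delta^{-2}+\Delta_{j^\star}^{-2})$ and $\Delta^{-2}+\Delta_{j^\star}^{-2}$.
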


From these two lemmas, we can now deduce Theorem~\ref{thm:main_LB}.

\begin{proof}[proof of Theorem~\ref{thm:main_LB}]
The proof is a direct consequence of Lemmas~\ref{thm:LB_H_localize_v2} and~\ref{thm:LB_H_detect}. 

First, assume that $N=m=1$. Recall that in this case there is a single change point with energy $\mathcal{E}_1=\Delta_1^2$ and sparsity $s_1=1$, so $H_{\mathrm{detect}}=H_{\mathrm{localize}}$. We apply Lemma~\ref{thm:LB_H_localize_v2} to obtain an environment $\nu'$ such that $H_{\mathrm{localize}}^{(m,\nu)}=H_{\mathrm{localize}}^{(m,\nu')}$ (and similarly for $H_{\mathrm{detect}}$) and such that
\begin{align*}
\mathbb{P}_{\pi,\nu'}\left(\mathcal{T}_{\pi}\geqslant \frac{1}{\Delta_1^2}\log\left(\frac{1}{8\delta}\right)+ \frac{1}{\Delta_{1}^{2}}\log_+\!\left(\frac{1}{16\eta}\right)\right)\geqslant \delta  \enspace,
\end{align*}
and the claimed bound follows from the fact that $H_{\mathrm{detect}}=H_{\mathrm{localize}}$ in this case.

Second, assume that $N=m\geqslant 2$. We apply Lemma~\ref{thm:LB_H_detect}, and Lemma~\ref{thm:LB_H_localize_v2} to obtain an environment $\nu'$ such that $H_{\mathrm{detect}}^{(m,\nu)}\leqslant H_{\mathrm{detect}}^{(m,\nu')}\leqslant 4H_{\mathrm{detect}}^{(m,\nu)}$, $\frac{1}{2}H_{\mathrm{localize}}^{(m,\nu)} \leqslant H_{\mathrm{localize}}^{(m,\nu')}\leqslant H_{\mathrm{localize}}^{(m,\nu)}$, and $\mathbb{P}_{\pi,\nu'}\left(\mathcal{T}_{\pi}\geqslant \chi \right)\geqslant \delta$, with 
\begin{align*}
\chi= &\max\left(\frac{1}{2}H_{\mathrm{detect}}^{(m,\nu)}\log\left(\frac{1}{6\delta}\right),H_{\mathrm{localize}}^{(m,\nu)}\log\left(\frac{1}{8\delta}\right)+\sum_{i=1}^m \frac{1}{\Delta_{i}^{2}}\log_+\!\left(\frac{s_i}{16\eta}\right)\right) \\
	\geqslant & \frac{1}{4}H_{\mathrm{detect}}^{(m,\nu)}\log\left(\frac{1}{8\delta}\right)+\frac{1}{2}H_{\mathrm{localize}}^{(m,\nu)}\log\left(\frac{1}{8\delta}\right)+\frac{1}{2}\sum_{i=1}^m \frac{1}{\Delta_{i}^{2}}\log_+\!\left(\frac{s_i}{16\eta}\right) \enspace,
\end{align*}
which concludes the proof.
\end{proof}

Now, we can deduce Corollary~\ref{cor:main_LB} from Theorem~\ref{thm:main_LB}, and Theorem 5.2 from~\cite{pmlr-v267-lazzaro25a}.

\begin{proof}[Proof of Corollary~\ref{cor:main_LB}]
First, observe that for any environment $\nu$ with $m\geqslant 1$ change points, the condition $\vartheta_i>2\eta$ for $i=2,\dots,m-1$ allows us to apply Theorem~5.2 from~\cite{pmlr-v267-lazzaro25a}, which states that for any $(\delta,\eta,m)$-correct algorithm $\pi$,
\begin{align}
\mathbb{E}_{\pi,\nu}[\mathcal{T}_{\pi}]\geqslant 4H_{\mathrm{localize}}^{(m,\nu)}\log\left(\frac{1}{4\delta}\right) \label{eq:exp_LB_1} \enspace.
\end{align}
Second, we obtain the lower bound in $\mathbb{E}[\mathcal{T}_{\pi}]$ independent of $\delta$ by applying Theorem~\ref{thm:main_LB} with $\delta=1/16$, observing that $\nu$ is in particular $(1/16,\eta,m)$-correct, as $\delta\leqslant 1/16$. From Theorem~\ref{thm:main_LB},  there exists an environment $\nu'$ such that $H_{\mathrm{detect}}^{(m,\nu)}\leqslant H_{\mathrm{detect}}^{(m,\nu')}\leqslant 4H_{\mathrm{detect}}^{(m,\nu)}$, $\frac{1}{2}H_{\mathrm{localize}}^{(m,\nu)} \leqslant H_{\mathrm{localize}}^{(m,\nu')}\leqslant H_{\mathrm{localize}}^{(m,\nu)}$, and such that 
\begin{align*}
\mathbb{P}_{\pi,\nu'}\left(\mathcal{T}_{\pi}\geqslant\tfrac{1}{4}H_{\mathrm{detect}}^{(m,\nu)}\log\left(2\right)+\tfrac{1}{2}H_{\mathrm{localize}}^{(m,\nu)}\log\left(2\right)+\tfrac{1}{2}\sum_{i=1}^m \frac{1}{\Delta_{i}^{2}}\log_+\!\left(\frac{s_i}{16\eta}\right)\right)\geqslant 1/16  \enspace.
\end{align*}
Now, from Markov's inequality, we can lower bound $\mathbb{E}_{\pi,\nu'}[\mathcal{T}_{\pi}]$ as
\begin{align}
\mathbb{E}_{\pi,\nu'}[\mathcal{T}_{\pi}]&\geqslant \frac{1}{16}\left(\frac{1}{4}H_{\mathrm{detect}}^{(m,\nu)}\log\left(2\right)+\frac{1}{2}H_{\mathrm{localize}}^{(m,\nu)}\log\left(2\right)+\frac{1}{2}\sum_{i=1}^m \frac{1}{\Delta_{i}^{2}}\log_+\!\left(\frac{s_i}{16\eta}\right)\right)\label{eq:exp_LB_2}
\end{align}
Now, the final lower bound from Corollary~\ref{cor:main_LB} follows from Equations~\eqref{eq:exp_LB_1},\eqref{eq:exp_LB_2}, and the standard inequality $\max(a,b)\geqslant (a+b)/2$ for $a,b\geqslant 0$.

\end{proof}

\subsection{Proof of Lemma~\ref{thm:LB_H_localize_v2}}

\begin{proof}[Proof of Lemma~\ref{thm:LB_H_localize_v2}]
\noindent\textbf{Proof roadmap.}
The proof has four steps.

First, we construct a family ${(\nu_J)}_J$, indexed by $J=(j_1,\dots,j_m)\in \prod_{i=1}^m \{0,\dots,\alpha_i-1\}$, by shifting each change point $x_i^*$ over $\alpha_i \sim s_i/8\eta$ candidate positions. We pay attention to preserve roughly the length of the change points,  so that the complexities $H_{\mathrm{detect}}^{(m,\nu)}$ is preserved (up to a numerical constant) under any of the considered shifts. Second, for each $i\in\{1,\dots,m\}$, we define an alternative environment $\nu_{J^{(-i)}}$ by removing from $\nu_J$ the $i$-th gap.
Third, we compare the behavior of the algorithm under $\nu_J$ and $\nu_{J^{(-i)}}$. For that, we consider the event $\xi_{i,j_i}=\{\mathcal{T}_{\pi}\leqslant\chi,\ |\hat x_i-x_i^{J}|\leqslant\eta\}$, where $x_i^{J}$ is the position of the $i$-th change point in environment $\nu_J$, and $\chi$ is the largest $(1-\delta)$-quantile of the budget under the family of environments ${(\nu_J)}_J$. We show that $\xi_{i,j_i}$ is likely under $\nu_J$ and unlikely on average under $\nu_{J^{(-i)}}$. These bounds allow us to lower bound the $\KL$ between the respective probability distributions, thanks to data-processing inequalities. Finally, we compute the KL divergence between the probability under $\nu_J$ and $\nu_{J^{(-i)}}$, which leads to a  lower bound on the quantile budget $\chi$, which translates Lemma~\ref{thm:LB_H_localize_v2}.

\vskip 0.2 cm
\noindent\textbf{Reference class of environments.}

Consider the parameters of the environment $\nu$ given by the gaps $\Delta_1,\dots,\Delta_m$, the initial mean $\mu_0$, and the change point positions $x_1^*,\dots,x_m^*$, and the function $f$ defined by Equation~\ref{eq:model}. Recall that $x^*_0=0$ and $x^*_{m+1}=1$. Consider the length of the change points $\vartheta_0=1$, for $i=1,\dots,m-1$ $\vartheta_i=x_{i+1}^*-x_i^*$, and $\vartheta_m=1$.

Fix $\eta'>\eta$, which will be used to shift the change points. For $i=1,\dots,m$, define 
\[
\alpha_i:=1 \vee \left\lfloor\frac{s_i}{8\eta'}\right\rfloor \enspace,
\]
as the number of possible positions for the change point $i$ in the environments that we will construct. Observe that $\alpha_i\geqslant 1$ for all $i=1,\dots,m$.

Let $J=(j_1,\dots,j_m)$ be a vector of size $m$, so that  for each $i=1,\dots,m$,  $0\leqslant j_i \leqslant \alpha_i-1$. 
For each $i=1,\dots,m$, define 
\begin{equation}
x_i^{J}= \frac{1}{2} x_i^*+ j_i \cdot 2\eta'
\end{equation}
By convention, $x_{0}^{J}:=0$ and $x_{m+1}^{J}:=1$.

For each $J=(j_1,\dots,j_m)$, define $\nu_{J}$ as the environment with the same gaps as $\nu$, and where the position of the change points are given by the vector  ${(x_i^{J})}_{i=1,\dots,m}$. Note that $\nu_{J}$ has also $m$ change points and that the gaps are unchanged. Moreover, for each $i=1,\dots,m-1$, the change points $x_i^{J}$ and $x_{i+1}^{J}$ are spaced by at least $\vartheta_i/4$ and at most $(3/4)\vartheta_i$, so that the complexities $H_{\mathrm{detect}}^{(m,\nu)}$ are preserved (up to a numerical constant) under any of the considered shifts. We point to Lemma~\ref{lem:complexity_nu_J} for a formal statement of this fact.

Denote as $\mathbb{P}_{\pi,J}$ the probability distribution induced by the interaction between an algorithm $\pi$, and the environment is $\nu_J$, and by $\mathbb{E}_{\pi,J}$ the associated expectation.

\noindent\textbf{Alternative environments.}

Let $i\in\{1,\dots,m\}$ and fix for now $j_i\in\{0,\dots,\alpha_i-1\}$. Fix any $J\in\prod_{k=1}^{m} \{0,\dots,\alpha_k-1\}$, such that $j_i$ is the $i$-th component of $J$. Denote as $J^{(-i)}$ the vector obtained from $J$ by removing the $i$-th component $j_i$. Define $\nu_{J^{(-i)}}$ as the environment obtained from $\nu_J$ by pulling to zero the gap $\Delta_i$, so that $\nu_{J^{(-i)}}$ has $m-1$ change points. Note that $\nu_{J^{(-i)}}$ and $\nu_J$ only differ for arms in the interval $[x_i^{J},x_{i+1}^{J})$, and are identical elsewhere. Observe moreover that $\nu_{J^{(-i)}}$ does not depend on the index $j_i$, as the change point $i$ is pulled to zero. 

Denote as $\mathbb{P}_{\pi,J^{(-i)}}$ the probability distribution of the observations when the algorithm is $\pi$, and the environment is $\nu_{J^{(-i)}}$ and by $\mathbb{E}_{\pi,J^{(-i)}}$ the associated expectation.

\noindent\textbf{Bound on total variation.}

Let $\pi$ be an algorithm for this problem, which is $(\delta,\eta)$-correct, and let $\hat x_1,\dots,\hat x_m$ be the corresponding estimates of the change points. Denote as $\mathcal{T}_{\pi}$ the stopping time of $\pi$.
Define $\chi$ as the supremum of the $(1-\delta)$-quantile of the budget over the family of probability distributions $\mathbb{P}_{\pi,J}$ for $J\in\prod_{i=1}^m\{0,\dots,\alpha_i-1\}$. Namely,
\begin{equation}\label{def:chi_localize}
\chi=\inf\{t>0: \forall J\in\prod_{i=1}^m\{0,\dots,\alpha_i-1\}, \mathbb{P}_{\pi,J}(\mathcal{T}_{\pi}
\geqslant t)\leqslant \delta\}\enspace .
\end{equation}

Let $i\in\{1, \dots,m\}$. Let $J$ be a vector of $m$ indices, whose $i$-th component is $j_i$. 
 Define the event 
\begin{equation*}
\xi_{i,j_i}=\{\mathcal{T}_{\pi}\leqslant \chi,\; |\hat x_i - x_i^{J}|\leqslant \eta\}\enspace .
\end{equation*}

Recall that the environment $\nu_{J}$ contains $m$ change points, and the $i$-th change point is $x_i^{J}$. Then, it follows from the $(\delta,\eta)$-correctness of $\pi$ that
\[
\mathbb{P}_{\pi,J}(|\hat x_i - x_i^{J}|> \eta) \leqslant \delta\enspace.
\] 
Moreover, by definition of $\chi$ (see~\eqref{def:chi_localize}), it holds that 
\[
\mathbb{P}_{\pi,J}(\mathcal{T}_{\pi} \geqslant \chi)\leqslant \delta\enspace.
\] 

Therefore, by union bound, we have that, for all $J$ such that $j_i$ is the $i$-th component of $J$, 
\begin{equation}\label{eq:TV_bound_reference}
\mathbb{P}_{\pi,J}(\xi_{i,j_i})\geqslant 1-2\delta \enspace. 
\end{equation}

Consider now the environment $\nu_{J^{(-i)}}$, which differs from $\nu_J$ by the fact that the gap at change point $i$ is pulled to zero. Under this environment, the change point $i$ is not identifiable, and the algorithm should not be able to identify it in finite time. In particular, we have the following bound, stated as Lemma~\ref{lemma:TV_bound_alternative}, whose proof is deferred at the end of this section, and which hold as long as $\eta<1/8$. 

\begin{equation}\label{eq:TV_bound_alternative}
	\text{For all } J^{(-i)}, \text{ it holds that } \quad \mathbb{P}_{\pi,J^{(-i)}}(\mathcal{T}_{\pi}\leqslant\chi)\leqslant 2\delta\enspace.
\end{equation}

Let $j_i\in \{0,\dots \alpha_i-1\}$ and recall that the position of the change point $i$ in the environment $\nu_J$ is $x_i^{J}=x_i^*+j_i\cdot 2\eta' $. In particular, for any $\tilde{\jmath}_i\in \{0,\dots,\alpha_i-1\}$, with $\tilde{\jmath}_i\neq j_i$, if $\tilde J$ is such that its $i$-th component is $\tilde{\jmath}_i$, then, one has $|x_i^{J}-x_i^{\tilde J}|\geqslant 2\eta'>2\eta$. Therefore, the events $\xi_{i,j_i}$ and $\xi_{i,\tilde{\jmath}_i}$ are disjoint. Moreover, observe that the environment $\nu_{J^{(-i)}}$ does not depend on the index $j_i$, as the change point $i$ is pulled to zero. 

Therefore, we can write, for a fixed $i$ and for a fixed vector of indices $J^{(-i)}$,
\begin{align*}
	\frac{1}{\alpha_i}\sum_{j_i=0}^{\alpha_i-1} \mathbb{P}_{\pi,J^{(-i)}}(\xi_{i,j_i}) & = \frac{1}{\alpha_i}\mathbb{E}_{\pi,J^{(-i)}}\left[\sum_{j_i=0}^{\alpha_i-1} \mathbf{1}_{\xi_{i,j_i}}\right] = \frac{1}{\alpha_i}\mathbb{P}_{\pi,J^{(-i)}}\left[{\displaystyle\bigsqcup_{j_i=0}^{\alpha_i-1} \xi_{i,j_i}}\right] \\
	& \leqslant \frac{1}{\alpha_i} \mathbb{P}_{\pi,J^{(-i)}}(\mathcal{T}_{\pi}\leqslant \chi) \\
	& \leqslant \frac{2\delta}{\alpha_i}\enspace,
\end{align*}
where we use that, $\forall j_i$, $\xi_{i,j_i}\subset \{\mathcal{T}_{\pi}\leqslant \chi\}$, and the last inequality follows from Equation~\eqref{eq:TV_bound_alternative}.

On the other hand, it follows from Equation~\eqref{eq:TV_bound_reference}, averaging over $j_i$, that, 
\begin{equation*}
\frac{1}{\alpha_i}\sum_{j_i=0}^{\alpha_i-1} \mathbb{P}_{\pi,J}(\xi_{i,j_i})\geqslant 1-2\delta\enspace,
\end{equation*}
where we point out that the distribution $\mathbb{P}_{\pi,J}$ on the left sum depends on the index $j_i$. 

From these two bounds, using monotonicity of $\kl(\cdot,\cdot)$ in each argument (valid as soon as  $2\delta/\alpha_i \leqslant 1-2\delta$, which holds for $\delta\leqslant 1/4$ and $\alpha_i\geqslant 1$), and joint convexity of the $\kl$ divergence (e.g., Section 4.1 in~\cite{polyanskiy2014lecture}), we write 
\begin{align}
\kl\left(1-2\delta,\frac{2\delta}{\alpha_i}\right) &  \leqslant  \kl\left(\frac{1}{\alpha_i}\sum_{j_i=0}^{\alpha_i-1}\mathbb{P}_{\pi,J}(\xi_{i,j_i}),\frac{1}{\alpha_i}\sum_{j_i=0}^{\alpha_i-1}\mathbb{P}_{\pi,J^{(-i)}}(\xi_{i,j_i})\right) \nonumber \\  
& \leqslant  \frac{1}{\alpha_i}\sum_{j_i=0}^{\alpha_i-1}\kl\left(\mathbb{P}_{\pi,J}(\xi_{i,j_i}),\mathbb{P}_{\pi,J^{(-i)}}(\xi_{i,j_i})\right) \label{eq:kl_bound}
\enspace.
\end{align}

Observe that the bound~\eqref{eq:kl_bound} holds for any components $J^{(-i)}={(j_k)}_{k\ne i}$. In particular, we can average over $J^{(-i)}$ to write
\begin{align*}
\kl\left(1-2\delta,\frac{2\delta}{\alpha_i}\right) & \leqslant \frac{1}{\alpha_i}\sum_{j_i=0}^{\alpha_i-1} \frac{1}{\prod_{k\ne i} \alpha_k}\sum_{J^{(-i)}\in \prod_{k\ne i} \{0,\dots,\alpha_k-1\}} \kl\left(\mathbb{P}_{\pi,J}(\xi_{i,j_i}),\mathbb{P}_{\pi,J^{(-i)}}(\xi_{i,j_i})\right) \\
& = \frac{1}{\prod_{k=1}^m \alpha_k}\sum_{J\in \prod_{k=1}^m \{0,\dots,\alpha_k-1\}} \kl\left(\mathbb{P}_{\pi,J}(\xi_{i,j_i}),\mathbb{P}_{\pi,J^{(-i)}}(\xi_{i,j_i})\right) \\
& = \mathbb{E}_{J\sim\mathcal{U}}\left[\kl\left(\mathbb{P}_{\pi,J}(\xi_{i,j_i}),\mathbb{P}_{\pi,J^{(-i)}}(\xi_{i,j_i})\right)\right] \enspace,
\end{align*}
where the last equality is the notation for the expectation over $J$ when $J$ is uniformly distributed in $\prod_{k=1}^m \{0,\dots,\alpha_k-1\}$.

Now, $\xi_{i,j_i}$ is measurable with respect to the observations of $\pi$ up to time $\chi$, so that the probability of $\xi_{i,j_i}$ under $\mathbb{P}_{\pi,J}$ and $\mathbb{P}_{\pi,J^{(-i)}}$ only depends on the distribution of the observations up to time $\chi$. Consider then the procedure $\tilde\pi$ defined as follows: for $t=1,\dots,\chi$, $\tilde\pi$ behaves like $\pi$. If at time $\chi$, the algorithm $\pi$ did not stop, then $\tilde\pi$ stops anyway and returns an error. For any environment, the distribution of the observations of $\tilde\pi$ up to time $\chi$ is the same as the distribution of the observations of $\pi$ up to time $\chi$. Moreover, the algorithm $\tilde\pi$ stops at time $\chi$ at the latest, so that the budget of $\tilde\pi$ is at most $\chi$. Therefore, we can rewrite the above inequality, and use the data-processing inequality for the KL divergence, to write
\begin{equation}
\kl\left(1-2\delta,\frac{2\delta}{\alpha_i}\right) \leqslant \mathbb{E}_{J\sim\mathcal{U}}\left[\KL\left(\mathbb{P}_{\tilde\pi,J},\mathbb{P}_{\tilde\pi,J^{(-i)}}\right)\right] \enspace. \label{eq:main_KL_bound}
\end{equation}

\noindent\textbf{Bound on the $\KL$ divergence.}

Fix $i\in\{1,\dots,m\}$ and $J\in \prod_{k=1}^{m} \{0,\dots,\alpha_k-1\}$.
We want to bound the KL divergence $\KL\left(\mathbb{P}_{\tilde\pi,J},\mathbb{P}_{\tilde\pi,J^{(-i)}}\right)$.

Now, observe that the environments $\nu_J$ and $\nu_{J^{(-i)}}$ only differ by a constant magnitude $\Delta_i$ on the interval $[x_i^{J},x_{i+1}^{J})$, and are identical elsewhere. Denote as $f_J$ for the reward function characterizing $\nu_{J}$, and $f_{J^{(-i)}}$ for the reward function characterizing $\nu_{J^{(-i)}}$. One has 
\[
f_J-f_{J^{(-i)}}=\Delta_i \mathbb{I}\{x\in [x_i^{J},x_{i+1}^{J})\}\enspace.
\]
Recall that the noise is assumed to be Gaussian with variance $1$. 

 Therefore, we can use the decomposition of the KL divergence for bandit problems, under a continuous action space, to write
\begin{align*}
\KL\left(\mathbb{P}_{\tilde\pi,J},\mathbb{P}_{\tilde\pi,J^{(-i)}}\right) & = \mathbb{E}_{\tilde\pi,J}\left[\sum_{t=1}^{\mathcal{T}_{\tilde\pi}} \KL\left(\mathcal{N}(f_J(x_t),1),\mathcal{N}(f_{J^{(-i)}}(x_t),1)\right)\right] \\
& = \mathbb{E}_{\tilde\pi,J}\left[\sum_{t=1}^{\mathcal{T}_{\tilde\pi}} \frac{{(f_J(x_t)-f_{J^{(-i)}}(x_t))}^2}{2}\right] \\
& = \frac{\Delta_i^2}{2}\mathbb{E}_{\tilde\pi,J}\left[\sum_{t=1}^{\mathcal{T}_{\tilde\pi}} \mathbb{I}\{x_t\in [x_i^{J},x_{i+1}^{J})\}\right] \enspace.
\end{align*}

Now, let average over $J$ uniformly distributed in $\prod_{k=1}^m \{0,\dots,\alpha_k-1\}$, from Equation~\eqref{eq:main_KL_bound}, we obtain
\begin{align*}
	\frac{2}{\Delta_i^2}\kl\left(1-2\delta,\frac{2\delta}{\alpha_i}\right) & \leqslant \mathbb{E}_{J\sim\mathcal{U}}\left[\mathbb{E}_{\tilde\pi,J}\left[\sum_{t=1}^{\mathcal{T}_{\tilde\pi}} \mathbb{I}\{x_t\in [x_i^{J},x_{i+1}^{J})\}\right]\right] \enspace.
\end{align*}
Observe that the averaging measure $\mathbb{E}_{J\sim\mathcal{U}}\mathbb{E}_{\tilde\pi,J}$ is the same for each index $i$. Then, we sum over $i=1,\dots,m$ 
\begin{align*}
\sum_{i=1}^m \frac{2}{\Delta_i^2}\kl\left(1-2\delta,\frac{2\delta}{\alpha_i}\right) & \leqslant \sum_{i=1}^m \mathbb{E}_{J\sim\mathcal{U}}\left[\mathbb{E}_{\tilde\pi,J}\left[\sum_{t=1}^{\mathcal{T}_{\tilde\pi}} \mathbb{I}\{x_t\in [x_i^{J},x_{i+1}^{J})\}\right]\right] \\
& = \mathbb{E}_{J\sim\mathcal{U}}\left[\mathbb{E}_{\tilde\pi,J}\left[\sum_{t=1}^{\mathcal{T}_{\tilde\pi}} \sum_{i=1}^m \mathbb{I}\{x_t\in [x_i^{J},x_{i+1}^{J})\}\right]\right] \\
& \leqslant \mathbb{E}_{J\sim\mathcal{U}}\left[\mathbb{E}_{\tilde\pi,J}\left[\mathcal{T}_{\tilde\pi}\right]\right] \\
& \leqslant \chi\enspace,
\end{align*}
where the second inequality follows from the fact that the intervals $[x_i^{J},x_{i+1}^{J})$ are disjoint, and the last inequality follows from the fact that, under $\tilde \pi$, the budget $\mathcal{T}_{\tilde\pi}$ is at most $\chi$.

By definition of $\chi$ as an infimum, this means that there exists $J\in\prod_{i=1}^m\{0,\dots,\alpha_i-1\}$ such that
\begin{align*}
\mathbb{P}_{\pi,J}\left(\mathcal{T}_{\pi}\geqslant 2\sum_{i=1}^m \Delta_{i}^{-2} \kl\left(1-2\delta,\frac{2\delta}{\alpha_i}\right) \right)\geqslant \delta  \enspace, 
\end{align*}
in particular, there exists an environment $\nu'=\nu_J$ such that the above holds. Moreover, by construction, the gaps of $\nu'$ are the same as the gaps of $\nu$, and for each $i\in\{1,\dots,m-1\}$, $(1/4)\vartheta_i\leqslant\vartheta'_i\leqslant (3/4)\vartheta_i$, as verified in Lemma~\ref{lem:complexity_nu_J}. 

Finally, use  the bound $ \kl\left(1-2\delta,\frac{2\delta}{\alpha_i}\right) \geqslant \frac{1}{2}\log\left(\frac{\alpha_i}{8\delta}\right)$ from technical Lemma~\ref{lem:kl_technical},	and with the bound $\log(\alpha_i) \geqslant \log_+\!\left(\frac{s_i}{16\eta'}\right)$, which follows from the definition of $\alpha_i$. As $\eta'>\eta$ is arbitrary, we can conclude by taking the limit $\eta'\to\eta$. 
\end{proof}

\begin{lemma}\label{lem:complexity_nu_J}
For all $J\in\prod_{i=1}^m \{0,\dots,\alpha_i-1\}$, it holds that $\nu_J$ is a valid environment with $m$ change points, with the same jumps at $\nu$, and with spacing  $(1/4)\vartheta_i \leqslant \vartheta_i^J \leqslant (3/4)\vartheta_i$ for each $i=1,\dots,m-1$. 
In particular, $H_{\mathrm{localize}}^{\nu_J}=H_{\mathrm{localize}}^{\nu}$, and $\frac{4}{3} H_{\mathrm{detect}}^{\nu} \leqslant H_{\mathrm{detect}}^{\nu_J}\leqslant 4H_{\mathrm{detect}}^{\nu}$.
\end{lemma}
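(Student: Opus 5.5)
The plan is a direct verification. Since $\nu_J$ keeps the baseline $\mu_0$ and the jumps $\Delta_1,\dots,\Delta_m$ of $\nu$ and only moves the change points, I first show that $x_1^J<\dots<x_m^J$ lie in $(0,1)$, so $\nu_J$ is a valid $m$-change-point environment. Next I show that every interior spacing is rescaled by a factor in $[1/4,3/4]$, and finally I transfer this to $s_i$, $\mathcal{E}_i$ and the two complexities. The identity $H_{\mathrm{localize}}^{\nu_J}=H_{\mathrm{localize}}^{\nu}$ is immediate from~\eqref{eq:H_loc}, since the jump magnitudes, and hence their decreasing rearrangement, are unchanged.

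\textbf{Key step: the displacement bound.} I claim that $0\leqslant 2\eta' j_i\leqslant s_i/4$ for every $i$ and every admissible $j_i$. If $\alpha_i=1$, then $j_i=0$. Otherwise $j_i\leqslant\alpha_i-1<s_i/(8\eta')$, so $2\eta' j_i< s_i/4$. Now fix $i\in\{1,\dots,m-1\}$ and write
\[
\vartheta_i^J=x_{i+1}^J-x_i^J=\tfrac12\vartheta_i+2\eta'(j_{i+1}-j_i).
\]
Both displacements lie in $[0,\cdot]$, so $|2\eta'(j_{i+1}-j_i)|\leqslant\max(2\eta' j_i,2\eta' j_{i+1})\leqslant\max(s_i,s_{i+1})/4$. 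Both $s_i$ and $s_{i+1}$ are at most $\vartheta_i$ by the definition of local spacing, which gives $\tfrac14\vartheta_i\leqslant\vartheta_i^J\leqslant\tfrac34\vartheta_i$. In particular $\vartheta_i^J>0$, so the order is strict. For the domain, $x_1^J\geqslant x_1^*/2>0$ and $x_m^J\leqslant x_m^*/2+s_m/4<1/2+1/4<1$. The only delicate point in this step is using $s_i,s_{i+1}\leqslant\vartheta_i$, which bounds \emph{both} neighbouring displacements by the \emph{shared} spacing.

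\textbf{Transfer to $H_{\mathrm{detect}}$.} Assume first $m\geqslant 2$. Since every $\vartheta_i<1$ for $i\in\{1,\dots,m-1\}$, the boundary conventions $\vartheta_0=\vartheta_m=1$ never realise the minimum, both for $\nu$ and for $\nu_J$ (where $\vartheta^J_i\leqslant \tfrac34$). So $s_1=\vartheta_1$, $s_m=\vartheta_{m-1}$, and interior $s_i$ are minima of two interior spacings; the same holds for $s_i^J$. The map $(a,b)\mapsto a\wedge b$ is monotone and positively homogeneous, so the spacing bounds give $\tfrac14 s_i\leqslant s_i^J\leqslant\tfrac34 s_i$ for all $i$. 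Multiplying by the unchanged $\Delta_i^2$ yields $\tfrac14\mathcal{E}_i^2\leqslant(\mathcal{E}_i^J)^2\leqslant\tfrac34\mathcal{E}_i^2$. Inverting and taking the maximum over $i$ then gives $\tfrac43 H_{\mathrm{detect}}^{\nu}\leqslant H_{\mathrm{detect}}^{\nu_J}\leqslant 4H_{\mathrm{detect}}^{\nu}$.

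\textbf{Case $m=1$.} Here $s_1=s_1^J=1$ by convention, so $H_{\mathrm{detect}}^{\nu_J}=H_{\mathrm{detect}}^{\nu}=\Delta_1^{-2}$. The upper bound holds trivially, but the factor $\tfrac43$ in the lower bound cannot hold. I would therefore state the two-sided $H_{\mathrm{detect}}$ claim for $m\geqslant2$ and record $H_{\mathrm{detect}}^{\nu_J}=H_{\mathrm{detect}}^{\nu}$ for $m=1$. This is all that is used downstream: the proof of Theorem~\ref{thm:main_LB} treats $m=1$ separately through $H_{\mathrm{detect}}=H_{\mathrm{localize}}$.

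I expect the main obstacle to be the boundary bookkeeping in this last stage. The contraction $s_i^J\leqslant\tfrac34 s_i$ relies on the convention values $1$ never being the minimum, which is exactly what fails when $m=1$.
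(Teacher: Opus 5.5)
Your proof is correct and follows essentially the same route as the paper. Your bound $2\eta' j_i\leqslant s_i/4$ combined with $s_i\vee s_{i+1}\leqslant\vartheta_i$ is exactly the paper's estimate $2\eta'|j_{i+1}-j_i|\leqslant 2\eta'\big((\alpha_i-1)\vee(\alpha_{i+1}-1)\big)\leqslant\vartheta_i/4$, and your case split on $\alpha_i=1$ avoids the paper's loose step $\alpha_m\leqslant s_m/(8\eta')$. You also write out the transfer to $s_i^J$, $\mathcal{E}_i^J$ and $H_{\mathrm{detect}}$ that the paper leaves implicit, and your caveat is right: for $m=1$ one has $s_1=s_1^J=1$, so $H_{\mathrm{detect}}^{\nu_J}=H_{\mathrm{detect}}^{\nu}$ and the stated factor $\tfrac43$ fails. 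This is harmless, because the proof of Theorem~\ref{thm:main_LB} uses only this equality when $m=1$.
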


\begin{proof}

First, we verify that the environment $\nu_J$ is valid and contains $m$ change points. For validity, we only need the positions of the change points $(x_i^J)_{i=1}^m$ to be distinct and to belong to $(0,1)$.

Consider $x_m^J=\frac{1}{2}x_m^*+j_m\cdot 2\eta'$. One has $j_m\cdot 2\eta'\leqslant \alpha_m \cdot 2\eta' \leqslant \frac{s_m}{8\eta'} \cdot 2\eta'\leqslant 1/4$, and $x_m^*\leqslant 1$, so that  $x_m^J<1$. 

Moreover, for each $i=1,\dots,m-1$, the change points $x_i^{J}$ and $x_{i+1}^{J}$ are spaced by at least $\vartheta_i/4$ and at most $(3/4)\vartheta_i$. Indeed, one has $x_{i+1}^{J}-x_i^{J}=\frac{1}{2}(x_{i+1}^*-x_i^*)+(j_{i+1}-j_i)\cdot 2\eta'=\frac{1}{2}\vartheta_i+(j_{i+1}-j_i)\cdot 2\eta'$. We will show that $|j_{i+1}-j_i|\cdot 2\eta' \leqslant \frac{\vartheta_i}{4}$, which will conclude the proof.

By assumption on $J$, we have that $j_{i+1}-j_i\in\{-(\alpha_i-1),\dots,\alpha_{i+1}-1\}$, so that $|j_{i+1}-j_i|\cdot 2\eta' \leqslant((\alpha_i-1) \vee (\alpha_{i+1}-1)) \cdot 2\eta'$. Now, by definition of $\alpha_i$ and $\alpha_{i+1}$, we have $\alpha_i \vee \alpha_{i+1}-1 = 1 \vee \left\lfloor\frac{\vartheta_{i-1}\wedge \vartheta_i}{8\eta'}\right\rfloor \vee \left\lfloor\frac{\vartheta_{i}\wedge \vartheta_{i+1}}{8\eta'}\right\rfloor -1\leqslant 0 \vee \left(\frac{\vartheta_{i}}{8\eta'}-1\right)$. Therefore, $|j_{i+1}-j_i|\cdot 2\eta' \leqslant 0\vee \left(\frac{\vartheta_i}{4}-2\eta'\right)\leqslant \frac{\vartheta_i}{4}$, which conclude the proof. 

\end{proof}

\begin{lemma}\label{lemma:TV_bound_alternative}

\begin{equation*}
	\text{For all } J^{(-i)}, \text{ it holds that } \quad \mathbb{P}_{\pi,J^{(-i)}}(\mathcal{T}_{\pi}\leqslant\chi)\leqslant 2\delta\enspace.
\end{equation*}
\end{lemma}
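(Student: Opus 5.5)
The idea is to compare $\nu_{J^{(-i)}}$, which has only $m-1$ change points and so is not covered by the correctness guarantee, with environments that do have $m$ change points and are statistically indistinguishable from it within $\chi$ samples. For $y\in(0,1)$ and $\varepsilon>0$, let $\nu^{y,\varepsilon}$ be $\nu_{J^{(-i)}}$ with an extra jump of size $\varepsilon$ added at $y$. It has exactly $m$ change points, so by $(\delta,\eta,m)$-correctness of $\pi$, with probability at least $1-\delta$ under $\nu^{y,\varepsilon}$ the $m$ estimates are matched to the $m$ change points at precision $\eta$. In particular some estimate lies within $\eta$ of $y$.

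\textbf{Step 1: transfer to $\nu_{J^{(-i)}}$ for fixed $y$.} Let $E_y$ be the event that $\mathcal{T}_\pi\leqslant\chi$ and the output is correct for $\nu^{y,\varepsilon}$, meaning the estimates can be matched injectively to the $m-1$ change points of $\nu_{J^{(-i)}}$ together with $y$. This event is measurable with respect to the first $\chi$ observations. Truncate $\pi$ at time $\chi$ as in the main proof, giving $\tilde\pi$. The divergence decomposition then gives $\KL(\mathbb{P}_{\tilde\pi,J^{(-i)}},\mathbb{P}_{\tilde\pi,\nu^{y,\varepsilon}})\leqslant \varepsilon^2\chi/2$. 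By Pinsker,
\[
\mathbb{P}_{\pi,J^{(-i)}}(E_y)\geqslant \mathbb{P}_{\pi,\nu^{y,\varepsilon}}(E_y)-\varepsilon\sqrt{\chi}/2\geqslant \mathbb{P}_{\pi,\nu^{y,\varepsilon}}(\mathcal{T}_\pi\leqslant\chi)-\delta-\varepsilon\sqrt{\chi}/2 .
\]
Applying Pinsker once more to the event $\{\mathcal{T}_\pi\leqslant\chi\}$, this becomes
\[
\mathbb{P}_{\pi,J^{(-i)}}(E_y)\geqslant \mathbb{P}_{\pi,J^{(-i)}}(\mathcal{T}_\pi\leqslant\chi)-\delta-\varepsilon\sqrt{\chi}.
\]
Since $\chi<\infty$ does not depend on $\varepsilon$, the error term vanishes as $\varepsilon\to0$. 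Note that $\varepsilon\leqslant1$ keeps $\nu^{y,\varepsilon}$ a valid environment.

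\textbf{Step 2: disjointness for two well-separated $y,y'$.} Pick $y,y'$ with $|y-y'|>2\eta$ and each at distance more than $2\eta$ from every change point of $\nu_{J^{(-i)}}$. Then any single estimate is within $\eta$ of at most one of the $m+1$ points formed by the old change points together with $y$ and $y'$. If $E_y\cap E_{y'}$ occurred, the $m$ estimates would have to cover all $m+1$ such points, which is impossible. Hence $E_y\cap E_{y'}=\emptyset$, and both events lie inside $\{\mathcal{T}_\pi\leqslant\chi\}$. Therefore
\[
\mathbb{P}_{\pi,J^{(-i)}}(\mathcal{T}_\pi\leqslant\chi)\geqslant \mathbb{P}(E_y)+\mathbb{P}(E_{y'})\geqslant 2\,\mathbb{P}_{\pi,J^{(-i)}}(\mathcal{T}_\pi\leqslant\chi)-2\delta-o(1).
\]
Letting $\varepsilon\to0$ gives the claim $\mathbb{P}_{\pi,J^{(-i)}}(\mathcal{T}_\pi\leqslant\chi)\leqslant2\delta$.

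\textbf{Main obstacle: finding room for $y$ and $y'$.} The hard part is guaranteeing that two such points exist. This is where the hypothesis $\eta<1/8$ and the compression $x_k^J=\tfrac12x_k^*+2j_k\eta'\leqslant 3/4$ from Lemma~\ref{lem:complexity_nu_J} enter: every change point of $\nu_J$ lies in $[0,3/4]$, so the strip $(3/4,1)$ of length $1/4$ is free. I would first place $y$ and $y'$ in this strip. This works directly when $1/4>4\eta$, that is when $\eta<1/16$. For $\eta\in[1/16,1/8)$ I expect to relax the requirement of distance $2\eta$ from the old change points. The relaxation is to use a Hall-type matching count: only the points of the strip $(3/4,1)$ compete with $x_{m}^J$ for estimates, so it suffices to check that no estimate can serve both $y$ and $y'$ while the old points still require $m-1$ estimates. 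Alternatively, I would shrink the strip's requirement by taking $y$ close to $1$ and $y'$ just beyond $3/4$. If neither fix suffices, the fallback is to tighten the shift constant in $x_i^J$ so that an even larger free strip is available.
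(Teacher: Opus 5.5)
Your strategy is the paper's. You approximate $\nu_{J^{(-i)}}$ by two environments, each carrying one extra jump of vanishing size $\varepsilon$ planted at a different place in the free region right of the last change point. You transfer probabilities of events decided by time $\chi$ via Pinsker, and then use that the two correctness events are incompatible. Step~1 and the final arithmetic are fine; your one-sided inequality with error $\varepsilon\sqrt{\chi}$ is even slightly cleaner than the paper's limits.

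The gap is in Step~2. You require $y$ and $y'$ to be more than $2\eta$ from every old change point and from each other. With $z<3/4$ the last change point of $\nu_{J^{(-i)}}$, this needs $z+4\eta<1$, so it is only guaranteed for $\eta<1/16$. The lemma is needed for all $\eta<1/8$. For $\eta\in[1/16,1/8)$ you list candidate fixes without verifying any. Your last fallback, changing the shift constant in $x_i^J$, is not available, because it alters the family $\nu_J$ on which this lemma, Lemma~\ref{lem:complexity_nu_J} and the surrounding proof are built. Your justification is also slightly off. The claim that ``each estimate is within $\eta$ of at most one of the $m+1$ points'' tacitly requires the old change points to be $2\eta$-separated from one another, which $\nu_J$ does not guarantee.

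The missing observation is that incompatibility only needs \emph{one} planted point to be isolated. Let $O$ be the set of change points of $\nu_{J^{(-i)}}$.
\begin{itemize}
\item Take $y'\in(z,z+l')$ with $0<l'<1/4-2\eta$, and take $y\in(y'+2\eta,1)$; this interval is nonempty since $y'<1-2\eta$.
\item All points of $O$ lie left of $y'$, so $y$ is at distance more than $2\eta$ from every point of $O\cup\{y'\}$.
\item On $E_y\cap E_{y'}$, the estimate matched to $y$ under $E_y$ lies within $\eta$ of $y$.
\item Under $E_{y'}$, that same estimate must lie within $\eta$ of some point of $O\cup\{y'\}$, which is impossible.
\end{itemize}
This is exactly your ``$y$ close to $1$, $y'$ just beyond'' option, and it works for every $\eta<1/8$.

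The paper gets the same effect from the ordered form of $(\delta,\eta,m)$-correctness. With $N=m$, $\hat x_m$ must be within $\eta$ of the largest change point, which is the planted one, at $z+l'$ or at $1-l'$. Splitting $\{\mathcal{T}_\pi\leqslant\chi\}$ according to whether $\hat x_m$ lies left or right of $(z+1)/2$ gives one failure event per perturbed environment. That only needs $l'+\eta<1/8$.
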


\begin{proof}[Proof of Lemma~\ref{lemma:TV_bound_alternative}]

We can prove it by considering two environments, with one vanishing change point of jump magnitude $\epsilon$. Denote as $\nu_{\epsilon}^{(1)}$ the environment obtained from $\nu_{J^{(-i)}}$ by adding a signal of magnitude $\epsilon$ on the interval $[x^*_{m}+l',1)$, and as $\nu_{\epsilon}^{(2)}$ the environment obtained from $\nu_{J^{(-i)}}$ by adding a signal of magnitude $\epsilon$ on the interval $[1-l',1)$. Here, we take $0<l'<1/8-\eta$, which is possible by assumption on $\eta<1/8$. These choice imply that the $m$-th change point under $\nu_{\epsilon}^{(1)}$, and the $m$-th change point under $\nu_{\epsilon}^{(2)}$ are spaced by at least $2\eta$.
Indeed $x^*_m+l'+\eta<1-l'-\eta$, which hold for $x^*_m\leqslant 3/4 \leqslant 1-l'-2\eta$.   Note that $\nu_{\epsilon}^{(1)}$ and $\nu_{\epsilon}^{(2)}$ have $m$ valid change points. 

Consider the decomposition of the event $\{\mathcal{T}_{\pi}\leqslant\chi\}$ as
\begin{equation}\label{eq:decomposition_xii}
	\{\mathcal{T}_{\pi}\leqslant\chi\} = \{\mathcal{T}_{\pi}\leqslant	\chi,\; \hat x_m \leqslant \frac{1}{2}(x^*_{m}+1)\} \cup \{\mathcal{T}_{\pi}\leqslant \chi,\; \hat x_m > \frac{1}{2}(x^*_{m}+1)\}\enspace .
\end{equation}

Under $\nu_{\epsilon}^{(1)}$, the change point $m$ is at position $x^*_m+l'<\frac{1}{2}(x^*_{m}+1)-\eta$. Then, by the $(\delta,\eta)$-correctness of $\pi$, it holds that
\begin{equation}\label{eq:bounds_nu-}
\mathbb{P}_{\pi,\nu_{\epsilon}^{(1)}}\left(\hat x_m > \frac{1}{2}(x^*_{m}+1)\right)\leqslant \mathbb{P}_{\pi,\nu_{\epsilon}^{(1)}}(|\hat x_m- (x^*_m+l')|>\eta)   \leqslant \delta\enspace.
\end{equation}
Similarly, under $\nu_{\epsilon}^{(2)}$, the change point $m$ is at position $1-l'>\frac{1}{2}(x^*_{m}+1)+\eta$, so that
\begin{equation}\label{eq:bounds_nu+}
\mathbb{P}_{\pi,\nu_{\epsilon}^{(2)}}\left(\hat x_m \leqslant \frac{1}{2}(x^*_{m}+1)\right) \leqslant \mathbb{P}_{\pi,\nu_{\epsilon}^{(2)}}(|\hat x_m- (1-l')|>\eta)   \leqslant \delta\enspace.
\end{equation}

Moreover, when $\epsilon$ goes to zero, the total variation distance between $\mathbb{P}_{\pi,\nu_{\epsilon}^{(1)}}$ and $\mathbb{P}_{\pi,J^{(-i)}}$, goes to zero when we restrict to events measurable with respect to the observations up to the finite time $\chi$. Consider any event $A$ measurable with respect to the observations up to time $\chi$. Then, we can write, by decomposition of the KL divergence,
\begin{align*}
\KL(\mathbb{P}_{\pi,\nu_{\epsilon}^{(1)}}(A),\mathbb{P}_{\pi,J^{(-i)}}(A)) & \leqslant \mathbb{E}_{\pi,\nu_{\epsilon}^{(1)}}\left[\sum_{t=1}^{\chi} \KL(\mathcal{N}(f_{\epsilon}^{(1)}(x_t),1),\mathcal{N}(f_{J^{(-i)}}(x_t),1))\right] \\ 
& =\frac{\epsilon^2}{2}\mathbb{E}_{\pi,\nu_{\epsilon}^{(1)}}\left[\sum_{t=1}^{\chi} \mathbb{I}\{x_t\in [x^*_{m}+l',1-l'\}\right] \\
& \leqslant \frac{\epsilon^2}{2}\chi \enspace,
\end{align*}
Now, from the Pinsker's inequality, we have that
\begin{equation*}
|\mathbb{P}_{\pi,\nu_{\epsilon}^{(1)}}(A)-\mathbb{P}_{\pi,J^{(-i)}}(A)| \leqslant \sqrt{\frac{1}{2}\KL(\mathbb{P}_{\pi,\nu_{\epsilon}^{(1)}}(A),\mathbb{P}_{\pi,J^{(-i)}}(A))} \leqslant \epsilon\sqrt{\chi}/2\enspace,
\end{equation*}
which goes to zero as $\epsilon$ goes to zero. We proceed similarly for $\nu_{\epsilon}^{(2)}$. 

Now, using the decomposition of the event $\{\mathcal{T}_{\pi}\leqslant\chi\}$ given by Equation~\eqref{eq:decomposition_xii}, we can write
\begin{align*}
\mathbb{P}_{\pi,J^{(-i)}}(\mathcal{T}_{\pi}\leqslant	\chi) & = \mathbb{P}_{\pi,J^{(-i)}}(\mathcal{T}_{\pi}\leqslant	\chi,\; \hat x_m \leqslant \tfrac{1}{2}(x^*_{m}+1)) +\mathbb{P}_{\pi,J^{(-i)}}(\mathcal{T}_{\pi}\leqslant \chi,\; \hat x_m > \tfrac{1}{2}(x^*_{m}+1)) \\
& = \lim_{\epsilon\to 0}\mathbb{P}_{\pi,\nu_{\epsilon}^{(2)}}(\mathcal{T}_{\pi}\leqslant\chi,\; \hat x_m \leqslant \tfrac{1}{2}(x^*_{m}+1)) \\ &\qquad+ \lim_{\epsilon\to 0}\mathbb{P}_{\pi,\nu_{\epsilon}^{(1)}}(\mathcal{T}_{\pi}\leqslant\chi,\; \hat x_m > \tfrac{1}{2}(x^*_{m}+1))\\
& \leqslant \lim_{\epsilon\to 0}\mathbb{P}_{\pi,\nu_{\epsilon}^{(2)}}(\hat x_m \leqslant \tfrac{1}{2}(x^*_{m}+1)) + \lim_{\epsilon\to 0}\mathbb{P}_{\pi,\nu_{\epsilon}^{(1)}}(\hat x_m > \tfrac{1}{2}(x^*_{m}+1))\\
& \leqslant 2\delta\enspace,
\end{align*}
where the last inequality follows from the bounds~\eqref{eq:bounds_nu-},\eqref{eq:bounds_nu+} we obtained under $\nu_{\epsilon}^{(1)}$ and $\nu_{\epsilon}^{(2)}$. This proves Equation~\eqref{eq:TV_bound_alternative}.
\end{proof}

\begin{lemma}\label{lem:kl_technical}
	For $\delta\in(0,1/4)$ and $\alpha\geqslant 1$, one has
	\begin{equation*}
	\kl\left(1-2\delta,\frac{2\delta}{\alpha}\right) \geqslant \frac{1}{2}\log\left(\frac{\alpha}{8\delta}\right) \enspace.
	\end{equation*}
\end{lemma}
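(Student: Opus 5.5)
Write $p=1-2\delta$ and $q=2\delta/\alpha$, and use the explicit form
\[
\kl(p,q)=p\log\frac{p}{q}+(1-p)\log\frac{1-p}{1-q}.
\]
The plan is to lower bound each of the two terms separately and then add them. The standard route is the inequality $\kl(p,q)\geqslant p\log(1/q)-\log 2$, which follows from the entropy bound $h(p)\leqslant\log 2$ together with $-(1-p)\log(1-q)\geqslant 0$. The latter holds because $0<q\leqslant 2\delta<1/2$, so $\log(1-q)\leqslant 0$.

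This gives
\[
\kl\Bigl(1-2\delta,\tfrac{2\delta}{\alpha}\Bigr)\geqslant (1-2\delta)\log\frac{\alpha}{2\delta}-\log 2.
\]
Since $\delta<1/4$, we have $1-2\delta\geqslant 1/2$. Moreover $\alpha/(2\delta)\geqslant 2/ (1/2)\cdot\tfrac12=2>1$, so $\log\frac{\alpha}{2\delta}>0$. Hence the right-hand side is at least $\tfrac12\log\frac{\alpha}{2\delta}-\log 2$. It remains to compare this with the target:
\[
\tfrac12\log\frac{\alpha}{2\delta}-\log 2=\tfrac12\log\frac{\alpha}{8\delta}+\tfrac12\log 4-\log 2=\tfrac12\log\frac{\alpha}{8\delta}.
\]
This is exactly the claimed bound, so the argument closes with equality in the last step.

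The only point needing care is the sign of the term $(1-p)\log\frac{1-p}{1-q}$. Rather than bounding it in isolation, I would handle it jointly with the $p\log p$ term. Writing $p\log p+(1-p)\log(1-p)=-h(p)\geqslant-\log 2$, the full expression becomes $\kl(p,q)=-h(p)+p\log(1/q)+(1-p)\log\frac{1}{1-q}$. The last term is nonnegative since $1-q<1$. So no case split is needed, and the main (mild) obstacle is just ensuring $q<1$ and $1-2\delta\geqslant1/2$, both immediate from $\delta<1/4$ and $\alpha\geqslant1$.
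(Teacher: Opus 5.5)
Your proof is correct. It reaches the bound by a slightly different route than the paper.

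The paper first uses monotonicity of $\kl(\cdot,q)$ on $[q,1]$, which applies since $2\delta/\alpha\leqslant 1/2\leqslant 1-2\delta$, to replace the first argument by $1/2$. It then evaluates $\kl(1/2,q)=\tfrac12\log\frac{\alpha}{8\delta}+\tfrac12\log\frac{1}{1-q}$ exactly and drops the last, nonnegative term.

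You keep $p=1-2\delta$ and use the decomposition $\kl(p,q)=-h(p)+p\log\frac1q+(1-p)\log\frac{1}{1-q}$ with $h(p)\leqslant\log 2$. You drop the $\log\frac{1}{1-q}$ term and only then use $p\geqslant 1/2$. Both arguments end at the same quantity $\tfrac12\log\frac1q-\log 2$, which is why your constant $1/8$ matches the paper's exactly.

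What each route buys:
\begin{itemize}
\item Your route does not need the monotonicity of the binary KL in its first argument. It only needs $q\in(0,1)$ and $p\geqslant 1/2$.
\item Your intermediate bound $(1-2\delta)\log\frac{\alpha}{2\delta}-\log 2$ is sharper than the lemma. The coefficient of $\log(1/\delta)$ tends to $1$ rather than $1/2$ as $\delta\to0$, and the paper's reduction to $p=1/2$ discards this.
\item The paper's version is marginally shorter once the monotonicity fact is taken for granted.
\end{itemize}

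One cosmetic slip: the chain ``$\alpha/(2\delta)\geqslant 2/(1/2)\cdot\tfrac12=2$'' is garbled. What you need, and what holds, is $\alpha/(2\delta)>\alpha/(1/2)\geqslant 2$, from $2\delta<1/2$ and $\alpha\geqslant 1$.
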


\begin{proof}[Proof of Lemma~\ref{lem:kl_technical}]
	We can write
	\begin{align*}
	\kl\left(1-2\delta,\frac{2\delta}{\alpha}\right) 
	& \geqslant \kl\left(1/2,\frac{2\delta}{\alpha}\right)  \\
	& = \frac{1}{2}\log\left(\frac{1/2}{2\delta/\alpha}\right) + \frac{1}{2}\log\left(\frac{1/2}{1-2\delta/\alpha}\right) \\
	& = \frac{1}{2}\log\left(\frac{\alpha}{8\delta}\right) + \frac{1}{2}\log\left(\frac{1}{(1-2\delta/\alpha)}\right) \\
	& \geqslant \frac{1}{2}\log\left(\frac{\alpha}{8\delta}\right)\enspace,
	\end{align*}
	where the first inequality follows from the fact that $1-2\delta \geqslant 1/2 \geqslant 2\delta/\alpha$, and the last inequality follows from the fact that $1-2\delta/\alpha\leqslant 1$.
\end{proof}

\subsection{Proof of Lemma~\ref{thm:LB_H_detect}}

\noindent\textbf{Proof roadmap.}
The proof proceeds in four steps.
First, we construct a family of environments ${(\tilde\nu^{(s)})}_{s\in[0,1/4]}$ from $\nu$, and show that $H_{\mathrm{detect}}$ and $H_{\mathrm{localize}}$ are preserved up to absolute constants (Lemma~\ref{lem:complexity_nu_s}).
Second, we introduce an alternative environment $\bar\nu$ (with two removed gaps). 
Third, we consider the event $\xi=\{\mathcal{T}_{\pi}\leqslant\chi\}$, where $\chi$ is the maximum $(1-\delta)$-quantile of the budget under all the environments $\tilde\nu^{(s)}$. This event has a probability at least $\delta$ under $\nu'=\tilde\nu^{(s)}$ for all $s\in[0,1/4]$. We show in Lemma~\ref{lem:TV_bound} that it has a probability smaller than $2\delta$ under $\bar\nu$, because $\bar\nu$ contains less than $m=N$ change points, so that the algorithm should not stop.
Fourth, we convert this total-variation lower bound into a KL lower bound using the Bretagnolle-Huber inequality.
Finally, we upper-bound the averaged KL by the expected number of pulls in the interval where $\bar\nu$ and $\tilde\nu^{(s)}$ differ, which yields a lower bound on $\chi$; the conclusion follows by selecting $\nu'=\tilde\nu^{(s)}$ for a suitable $s$.

\begin{proof}[Proof of Lemma~\ref{thm:LB_H_detect} ]

\noindent\textbf{Reference class of environments.} 

Consider an environment $\nu$ with $m\geqslant 2$ change points, with parameters $\mu_0$, $\Delta_1,\dots,\Delta_m$, $\vartheta_0,\dots,\vartheta_m$.

Observe that, 
\begin{equation*}
H_{\mathrm{detect}}^{\nu}:=H_{\mathrm{detect}}^{(m,\nu)}=\displaystyle \max_{i=1,\dots,m} \frac{1}{\mathcal{E}^2_i} =  \max_{i=1,\dots,m-1} \frac{1}{\vartheta_i}\frac{1}{\Delta_i^2\wedge \Delta_{i+1}^2} \enspace. 
\end{equation*}

The quantity  $\frac{1}{\vartheta_i}\frac{1}{\Delta_i^2\wedge \Delta_{i+1}^2}$ measures the difficulty of detecting a point in the interval between  $x_i^*$ and $x_{i+1}^*$. 
Consider $i^*=\arg\max_{i=1,\dots,m-1} \frac{1}{\vartheta_i}\frac{1}{\Delta_i^2\wedge \Delta_{i+1}^2}$, and assume without loss of generality\footnote{The other case follows from a symmetric construction} that $|\Delta_{i^*}|\leqslant |\Delta_{i^*+1}|$, so that $H_{\mathrm{detect}}^{\nu}=\frac{1}{\vartheta_{i^*}}\frac{1}{\Delta_{i^*}^2}$.

From $\nu$, we construct a family of environments $\tilde\nu^{(s)}$ parametrized by some $s\in[0,1/4]$. Define $\tilde \Delta=\sqrt{2}(|\Delta_{i^*}|\wedge |\Delta_{i^*+1}|)=\sqrt{2}|\Delta_{i^*}|$ and $\tilde\vartheta=\vartheta_{i^*}/2$. 

Let $s\in[0,1/4]$, define the environment $\tilde\nu^{(s)}$ with parameters $\tilde\Delta_1,\dots,\tilde\Delta_m$, the initial mean $\mu_0$ and the length of the change points denoted as $\tilde\vartheta^{(s)}_0,\dots,\tilde\vartheta^{(s)}_m$, where 
\begin{itemize}
	\item the initial mean is $\mu_0$ and initial position is $\tilde x_1^*=x_1^*/2$;
	\item for $i\in\{1,\dots,m\}\setminus\{i^*,i^*+1\}$, $\tilde\Delta_i=\Delta_i$;
	\item $\tilde\Delta_{i^*}=-\tilde\Delta_{i^*+1}=\tilde\Delta$;
	\item for $i\in\{1,\dots,m-1\}\setminus\{i^*-1,i^*,i^*+1\}$, $\tilde\vartheta^{(s)}_i=\vartheta_i/2$;
	\item  $\tilde\vartheta^{(s)}_{i^*}=\vartheta_{i^*}/2$; $\tilde\vartheta^{(s)}_{i^*-1}=\vartheta_{i^*-1}/2+s$ and $\tilde\vartheta^{(s)}_{i^*+1}=\vartheta_{i^*+1}/2+\tfrac{1}{2}-s$
\end{itemize}

For notational purposes, denote as $\tilde l=\sum_{j=0}^{i^*-1}\tilde\vartheta^{(0)}_j$ for the position of the change point $i^*$ under $\tilde\nu^{(0)}$. Observe that by definition, $\tilde l=\sum_{j=0}^{i^*-1}\vartheta_j/2=x^*_{i^*}/2$. Recall that $\tilde{\vartheta}=\vartheta_{i^*}/2$. Define also $\tilde{r}= \tilde l+\tilde{\vartheta}+1/4$ for the position of the change point $i^*+1$ under $\tilde\nu^{(1/4)}$. Observe that $\tilde r=\sum_{j=0}^{i^*} \tilde{\vartheta}^{(1/4)}=\sum_{j=0}^{i^*}\vartheta_j/2+1/4=x^*_{i^*+1}/2+1/4$. Moreover, one has $|\tilde{r}-\tilde{l}| = \tilde{\vartheta}+1/4$. We denote as  $\mathbb{P}_{\pi,s}$ the probability distribution induced by the interaction between algorithm $\pi$ and  the environment $\tilde\nu^{(s)}$. We will also denote by $\mathbb{E}_{\pi,s}$ the corresponding expectation. 

\begin{remark}
	The family of environments ${(\tilde\nu^{(s)})}_s$  is designed to preserve, up to a multiplicative constant $
	2$ the complexities $H_{\mathrm{detect}}$ and $H_{\mathrm{localize}}$, that is  $H_{\mathrm{detect}}^{\tilde\nu^{(s)}}$ and $H_{\mathrm{localize}}^{\tilde\nu^{(s)}}$ are of the same order as  $H_{\mathrm{detect}}^{\nu}$ and $H_{\mathrm{localize}}^{\nu}$, respectively. This is proved in Lemma~\ref{lem:complexity_nu_s} and is a direct consequence of the construction of $\tilde\nu^{(s)}$ and the definition of $H_{\mathrm{detect}}$ and $H_{\mathrm{localize}}$.
\end{remark}

\vskip 0.2 cm
\noindent\textbf{Alternative environment.} 
Define $\bar\nu$ as the environment obtained from  $\tilde\nu^{(0)}$ by pulling to zero the gaps $\tilde\Delta_{i^*}$ and $\tilde\Delta_{i^*+1}$, so that $\bar\nu$ has $m-2$ change points. 
Note that $\bar\nu$ does not depend on the parameter $s$. 
Interestingly, the environment $\tilde{\nu}^{(s)}$ and $\bar\nu$ only differ for arms in the interval $[\tilde l, \tilde r]$.

Denote as $\mathbb{\bar P}_{\pi}$ the probability distribution of the observations when the environment is $\bar\nu$ and by $\mathbb{\bar E}_{\pi}$ the associated expectation, with some algorithm $\pi$.

\begin{remark}
	This construction is a reduction scheme. We reduce the problem of detecting $m$ change points to the signal detection problem of detecting the presence of a signal of magnitude $\tilde \Delta$, on an interval of length $\tilde \vartheta$, planted somewhere in the interval $[\tilde l, \tilde r]$.
\end{remark}

\vskip 0.2 cm
\noindent\textbf{Bound in total variation.}
Let $\pi$ be an algorithm, which is $(\delta,\eta)$-correct for the $m$-change point detection problem, and let $\hat x_1,\dots,\hat x_m$ be the corresponding estimates of the change points. Denote as $\mathcal{T}_{\pi}$ the stopping time of $\pi$.
Define $\chi$ as the supremum of the $(1-\delta)$-quantile of the budget over the family of probability distributions $\mathbb{P}_{\pi,s}$ for $s\in[0,1/4]$. Namely, 
\begin{equation*}
	\chi=\inf\left\{t>0:\sup_{s\in[0,1/4]}\mathbb{P}_{\pi,s}(\mathcal{T}_{\pi}>t)\leqslant\delta\right\}\enspace .
\end{equation*}

Define the event 
\begin{equation*}
	\xi=\{\mathcal{T}_{\pi}\leqslant \chi\}\enspace .
\end{equation*}

By definition of $\chi$, it holds that $\mathbb{P}_{\pi,s}(\xi^c)\leqslant\delta$ for all $s\in[0,1/4]$. Then, under $\bar{\nu}$, the algorithm is not able to identify $m$ change points in finite time, as $\bar{\nu}$ only contains $m-2$ change points. In particular, one can prove the following bound, proved in Lemma~\ref{lem:TV_bound} on the probability of the event $\xi$ under $\bar{\nu}$:
\begin{equation}
\mathbb{\bar P}_{\pi}(\xi)\leqslant 2\delta\enspace.
\end{equation}

This bound is a consequence of the fact that, under $\bar{\nu}$, the algorithm $\pi$ should typically not stop, as it cannot identify $m$ change points. In particular, the event $\xi$ is an event on which $\pi$ stops, and therefore should have a small probability under $\bar{\nu}$. The proof follows from proving that $\bar{\nu}$ is as close as possible to two different environments with $m$ change points, under which the position of the change points are different. 

Now, observe that the event $\xi$ is measurable with respect to the first $\chi$ observations. Consider the modified algorithm $\tilde\pi$ defined as follows. For $t=1,\ldots,\chi$, $\tilde \pi$ uses the rules of $\pi$. If at time $\chi$, the algorithm $\pi$ did not stop, $\tilde \pi$ stops anyway and returns an error. Observe that $\xi$ is exactly the event on which $\tilde\pi$ does not return an error. In particular, the event $\xi$ is measurable with respect to the observations of $\tilde\pi$, and has the same probability under $\pi$ and $\tilde \pi$ (facing any environment). Therefore, $\mathbb{\bar P}_{\tilde\pi}(\xi)\leqslant 2\delta$ and $\mathbb{P}_{\tilde\pi,s}(\xi)\geqslant 1-\delta$ for all $s\in[0,1/4]$.

We can bound the total variation distance between $\mathbb{\bar P}_{\tilde\pi}$ and $\mathbb{P}_{\tilde\pi,s}$ for $s\in[0,1/4]$ as
\begin{align*}
\mathrm{TV}(\mathbb{\bar P}_{\tilde\pi},\mathbb{ P}_{\tilde\pi,s}) & \geqslant \mathbb{\bar P}_{\tilde\pi}(\xi^c) - \mathbb{P}_{\tilde\pi,s}(\xi^c)  \\
& \geqslant 1 - 2\delta - \delta = 1 - 3\delta\enspace .
\end{align*}

\vskip 0.2 cm
\noindent\textbf{Data processing inequality.}
Classically, one can bound the total variation distance between $\mathbb{\bar P}_{\tilde\pi}$ and $\mathbb{P}_{\tilde\pi,s}$ by the Kullback-Leibler divergence between these two distributions, using the Bretagnolle-Huber inequality as
\begin{equation*}
\mathrm{TV}(\mathbb{\bar P}_{\tilde\pi},\mathbb{P}_{\tilde\pi,s}) \leqslant 1-\frac{1}{2}\exp\left(-\KL(\mathbb{\bar P}_{\tilde\pi},\mathbb{P}_{\tilde\pi,s})\right)\enspace .
\end{equation*}
Rearranging this inequality,and using the previous bound on the total variation distance, we obtain, for any $s\in[0,1/4]$,
\begin{equation}\label{eq:BH_1}
\KL(\mathbb{\bar P}_{\tilde\pi},\mathbb{P}_{\tilde\pi,s}) \geqslant \log\left(\frac{1}{2(1-\mathrm{TV}(\mathbb{\bar P}_{\tilde\pi},\mathbb{P}_{\tilde\pi,s}))}\right)\geqslant \log\left(\frac{1}{6\delta}\right)\enspace .
\end{equation}	

\noindent\textbf{Lower bound on the budget.}
Finally, we can bound the Kullback-Leibler divergence between $\mathbb{\bar P}_{\tilde\pi}$ and $\mathbb{P}_{\tilde\pi,s}$, using the decomposition of the KL divergence for bandit problems, under a continuous action space. Observe that the environments $\bar\nu$ and $\tilde\nu^{(s)}$ only differ by a constant magnitude $\tilde{\Delta}$ on the interval $[\tilde l+s,\tilde l+s+\tilde\vartheta)$, and are identical elsewhere. For notation, denote as $\bar{f}$ and $\tilde{f}^{(s)}$ the mean reward functions of $\bar\nu$ and $\tilde\nu^{(s)}$, respectively. Recall that the reward distributions are Gaussian with variance $1$. Denote also as $a_t$ the arm pulled at time $t$ by $\tilde\pi$. Then, the reward distribution at time $t$ under $\bar\nu$ is $\mathcal{N}(\bar f(a_t),1)$ and the reward distribution at time $t$ under $\tilde\nu^{(s)}$ is $\mathcal{N}(\tilde f^{(s)}(a_t),1)$.

Therefore, we can write
\begin{align*}
\KL(\mathbb{\bar P}_{\tilde\pi},\mathbb{P}_{\tilde\pi,s}) & = \mathbb{\bar E}_{\tilde\pi}\left[\sum_{t=1}^{\mathcal{T}_{\tilde\pi}} \KL\left(\mathcal{N}(\bar f(a_t),1),\mathcal{N}(\tilde f^{(s)}(a_t),1)\right)\right] \\
& = \mathbb{\bar E}_{\tilde\pi}\left[\sum_{t=1}^{\mathcal{T}_{\tilde\pi}} \frac{{(\tilde f^{(s)}(a_t)-\bar f(a_t))}^2}{2}\right] \\
& = \	\frac{\tilde{\Delta}^2}{2}\mathbb{\bar E}_{\tilde\pi}\left[\sum_{t=1}^{\mathcal{T}_{\tilde\pi}} \mathbb{I}\{a_t\in [\tilde l+s,\tilde l+s+\tilde\vartheta)\}\right] \enspace .
\end{align*}

Now, let average over $s$ uniformly distributed in $[0,1/4]$, we obtain
\begin{align*}
\frac{1}{1/4}\int_0^{1/4} \KL(\mathbb{\bar P}_{\tilde\pi},\mathbb{P}_{\tilde\pi,s}) ds & = \frac{1}{1/4}\frac{\tilde{\Delta}^2}{2}\mathbb{\bar E}_{\tilde\pi}\left[\sum_{t=1}^{\mathcal{T}_{\tilde\pi}} \int_0^{1/4} \mathbb{I}\{a_t\in [\tilde l+s,\tilde l+s+\tilde\vartheta)\} ds\right] \\
& = 2\tilde{\Delta}^2\mathbb{\bar E}_{\tilde\pi}\left[\sum_{t=1}^{\mathcal{T}_{\tilde\pi}} \int_0^{1/4} \mathbb{I}\{s\in [a_t-\tilde l - \tilde\vartheta, a_t-\tilde l)\} ds\right] \\
& \leqslant 2\tilde\vartheta\tilde{\Delta}^2\mathbb{\bar E}_{\tilde\pi}\left[\mathcal{T}_{\tilde\pi}  \right] \\
\enspace,
\end{align*}
where the last inequality follows from the fact that the length of the interval $[a_t-\tilde l - \tilde\vartheta, a_t-\tilde l)$ is at most $\tilde\vartheta$. Now, using the fact that, under $\tilde \pi$, the budget $\mathcal{T}_{\tilde\pi}$ is at most $\chi$, we can write 
\begin{equation}
\log \left(\frac{1}{6\delta}\right) \leqslant \frac{1}{1/4}\int_0^{1/4} \KL(\mathbb{\bar P}_{\tilde\pi},\mathbb{P}_{\tilde\pi,s}) ds \leqslant 2\tilde\vartheta\tilde{\Delta}^2\chi \enspace ,
\end{equation}
where we used the bound on the KL divergence from Equation~\eqref{eq:BH_1} in the first inequality.

Rearranging yields
\begin{equation}
\chi \geqslant \frac{1}{2\tilde\vartheta\tilde{\Delta}^2}\log\left(\frac{1}{6\delta}\right)= \frac{1}{2\vartheta_{i^*}}\frac{1}{\Delta_{i^*}^2\wedge \Delta_{i^*+1}^2}\log\left(\frac{1}{6\delta}\right)= \frac{1}{2} H_{\mathrm{detect}}^{\nu}\log\left(\frac{1}{6\delta}\right) \enspace .
\end{equation}

By definition of $\chi$, this means that there exists $s\in[0,1/4]$ such that 
\[
\mathbb{P}_{\pi,s}\left(\mathcal{T}_{\pi}\geqslant \frac{1}{2}H_{\mathrm{detect}}^{\nu}\log\left(\frac{1}{6\delta}\right) \right)\geqslant \delta \enspace .
\]

The proof of Lemma~\ref{thm:LB_H_detect} is concluded by observing that, by Lemma~\ref{lem:complexity_nu_s}, the environment $\tilde\nu^{(s)}$ has the desired properties. 
It remains to prove Lemma~\ref{lem:TV_bound} and Lemma~\ref{lem:complexity_nu_s}.
\end{proof}

\begin{lemma}\label{lem:complexity_nu_s}
For all $s\in[0,1/4]$ it holds that $\tilde\nu^{(s)}$ is a valid environment with $m$ change points spaced by at least $2\eta$ and that
\[
 H_{\mathrm{detect}}^{\nu} \leqslant H_{\mathrm{detect}}^{\tilde\nu^{(s)}}\leqslant 2H_{\mathrm{detect}}^{\nu}\quad\text{and}\quad \frac{1}{2}H_{\mathrm{localize}}^{\nu}\leqslant H_{\mathrm{localize}}^{\tilde\nu^{(s)}}\leqslant H_{\mathrm{localize}}^{\nu}\enspace .
\]
\end{lemma}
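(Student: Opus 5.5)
The plan is to verify the statement directly from the explicit parameters of $\tilde\nu^{(s)}$. The work splits into validity, the spacings $\tilde\vartheta^{(s)}_i$ and hence the local spacings $\tilde s_i$, and then the two complexities. The jumps are explicit. For $i\notin\{i^*,i^*+1\}$ we have $\tilde\Delta_i=\Delta_i$. The two special jumps satisfy $|\tilde\Delta_{i^*}|=|\tilde\Delta_{i^*+1}|=\sqrt2|\Delta_{i^*}|$, using the normalization $|\Delta_{i^*}|\le|\Delta_{i^*+1}|$. One point to check is that $\sqrt2|\Delta_{i^*}|$ may exceed $1$. If the standing assumption $|\Delta_i|\le 1$ is needed, I would treat it as an inessential normalization, since the lower-bound argument never uses it.

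\textbf{Validity.} The positions are cumulative sums of the new spacings, starting at $\tilde x_1^*=x_1^*/2$. The halved spacings sum to $\tfrac12\sum_j\vartheta_j$. The two added terms $s$ and $\tfrac12-s$ add exactly $1/2$, so the last change point lies strictly inside $(0,1)$. I will check this carefully, keeping in mind that the conventions $\vartheta_0=\vartheta_m=1$ mean the interior spacings sum to $x_m^*-x_1^*<1$. All spacings are positive, so the $m$ change points are distinct and every jump is nonzero.

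\textbf{Spacing lower bound $2\eta$.} Each new interior spacing is at least half the corresponding original spacing. I would therefore invoke the separation hypothesis $\vartheta_i>2\eta$ available in the corollary's setting, together with $\eta<1/8$. Two cases are easy: $\tilde\vartheta^{(s)}_{i^*+1}\ge 1/4>2\eta$, and $\tilde\vartheta^{(s)}_{i^*-1}$ gains the extra $s$. I expect that a factor-$2$ loss may have to be absorbed, i.e.\ the argument really uses $\vartheta_i>4\eta$ or the weaker requirement of separation only at the level needed in Lemma~\ref{lem:TV_bound}. I will flag this as the step most likely to need an adjustment.

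\textbf{Complexities.} For $H_{\mathrm{localize}}$, only the indices $i^*$ and $i^*+1$ change. Their contribution becomes
\[
\tfrac{1}{2\Delta_{i^*}^2}+\tfrac{1}{2\Delta_{i^*}^2}=\Delta_{i^*}^{-2},
\]
which lies between $\tfrac12(\Delta_{i^*}^{-2}+\Delta_{i^*+1}^{-2})$ and $\Delta_{i^*}^{-2}+\Delta_{i^*+1}^{-2}$. The two-sided bound follows from this.

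For $H_{\mathrm{detect}}$, every new spacing is at least half the old one. For $i\ne i^*,i^*+1$ this gives $\tilde s_i\ge s_i/2$ and hence $\tilde{\mathcal E}_i^{-2}\le 2\mathcal E_i^{-2}\le 2H_{\mathrm{detect}}^\nu$. At $i^*$ and $i^*+1$ the local spacing is $\vartheta_{i^*}/2$, because the neighbouring spacings are at least that by the maximality of $i^*$ combined with the reformulation of $H_{\mathrm{detect}}$ as a max over consecutive gaps. The energy is then $(\vartheta_{i^*}/2)\cdot 2\Delta_{i^*}^2=\vartheta_{i^*}\Delta_{i^*}^2$. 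This gives both $\tilde{\mathcal E}_{i^*}^{-2}=H_{\mathrm{detect}}^\nu$, which yields the lower bound, and the upper bound $2H^\nu_{\mathrm{detect}}$.

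\textbf{Main obstacle.} The main difficulty is the local-spacing bookkeeping at the boundary indices $i^*-1$ and $i^*+2$, where the min in $s_i$ mixes changed and unchanged spacings. These cases need the reformulation $H_{\mathrm{detect}}=\max_i \vartheta_i^{-1}(\Delta_i^2\wedge\Delta_{i+1}^2)^{-1}$, rather than the definition of $H_{\mathrm{detect}}$ via $s_i$ alone, to keep the constant at $2$.
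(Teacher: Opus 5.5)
Your $H_{\mathrm{localize}}$ computation matches the paper. Your observation that every new spacing is at least half the old one, so $\tilde s_i\ge s_i/2$, correctly gives $\tilde{\mathcal E}_i^{-2}\le 2H_{\mathrm{detect}}^{\nu}$ for every $i\notin\{i^*,i^*+1\}$. That already covers $i^*-1$ and $i^*+2$, so those indices are not the real obstacle.

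The step that fails is the claim that the local spacing at $i^*+1$ equals $\vartheta_{i^*}/2$ ``by maximality of $i^*$''. On the left side the argument works: maximality together with $|\Delta_{i^*}|\le|\Delta_{i^*+1}|$ gives $\vartheta_{i^*-1}\ge\vartheta_{i^*}$, hence $\tilde s_{i^*}=\vartheta_{i^*}/2$ and $\tilde{\mathcal E}_{i^*}^{-2}=H_{\mathrm{detect}}^{\nu}$. On the right side, maximality only gives $\vartheta_{i^*+1}(\Delta_{i^*+1}^2\wedge\Delta_{i^*+2}^2)\ge\vartheta_{i^*}\Delta_{i^*}^2$. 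That does not compare $\vartheta_{i^*+1}$ with $\vartheta_{i^*}$, and meanwhile the new jump at $i^*+1$ is the small one, $\sqrt2|\Delta_{i^*}|$.

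A concrete counterexample: take $m=3$, $x^*=(0.1,0.8,0.9)$, $\Delta=(0.1,1,1)$ and $s=1/4$. Then $i^*=1$ and $H_{\mathrm{detect}}^{\nu}=1/0.007\approx 143$. But $\tilde\vartheta^{(s)}_1=0.35>\tilde\vartheta^{(s)}_2=0.3$, so $\tilde{\mathcal E}_2^{-2}=1/(0.3\cdot 0.02)\approx 167\neq H_{\mathrm{detect}}^{\nu}$.

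The missing ingredient is the padding $\tilde\vartheta^{(s)}_{i^*+1}\ge 1/4$, which you noted earlier but did not use here. Combined with $\Delta_{i^*}^{-2}\le \vartheta_{i^*}^{-1}\Delta_{i^*}^{-2}=H_{\mathrm{detect}}^{\nu}$, it gives
\[
\tilde{\mathcal E}_{i^*+1}^{-2}=\max\Bigl(\frac{1}{\vartheta_{i^*}\Delta_{i^*}^2},\,\frac{1}{2\tilde\vartheta^{(s)}_{i^*+1}\Delta_{i^*}^2}\Bigr)\le \max\bigl(H_{\mathrm{detect}}^{\nu},\,2\Delta_{i^*}^{-2}\bigr)\le 2H_{\mathrm{detect}}^{\nu}.
\]

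With this repair, your per-change-point route is genuinely different from the paper's, and sharper. The paper works with the gap-indexed form $\max_i\vartheta_i^{-1}(\Delta_i^2\wedge\Delta_{i+1}^2)^{-1}$. At the gap $i^*+1$ it bounds $\bigl(\tilde\vartheta^{(s)}_{i^*+1}\bigr)^{-1}(2\Delta_{i^*}^2\wedge\Delta_{i^*+2}^2)^{-1}$ using only $\tilde\vartheta^{(s)}_{i^*+1}\ge 1/4$, which yields $4\max_i\Delta_i^{-2}\le 4H_{\mathrm{detect}}^{\nu}$. So its argument actually supports the factor $4$ that appears in Lemma~\ref{thm:LB_H_detect}, not the $2$ it concludes. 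Handling change point $i^*+2$ through $\tilde s_{i^*+2}\ge s_{i^*+2}/2$, as you do, recovers the stated constant $2$.

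On the $2\eta$ separation, your suspicion is correct. The paper simply asserts $\tilde\vartheta^{(s)}_i\ge\vartheta_i/2\ge 2\eta$, which requires $\vartheta_i\ge 4\eta$. This fails in general for the halved gaps, including gap $i^*$, and the hypothesis $\vartheta_i>2\eta$ would not rescue it either. Definition~\ref{def:correctness} imposes no separation, and this part of the claim is not used in Lemma~\ref{lem:TV_bound} or Lemma~\ref{thm:LB_H_detect}. It should therefore be dropped or stated under an explicit hypothesis.
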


\begin{proof}[Proof of the Lemma~\ref{lem:complexity_nu_s}]
It is clear that $\tilde\nu^{(s)}$ is a valid environment with $m$ change points, and that the change points are spaced by at least $2\eta$, because $\tilde\vartheta^{(s)}_i\geqslant \vartheta_i/2\geqslant 2\eta$ for all $i=1,\dots,m-1$. 

By definition of $\tilde\nu^{(s)}$, it holds that $\tilde\Delta_{i^*}=-\tilde\Delta_{i^*+1}=\sqrt{2}\Delta_{i^*}$ and $\tilde\vartheta^{(s)}_{i^*}=\vartheta_{i^*}/2$. Then, one has
\begin{align*}
H_{\mathrm{localize}}^{\tilde\nu^{(s)}}=\sum_{i=1}^m \frac{1}{\tilde\Delta_i^2} = \sum_{\substack{i=1\\i\ne i^*,i^*+1}}^m \frac{1}{\Delta_i^2}+ \frac{2}{\tilde \Delta^2} 
	 = \sum_{\substack{i=1\\i\ne i^*,i^*+1}}^m \frac{1}{\Delta_i^2}+ \frac{1}{\Delta_{i^*}^2} \enspace. 
\end{align*}
Moreover, it holds that $\frac{1}{2}\left(\frac{1}{\Delta_{i^*}^2}+\frac{1}{\Delta_{i^*+1}^2}\right) \leqslant \frac{1}{\Delta_{i^*}^2}\leqslant \frac{1}{\Delta_{i^*}^2}+\frac{1}{\Delta_{i^*+1}^2}$, because $|\Delta_{i^*}|\leqslant |\Delta_{i^*+1}|$. Then, one has $\frac{1}{2}H_{\mathrm{localize}}^{\nu}\leqslant H_{\mathrm{localize}}^{\tilde\nu^{(s)}}\leqslant H_{\mathrm{localize}}^{\nu}$.

Recall that we assumed that $\Delta_{i^*}\leqslant \Delta_{i^*+1}$, so that 
\begin{align*}
& H_{\mathrm{detect}}^{\nu}=\max_{i=1,\dots,m-1} \frac{1}{\vartheta_i}\frac{1}{\Delta_i^2\wedge \Delta_{i+1}^2}= \frac{1}{\vartheta_{i^*}}\frac{1}{\Delta_{i^*}^2} \\ 
& H_{\mathrm{detect}}^{\tilde\nu^{(s)}}=\displaystyle \max_{i=1,\dots,m-1} 
\frac{1}{\tilde\vartheta^{(s)}_i}\frac{1}{\tilde\Delta_i^2\wedge \tilde\Delta_{i+1}^2} \enspace. 
\end{align*}

We consider the different cases for $i=1,\dots,m-1$ to control $\frac{1}{\tilde\vartheta^{(s)}_i}\frac{1}{\tilde\Delta_i^2\wedge \tilde\Delta_{i+1}^2}$.

\noindent Assume that $i\ne i^*-1,i^*,i^*+1$, so that $\tilde\vartheta^{(s)}_i=\vartheta_i/2$ and $\tilde\Delta_i=\Delta_{i}$ and $\tilde\Delta_{i+1}=\Delta_{i+1}$. Then, one has 
\[ \frac{1}{\tilde\vartheta^{(s)}_i}\frac{1}{\tilde\Delta_i^2\wedge \tilde\Delta_{i+1}^2}=\frac{2}{\vartheta_i}\frac{1}{\Delta_i^2\wedge \Delta_{i+1}^2}\leqslant 2 H_{\mathrm{detect}}^{\nu} \enspace.
\]

\noindent If $i=i^*-1$, one has $\tilde\vartheta^{(s)}_{i^*-1}=\vartheta_{i^*-1}/2+s$, $\tilde\Delta_{i^*-1}=\Delta_{i^*-1}$ and $\tilde\Delta_{i^*}=\sqrt{2}\Delta_{i^*}$. Then, one has 
\[
\frac{1}{\tilde\vartheta^{(s)}_{i^*-1}}\frac{1}{\tilde\Delta_{i^*-1}^2\wedge \tilde\Delta_{i^*}^2}= \frac{1}{(\vartheta_{i^*-1}/2+s)} \frac{1}{\Delta_{i^*-1}^2\wedge {(\sqrt{2}\Delta_{i^*})}^2} \leqslant \frac{2}{\vartheta_{i^*-1}}\frac{1}{\Delta_{i^*-1}^2\wedge \Delta_{i^*}^2}\leqslant 2 H_{\mathrm{detect}}^{\nu} \enspace, 
\] 
as $s\geqslant 0$ and $\sqrt{2}|\Delta_{i^*}|\geqslant |\Delta_{i^*}|$. 

\noindent If $i=i^*$, one has $\tilde\vartheta^{(s)}_{i^*}=\vartheta_{i^*}/2$, $\tilde\Delta_{i^*}=\sqrt{2}\Delta_{i^*}$ and $\tilde\Delta_{i^*+1}=-\sqrt{2}\Delta_{i^*}$. Then,
\[
\frac{1}{\tilde\vartheta^{(s)}_{i^*}}\frac{1}{\tilde\Delta_{i^*}^2\wedge \tilde\Delta_{i^*+1}^2}= \frac{1}{(\vartheta_{i^*}/2)} \frac{1}{{(\sqrt{2}\Delta_{i^*})}^2}= \frac{1}{\vartheta_{i^*}}\frac{1}{\Delta_{i^*}^2}=H_{\mathrm{detect}}^{\nu} \enspace. 
\]
\noindent Finally, if $i=i^*+1$, one has $\tilde\vartheta^{(s)}_{i^*+1}=\vartheta_{i^*+1}/2+1/2-s\geqslant \frac{1}{4}$ because $s\leqslant 1/4$. Also, $\tilde\Delta_{i^*+1}=-\sqrt{2}\Delta_{i^*}$ and $\tilde\Delta_{i^*+2}=\Delta_{i^*+2}$. Then, 
\begin{align*}
 \frac{1}{\tilde\vartheta^{(s)}_{i^*+1}}\frac{1}{\tilde\Delta_{i^*+1}^2\wedge \tilde\Delta_{i^*+2}^2} & = \frac{1}{(\vartheta_{i^*+1}/2+1/2-s)} \frac{1}{{(-\sqrt{2}\Delta_{i^*})}^2\wedge \Delta_{i^*+2}^2} \\
& \leqslant \frac{4}{\Delta_{i^*}^2\wedge \Delta_{i^*+2}^2}\leqslant 4 \max_{i=1}^m \frac{1}{\Delta_i^2}\leqslant 4 H_{\mathrm{detect}}^{\nu} \enspace,
\end{align*}
where one uses that $\displaystyle \max_{i=1}^m \frac{1}{\Delta_i^2}\leqslant H_{\mathrm{detect}}^{\nu}$ because $\vartheta_i\leqslant 1$ for all  $i=1,\dots,m-1$.

The inequality $H_{\mathrm{detect}}^{\nu} \leqslant H_{\mathrm{detect}}^{\tilde\nu^{(s)}}\leqslant 2H_{\mathrm{detect}}^{\nu}$ follows from the previous inequalities, observing that the case $i=i^*$ gives the lower bound and the other cases give the upper bound.
\end{proof}

\begin{lemma}\label{lem:TV_bound}
It holds that
\begin{equation*}
\mathbb{\bar P}_{\pi}(\xi)\leqslant 2\delta\enspace.
\end{equation*}
\end{lemma}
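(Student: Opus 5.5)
I will follow the template of Lemma~\ref{lemma:TV_bound_alternative}. The idea is to approximate $\bar\nu$, which has only $m-2$ change points, by two environments with $m$ change points each. In these two environments some change point must be localized in disjoint regions that are more than $2\eta$ apart. Correctness of $\pi$ then forces $\hat x$ to land in one region or the other, and a vanishing-perturbation argument transfers both constraints to $\bar\nu$. Throughout I restrict to events measurable with respect to the first $\chi$ observations. This is legitimate because $\xi=\{\mathcal{T}_\pi\leqslant\chi\}$ is such an event and $\chi<\infty$; if $\chi=\infty$ the bound in Lemma~\ref{thm:LB_H_detect} is trivial anyway.

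\textbf{Construction.} Since $\bar\nu$ lacks two change points, I will add two vanishing jumps. Take $l'>0$ small with $l'<1/8-\eta$, and $\epsilon>0$. Let $\nu^{(1)}_\epsilon$ be $\bar\nu$ plus a bump of height $\epsilon$ on $[a,a+l')$, and let $\nu^{(2)}_\epsilon$ be $\bar\nu$ plus a bump of height $\epsilon$ on $[b,b+l')$. Both have exactly $m$ change points. I will choose $a,b$ inside a single plateau of $\bar\nu$ of length at least $1/4$. Such a plateau exists: it is the region near $1$, since $\tilde\vartheta^{(0)}_{i^*+1}\geqslant 1/4$ and after removing $i^*,i^*+1$ the last segment has length at least $1/4$. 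The placement must guarantee two things. First, the bumps sit inside one constant piece and avoid the existing change points of $\bar\nu$. Second, in the ordered list of change points, the two bumps occupy the same indices, say $(k,k+1)$, with $|a-b|>2\eta+l'$. For instance, put both bumps at the extreme right of the domain, so that their indices are $(m-1,m)$ in both environments, and separate them by more than $2\eta$ within $[1-1/4,1)$. This is possible because $\eta<1/8$ and $l'$ is small.

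\textbf{Correctness and transfer.} Let $z=(a+l'+b)/2$ be a midpoint between the $m$-th change point positions of the two environments, namely $a+l'$ and $b+l'$. Assuming $a<b$, correctness under $\nu^{(1)}_\epsilon$ gives $\mathbb{P}_{\pi,\nu^{(1)}_\epsilon}(\hat x_m>z)\leqslant\delta$, and correctness under $\nu^{(2)}_\epsilon$ gives $\mathbb{P}_{\pi,\nu^{(2)}_\epsilon}(\hat x_m\leqslant z)\leqslant\delta$. I then decompose $\xi=\{\xi,\hat x_m\leqslant z\}\cup\{\xi,\hat x_m>z\}$. By the KL chain rule, for each $j$,
\[
\KL\big(\mathbb{P}_{\pi,\nu^{(j)}_\epsilon}|_{\chi},\bar{\mathbb{P}}_\pi|_{\chi}\big)\leqslant \epsilon^2\chi/2 .
\]
Pinsker then gives total variation at most $\epsilon\sqrt{\chi}/2$ on this sigma-field. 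Letting $\epsilon\to0$ yields $\bar{\mathbb{P}}_\pi(\xi)\leqslant 2\delta$.

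\textbf{Main obstacle.} The delicate step is the index bookkeeping. The correctness definition uses ordered estimates matched to ordered true change points. I must ensure that under both perturbed environments the same index $m$ refers to the added bump, and that the two candidate positions are more than $2\eta$ apart. Placing both bumps to the right of all surviving change points of $\bar\nu$, inside the final long plateau, handles this. I also need to check the validity requirement that the added change points lie in $(0,1)$, which holds for small $l'$.
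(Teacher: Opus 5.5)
Your template is the same as the paper's (and as Lemma~\ref{lemma:TV_bound_alternative}). You approximate $\bar\nu$ by two environments with $m$ change points, built by inserting a vanishing $\epsilon$-plateau at two places more than $2\eta$ apart. You then apply correctness to the same index in both, split $\xi$ accordingly, and let $\epsilon\to0$ via the KL chain rule and Pinsker on the first $\chi$ observations. That part is fine, including your threshold $z$ and your caveat about $\chi=\infty$.

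\textbf{The step that fails is the placement of the plateaus.} You claim the last segment of $\bar\nu$ has length at least $1/4$ because $\tilde\vartheta^{(0)}_{i^*+1}\geqslant 1/4$. But $\tilde\vartheta^{(0)}_{i^*+1}=\vartheta_{i^*+1}/2+1/2$ is the spacing between the $(i^*+1)$-th and $(i^*+2)$-th change points of $\tilde\nu^{(0)}$. It is the final segment only when $i^*+1=m$.

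If $i^*+1<m$, the change points $i^*+2,\dots,m$ survive in $\bar\nu$ and the last one sits at $x_m^*/2+1/2$. The final segment then has length $(1-x_m^*)/2$, which can be arbitrarily small. Concretely, take $m=3$, unit jumps and $(x_1^*,x_2^*,x_3^*)=(0.1,0.15,0.95)$. Then $i^*=1$ and $\bar\nu$ has a single change point at $0.975$. The window $[3/4,1)$ contains a change point of $\bar\nu$, and for $\eta=0.1$ the last plateau $[0.975,1)$ cannot host two bumps more than $2\eta$ apart. So the construction with indices $(m-1,m)$ and the test on $\hat x_m$ is not available in general.

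\textbf{The fix is to use the long plateau that the removal actually creates.} Deleting gaps $i^*$ and $i^*+1$ merges $[\tilde x_{i^*-1},\tilde x_{i^*+2})$ into one constant piece. Here positions are taken in $\tilde\nu^{(0)}$, with $\tilde x_0=0$ and $\tilde x_{m+1}=1$. This piece has length at least $\tilde\vartheta^{(0)}_{i^*+1}\geqslant 1/2$ and contains $[\tilde l,\tilde r]$.

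This is what the paper does: it places plateaus of height $\epsilon$ on $[\tilde l,\tilde l+e]$ and $[\tilde r-e,\tilde r]$, with $e=1/8-\eta$. The inserted change points then have indices $(i^*,i^*+1)$ in both environments, since exactly $i^*-1$ surviving change points lie to their left. Correctness is applied to $\hat x_{i^*+1}$ under the first environment and to $\hat x_{i^*}$ under the second.

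The paper splits $\xi$ according to whether $(\hat x_{i^*}+\hat x_{i^*+1})/2\leqslant(\tilde l+\tilde r)/2$. Splitting on $\hat x_{i^*+1}$ alone, with a threshold halfway between $\tilde l+e$ and $\tilde r$, also works, since $\tilde r-(\tilde l+e)=\tilde\vartheta+1/8+\eta>2\eta$. With this relocation, the rest of your argument goes through unchanged.
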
	
\begin{proof}[Proof of Lemma~\ref{lem:TV_bound}]
Let $e=(1/8-\eta)>0$, which is positive because $\eta\leqslant 1/8$. By definition of $\tilde l$ and $\tilde r$, one has $\tilde r - \tilde l = \tilde \vartheta + 1/4 > 2\eta + 2e$. In particular, the points $\tilde l+e$ and $\tilde r-e$ are spaced by at least $2\eta$, so that $\tilde l+e+\eta< \tfrac{1}{2}(\tilde l + \tilde r)<\tilde r - e - \eta$.

Let $\epsilon>0$ and consider $\bar{\nu}^{(1)}_{\epsilon}$ as the environment obtained from $\bar{\nu}$ by adding a plateau of height $\epsilon$, and length $e$ on the interval $[\tilde l,\tilde{l}+e]$. That is, we consider a change point of jump magnitude $+\epsilon$ at $\tilde{l}$, and another one of jump $-\epsilon$ at $\tilde l +e$. Consider also $\bar{\nu}^{(2)}_{\epsilon}$ as the environment obtained from $\bar{\nu}$ by adding a plateau of height $\epsilon$, and length $e$ on the interval $[\tilde r-e,\tilde{r}]$. Note that $\bar{\nu}^{(1)}_{\epsilon}$ and $\bar{\nu}^{(2)}_{\epsilon}$ have $m$ valid change points. Moreover, when $\epsilon$ goes to zero, the total variation distance between $\bar{\nu}^{(1)}_{\epsilon}$ and $\bar{\nu}$, and between $\bar{\nu}^{(2)}_{\epsilon}$ and $\bar{\nu}$ goes to zero\footnote{Using the same argument as in Lemma~\ref{lemma:TV_bound_alternative}}. Denote as $\mathbb{P}_{\epsilon}^{(1)}$ and $\mathbb{P}_{\epsilon}^{(2)}$ the probability distributions of the observations when the environment is $\bar{\nu}^{(1)}_{\epsilon}$ and $\bar{\nu}^{(2)}_{\epsilon}$, respectively, and when the algorithm is $\pi$. 

Consider the decomposition of the event $\xi$ as 
\[
\xi=\left\{\mathcal{T}_{\pi}\leqslant \chi,\; \frac{\hat{x}_{i^*}+\hat{x}_{i^*+1}}{2}\leqslant \frac{\tilde l +\tilde r}{2}\right\}\sqcup \left\{\mathcal{T}_{\pi}\leqslant \chi,\; \frac{\hat{x}_{i^*}+\hat{x}_{i^*+1}}{2}> \frac{\tilde l +\tilde r}{2} \right\}
\] 

Under $\bar \nu^{(1)}_{\epsilon}$, the change point $i^*+1$ is at position $\tilde l + e$. Moreover, it holds that $\tilde l + e +\eta< \frac{\tilde l + \tilde r}{2}$.  Then, by the $(\delta,\eta,m)$-correctness of $\pi$, it holds that
\begin{equation}
\mathbb{P}_{\epsilon}^{(1)}\left(\frac{\hat{x}_{i^*}+\hat{x}_{i^*+1}}{2}>\frac{\tilde l +\tilde r}{2}\right) \leqslant \mathbb{P}_{\epsilon}^{(1)}\left(\hat{x}_{i^*+1}>(\tilde l + e) +\eta\right) 
 \leqslant \delta\enspace. 
\end{equation}
Similarly, under $\bar \nu^{(2)}_{\epsilon}$, the change point $i^*$ is at position $\tilde r - e$. Moreover, it holds that $\tilde r - e - \eta\geqslant \frac{\tilde l + \tilde r}{2}$. Then, by the $(\delta,\eta)$-correctness of $\pi$, it holds that
\begin{equation}
\mathbb{P}_{\epsilon}^{(2)}\left(\frac{\hat{x}_{i^*}+\hat{x}_{i^*+1}}{2}\leqslant\frac{\tilde l +\tilde r}{2}\right)  \leqslant \mathbb{P}_{\epsilon}^{(2)}\left(\hat{x}_{i^*}<(\tilde r - e) -\eta\right)  \leqslant \delta\enspace. 
\end{equation}

Overall, we have that, taking the limit $\epsilon\to 0$,
\begin{align*}
\mathbb{\bar P}_{\pi}(\xi) & = \lim_{\epsilon\to 0}\left\{ \mathbb{P}_{\epsilon}^{(1)}\left(\mathcal{T}_{\pi}\leqslant \chi,\dfrac{\hat{x}_{i^*}+\hat{x}_{i^*+1}}{2}>\dfrac{\tilde l +\tilde r}{2}\right) + \mathbb{P}_{\epsilon}^{(2)}\left(\mathcal{T}_{\pi}\leqslant \chi,\frac{\hat{x}_{i^*}+\hat{x}_{i^*+1}}{2}\leqslant\frac{\tilde l +\tilde r}{2}\right)\right\}\\
& \leqslant \lim_{\epsilon\to 0}\mathbb{P}_{\epsilon}^{(1)}\left(\frac{\hat{x}_{i^*}+\hat{x}_{i^*+1}}{2}>\frac{\tilde l +\tilde r}{2}\right) + \lim_{\epsilon\to 0}\mathbb{P}_{\epsilon}^{(2)}\left(\frac{\hat{x}_{i^*}+\hat{x}_{i^*+1}}{2}\leqslant\frac{\tilde l +\tilde r}{2}\right)\\
& \leqslant 2\delta\enspace, 
\end{align*}
which concludes the proof of Lemma~\ref{lem:TV_bound}.
\end{proof}

\section{Relating the Continuous and the Discrete Bandit Change Point Problem}\label{sec:continuousdiscrete}
Assume we have an algorithm that is able to localize $N$ change points in a $K$ armed bandit change point problem. This means that if we have $K$ arms with corresponding $1$-subGaussian distribution $\tilde\nu_k$ and mean $\tilde\mu_k$ for $k\in[K]$ and there is $\mathcal{C}=\{1\leq k_1^*<k_2^*<\dots<k_m^*<K\}$ such that $\tilde\mu_k\neq \tilde\mu_{k+1}$ for $k\in\mathcal{C}$, the algorithm is able to identify a subset of $\mathcal{C}$ with cardinality $m$.

When we are now dealing with a continuous bandit change point problem as introduced in Section~\ref{sec:setting}, we can still tackle it by using such an algorithm.

If we want to localize change points in $[0,1]$ with precision $\eta$, let us first build a discrete bandit, by considering $\tilde \nu_k=\nu_{(k-1)\eta}$ for $k=1,\dots,\lfloor1/\eta\rfloor+1$. If we now localize a change point $k^*$, it means that $\tilde \mu_{k^*}\neq \tilde\mu_{k^*+1}$ and therefore $\mu_{(k^*-1)\eta}\neq\mu_{k^*\eta}$. This implies, that in the continuous setting there must be a change point $x^*$ with $(k^*-1)\eta< x^*\leq k^*\eta$, and using $c=(k^*-1)\eta$ guarantees that $|x^*-c|\leq \eta$.

While we use this discretization throughout our experiments, we want to remark that conversely it is possible to transform a discrete change point identification problem for $K$ arms to a continuous problem by constructing a respective step function. Here, localizing change points with precision $\eta<1/2K$ leads to exact change point identification in the discrete case.

\newpage

\end{document}